\renewcommand{\xhdr}[1]{\vspace{2mm} \noindent{\bf #1}}
\newcommand{\iid}{i.i.d.\xspace}
\newcommand{\KL}[2]{\mD_{\mathtt{KL}}(#1 \,\|\, #2)} 
\newcommand{\B}{B_{\epsilon}}
\newcommand{\term}[1]{\ensuremath{\mathtt{#1}}\xspace}
\newcommand{\SemiBwK}{\term{SemiBwK}}
\newcommand{\BwK}{\term{BwK}}
\newcommand{\AD}{\term{linCBwK}}
\newcommand{\MB}{\term{OMM}}
\newcommand{\ChernoffC}{c_{\term{conf}}}
\newcommand{\semiBwKUCB}{\ensuremath{\algorithm}}
\newcommand{\pdBwK}{\term{pdBwK}}
\newcommand{\rew}{\term{rew}}
\newcommand{\regret}{\term{Regret}}
\newcommand{\StopTime}{\ensuremath{\tau_{\mathtt{stop}}}}
\newcommand{\BigD}{\ensuremath{\mD_{\mathtt{M}}}}
\newcommand{\T}{\ensuremath{\tau}}
\renewcommand{\vec}[1]{\boldsymbol{#1}}
\newcommand{\LPOptAlg}[1][]{\OPT_{\mathtt{ALG},\,#1}}
\newcommand{\LPOptAtoms}{\OPT_{\mathtt{atoms}}}
\newcommand{\LPOptSubsets}{\OPT_{\mathtt{subsets}}}
\newcommand{\LPOptDistr}{\OPT_{\mathtt{BwK}}}
\def\blfootnote{\xdef\@thefnmark{}\@footnotetext}
\title{Combinatorial Semi-Bandits with Knapsacks\thanks{Extended abstract appears in the 21st International Conference on Artificial Intelligence and Statistics (\emph{AIStats 2018}).}}
\author{Karthik A. Sankararaman\thanks{\textbf{Email: }\texttt{kabinav@cs.umd.edu}Supported in part by NSF Awards CNS 1010789 and CCF 1422569.}}
\affil{University of Maryland, College Park}
\author{Aleksandrs Slivkins\thanks{\textbf{Email: } \texttt{slivkins@microsoft.com}}}
\affil{Microsoft Research NYC}
\date{}
\begin{document}
\maketitle

\begin{abstract}
We unify two prominent lines of work on multi-armed bandits: \emph{bandits with knapsacks} and \emph{combinatorial semi-bandits}. The former concerns limited ``resources" consumed by the algorithm, \eg limited supply in dynamic pricing. The latter allows a huge number of actions but assumes combinatorial structure and additional feedback to make the problem tractable. We define a common generalization, support it with several motivating examples, and design an algorithm for it. Our regret bounds are comparable with those for \BwK and combinatorial semi-bandits.
\end{abstract}

\section{Introduction}
Multi-armed bandits (\emph{MAB}) is an elegant model for studying the tradeoff between acquisition and usage of information, a.k.a.\ \emph{explore-exploit tradeoff} \citep{Robbins1952, Thompson-1933}. In each round an algorithm sequentially chooses from a fixed set of alternatives (sometimes known as \emph{actions} or \emph{arms}), and receives reward for the chosen action. Crucially, the algorithm does not have enough information to answer all ``counterfactual" questions about what would have happened if a different action was chosen in this round. MAB problems have been studied steadily since 1930-ies, with a huge surge of interest in the last decade.


This paper combines two lines of work related to bandits: on \emph{bandits with knapsacks} (\BwK)~\citep{BwK-focs13} and on \emph{combinatorial semi-bandits}~\citep{Gyorgy-jmlr07}. \BwK concern scenarios with limited ``resources" consumed by the algorithm, \eg limited inventory in a dynamic pricing problem. In combinatorial semi-bandits, actions correspond to subsets of some ``ground set", rewards are additive across the elements of this ground set (\emph{atoms}), and the reward for each chosen atom is revealed (\emph{semi-bandit feedback}). A paradigmatic example is an online routing problem, where atoms are edges in a graph, and actions are paths. Both lines of work have received much recent attention and are supported by numerous examples.

\xhdr{Our contributions.}
We define a common generalization of combinatorial semi-bandits and \BwK, termed \emph{Combinatorial Semi-Bandits with Knapsacks} (\SemiBwK). Following all prior work on \BwK, we focus on an \iid environment: in each round, the ``outcome" is drawn independently from a fixed distribution over the possible outcomes. Here the ``outcome" of a round is the matrix of reward and resource consumption for all atoms.%
\footnote{Our model allows arbitrary correlations within a given round, both across rewards and consumption for the same atom and across multiple atoms. Such correlations are essential in applications such as dynamic pricing and dynamic assortment. \Eg customers' valuations can be correlated across products, and algorithm earns only if it sells; see Section~\ref{sec:applications} for details.}
We design an algorithm for \SemiBwK, achieving regret rates that are comparable with those for \BwK and combinatorial semi-bandits.

Specifics are as follows. As usual, we assume ``bounded outcomes": for each atom and each round, rewards and consumption of each resource is non-negative and at most $1$. Regret is relative to the expected total reward of the best all-knowing policy, denoted $\OPT$. For \BwK problems, this is known to be a much stronger benchmark than the traditional best-fixed-arm benchmark. We upper-bound the regret in terms of the relevant parameters: time horizon $T$, (smallest) budget $B$, number of atoms $n$, and $\OPT$ itself (which may be as large as $nT$). We obtain
\begin{align}\label{eq:main-regret}
\regret \leq \tilde{O}(\sqrt{n})( \OPT /\sqrt{B} + \sqrt{T+\OPT}).
\end{align}
The ``shape" of the regret bound is consistent with prior work: the $\OPT /\sqrt{B}$ additive term appears in the optimal regret bound for \BwK, and the $\sqrt{T}$ and $\sqrt{\OPT}$ additive terms are very common in regret bounds for MAB.  The per-round running time is polynomial in $n$, and near-linear in $n$ for some important special cases.

\asedit{Our regret bound is optimal up to $\polylog$ factors for paradigmatic special cases. \BwK is a special case when actions are atoms. For $\OPT>\Omega(T)$, the regret bound becomes
    $\tilde{O}( T \sqrt{n/B} + \sqrt{nT})$,
where $n$ is the number of actions, which coincides with the
lower bound from \citep{BwK-focs13}. Combinatorial semi-bandits is a special case with $B=nT$. If all feasible subsets contain at most $k$ atoms, we have $\OPT\leq kT$, and the regret bound becomes
    $\tilde{O}(\sqrt{knT})$. This coincides with the  $\Omega(\sqrt{knT})$ lower bound from~\citep{matroidBandit}.}

Our main result assumes that the action set, \ie the family of feasible subsets of atoms, is described by a \emph{matroid constraint}.%
\footnote{Matroid is a standard notion in combinatorial optimization which abstracts and generalizes linear independence.}
This is a rather general scenario which includes many paradigmatic special cases of combinatorial semi-bandits such as  cardinality constraints, partition matroid constraints, and spanning tree constraints. We also assume that
    $B> \tilde{\Omega}(n+\sqrt{nT})$.


\asedit{Our model captures several application scenarios, incl. dynamic pricing, dynamic assortment, repeated auctions, and repeated bidding. We work out these applications, and explain how our regret bounds improve over prior work.}

\OMIT{\kaedit{
\xhdr{Notable applications.} Our general model captures multiple important applications (from economics, crowdsourcing, auctions, etc.) and generalizes those captured by \BwK and the related models. We now briefly introduce one of them here and defer the rest to Section~\ref{sec:applications}. The application we consider is the dynamic pricing problem with multiple goods \cite{BZ09}. We have $d$ different products on sale with $B$ copies of each product. In each time-step an agent arrives with different valuations for each of the products (these valuations may be correlated). The algorithm simultaneously sets prices for the products (from a discrete set of price points). The agent then buys all items where the valuation is at least the price for that product. The profit the algorithm receives for this time-step is the price of all items bought by this agent. The goal is to maximize the total profits. This naturally fits into our framework, where we have an atom for every item and a price point. in each step, the algorithm is allowed to choose at most one atom per product (which would correspond to the price for that product) and the goal is to select the ``best subset''. This can naturally be captured by a combinatorial constraint such as a partition matroid \citep{papadimitriou1982combinatorial}. The goal is to maximize the total reward subject to the budget constraints.
}}

\xhdr{Challenges and techniques.} \BwK problems are challenging compared to traditional MAB problems with \iid rewards because it no longer suffices to look for the best action and/or optimize expected per-round rewards; instead, one essentially needs to look for a \emph{distribution} over actions with optimal expected \emph{total} reward across all rounds. Generic challenges in combinatorial semi-bandits concern handling exponentially many actions (both in terms of regret and in terms of the running time), and taking advantage of the additional feedback. And in \SemiBwK, one needs to deal with distributions over subsets of atoms, rather than ``just" with distributions over actions.

\asedit{Our algorithm connects a technique from \BwK and a randomized rounding technique from combinatorial optimization. (With \emph{five} existing \BwK algorithms and a wealth of approaches for combinatorial optimization, choosing the techniques is a part of the challenge.)}

We build on a \BwK algorithm from \citet{AgrawalDevanur-ec14}, which combines linear relaxations and a well-known "optimism-under-uncertainty" paradigm. A generalization of this algorithm to \SemiBwK results in a fractional solution $\vec{x}$, a vector over atoms. \asedit{Randomized rounding converts $\vec{x}$ into a distribution over feasible subsets of atoms that equals $\vec{x}$ in expectation. It is crucial (and challenging) to ensure that this distribution contains enough randomness so as to admit} concentration bounds not only across rounds, but also across atoms. Our analysis "opens up" a fairly technical proof from prior work and intertwines it with a new argument based on negative correlation.

We present our algorithm and analysis so as to "plug in" any suitable randomized rounding technique. This makes our presentation more lucid, and also leads to faster running times for important special cases.

\xhdr{Solving \SemiBwK using prior work.} Solving \SemiBwK using an algorithm for \BwK would result in a regret bound like \eqref{eq:main-regret} with $n$ replaced with the number of actions. The latter could be on the order of $n^k$ if each action can consist of at most $k$ atoms, or perhaps even exponential in $n$.

\SemiBwK can be solved as a special case of a much more general  \emph{linear-contextual} extension of \BwK from \citet{AgrawalDevanur-ec14,CBwK-nips16}. In their model, an algorithm takes advantage of the combinatorial structure of actions, yet it ignores the additional feedback from the atoms. Their regret bounds have a worse dependence on the parameters, and apply for a much more limited range of parameters. Further, their per-round running time is linear in the number of actions, which is often prohibitively large.

To compare the regret bounds, let us focus on instances of \SemiBwK in which at most one unit of each resource is consumed in each round. (This is the case in all our motivating applications, except repeated bidding.) Then \citet{AgrawalDevanur-ec14,CBwK-nips16} assume $B>\sqrt{n}\, T^{3/4}$, and achieve regret
    $\tilde{O}(n \sqrt{T}\tfrac{\OPT}{B}+ n^2 \sqrt{T})$.
\footnote{\citet{AgrawalDevanur-ec14,CBwK-nips16} state regret bound with term $+n\sqrt{T}$ rather than $+n^2 \sqrt{T}$, but they assume that per-round rewards lie in $[0,1]$. Since per-round rewards can be as large as $n$ in our setting, we need to scale down all rewards by a factor of $n$, apply their regret bound, and then scale back, which results in the regret bound with $+n^2 \sqrt{T}$. When per-round consumption can be as large as $n$, regret bound from \citet{AgrawalDevanur-ec14,CBwK-nips16} becomes
    $\tilde{O}(n^2\OPT \sqrt{T}/B+  n^2\sqrt{T})$
due to rescaling.}
It is easy to see that we improve upon the range and upon both summands. In particular, we improve both summands  by the factor of $n\sqrt{n}$ in a lucid special case when $B>\Omega(T)$ and $\OPT<O(T)$.%
\footnote{\asedit{In prior work on combinatorial bandits (without constraints), semi-bandit feedback improves regret bound by a factor of $\sqrt{n}$, see the discussion in \cite{tightRegret}.}}

We run simulations to compare our algorithm against prior work on \BwK and combinatorial semi-bandits.

\OMIT{\kaedit{Note similar improvements have been observed in the combinatorial semi-bandits without knapsack settings (see the discussion in \cite{tightRegret}) where assuming semi-bandit feedback helps in removing a factor of $\sqrt{K}$ (where $K$ is the maximum size of a feasible subset) from the regret. Additionally, the initial exploration time needed for prior work is $\binom{n}{k}$ due to the increased number of ``arms'' in the reduction.}}

\xhdr{Related work.}
Multi-armed bandits have been studied since \cite{Thompson-1933} in Operations Research, Economics, and several branches of Computer Science, see \citep{Gittins-book11,Bubeck-survey12} for background. Among broad directions in MAB, most relevant is MAB with \iid rewards, starting from \citep{Lai-Robbins-85,bandits-ucb1}.

Bandits with Knapsacks (\BwK) were first introduced by \citet{BwK-focs13} as a common generalization of several models from prior work and many other motivating examples. Subsequent papers extended \BwK to  ``smoother" resource constraints and introduced several new algorithms \citep{AgrawalDevanur-ec14}, and generalized \BwK to contextual bandits \citep{cBwK-colt14,CBwK-colt16,CBwK-nips16}.  All prior work on \BwK and special cases thereof assumed i.i.d. outcomes.

Special cases of \BwK include dynamic pricing with limited supply \citep{DynPricing-ec12,BZ09,BesbesZeevi-or12,Wang-OR14}, dynamic procurement on a budget \citep{DynProcurement-ec12,Krause-www13,Crowdsourcing-PositionPaper13}, dynamic ad allocation with advertiser budgets \citep{AdsWithBudgets-arxiv13}, and bandits with a single deterministic resource \citep{GuhaM-icalp09,GuptaKMR-focs11,TranThanh-aaai10,TranThanh-aaai12}.
Some special cases admit instance-dependent logarithmic regret bounds \citep{XiaIJCAI,xia2016budgeted,combes2015bandits,AdsWithBudgets-arxiv13} when there is only one bounded resource and unbounded time, or when resource constraints do not bind across arms.


Combinatorial semi-bandits were studied by \citet{Gyorgy-jmlr07}, in the adversarial setting. In the i.i.d. setting, in a series of works by \citep{anantharam1987asymptotically,gai2010learning,gai2012,chen13a,tightRegret,combes2015combinatorial}, an optimal algorithm was achieved. This result was then extended to atoms with linear rewards by \cite{largeScale}. \cite{matroidBandit} obtained improved results for the special case when action set is described by a matroid. Some other works studied a closely related ``cascade model", where the ordering of atoms matters
\citep{cascadingicml15,DCMBandits,cascadingUAI16}. Contextual semi-bandits have been studied in \citep{largeScale,semi-CB-nips16}.

Randomized rounding schemes (RRS) come from the literature on approximation algorithms in combinatorial optimization (see \cite{williamson2011design,papadimitriou1982combinatorial} for background).
RRS were introduced in \cite{raghavan1987randomized}.
Subsequent work \citep{gandhi2006dependent,asadpour2010log,chekuri2010dependent,chekuri2011multi} developed RRS which correlate the rounded random variables so as to guarantee  sharp concentration bounds.

\xhdr{Discussion.}
\asedit{The basic model of multi-armed bandits can be extended in many distinct directions: what auxiliary information, if any, is revealed to the algorithm before it needs to make a decision, which feedback is revealed afterwards, which ``process" are the rewards coming from, do they have some known structure that can be leveraged, are there global constraints on the algorithm, etc. While many real-life scenarios combine several directions, most existing work proceeds along only one or two. We believe it is important (and often quite challenging) to unify these lines of work. For example, an important recent result of \citet{Syrgkanis-AdvCB-icml16,Rakhlin-AdvCB-icml16} combined ``contextual" and ``adversarial" bandits.}

\OMIT{ 
\xhdr{Open questions.}
One question concerns an extension to linear rewards, in line with a large literature on linear bandits. Here, each atom is characterized by a known low-dimensional feature vector, and its reward and resource consumption is a linear function of the features. As in \citep{largeScale}, the goal is to alleviate the dependence on number of atoms by replacing it with the dependence on number of features.
} 

\OMIT{If the feature vectors change from one round to another, this corresponds to contextual bandits, or rather a common generalization of contextual bandits, semi-bandits, and \BwK. It is worth noting that the existing work
(\citep{semi-CB-nips16,cBwK-colt14,CBwK-colt16,CBwK-nips16} and this paper)  has combined \emph{any two} of the three. }

\section{Our model and preliminaries}
\label{sec:prelims}

Our model, called \emph{Semi-Bandits with Knapsacks} (\SemiBwK) is a generalization of multi-armed bandits (henceforth, \emph{MAB}) with \iid rewards. As such, in each round $t=1 \LDOTS T$, an algorithm chooses an action $S_t$ from a fixed set of actions $\mF$, and receives a reward $\mu_t(S_t)$ for this action which is drawn independently from a fixed distribution that depends only on the chosen action. The number of rounds $T$, a.k.a. the \emph{time horizon}, is known.

There are $d$ resources being consumed by the algorithm. The algorithm starts out with budget $B_j\geq 0$ of each resource $j$. All budgets  are known to the algorithm. If in round $t$ action $S\in \mF$ is chosen, the outcome of this round is not only the reward $\mu_t(S)$ but the consumption $C_t(S,j)$ of each resource $j\in[d]$. We refer to
    $\vec{C}_t(S) = \left( C_t(S,j):\; j\in[d] \right)$
as the \emph{consumption vector}.%
\footnote{We use bold font to indicate vectors and matrices.}
Following prior work on \BwK, we assume that all budgets are the same: $B_j = B$ for all resources $j$.%
\footnote{This is w.l.o.g. because we can divide all consumption of each resource $j$ by $B_j/\min_{j' \in [d]} B_{j'}$.
Effectively, $B$ is the smallest budget in the original problem instance.}
Algorithm stops as soon as any one of the resources goes strictly below 0. The round in which this happens is called the stopping time and denoted \StopTime.  The reward collected in this last round does not count; so the total reward of the algorithm is
    $ \textstyle \rew = \sum_{t<\StopTime} \; \mu_t(S_t) $.

Actions correspond to subsets of a finite ground set $\mA$, with $n=|\mA|$; we refer to elements of $\mA$ as \emph{atoms}. Thus, the set $\mF$ of actions corresponds to the family of ``feasible subsets" of $\mA$. The rewards and resource consumption is additive over the atoms: for each round $t$ and each atom $a$ there is a reward $\mu_t(a)\in [0,1]$ and consumption vector $\vec{C}_t(a)\in [0,1]^d$ such that for each action $S\subset \mF$ it holds that
    $\mu_t(S) = \sum_{a\in S} \mu_t(a)$
and
    $\vec{C}_t(S) = \sum_{a\in S} \vec{C}_t(a)$.

We assume the \iid property across rounds, but allow arbitrary correlations within the same round. Formally, for a given round $t$ we consider the $n\times (d+1)$ ``outcome matrix"
    $(\mu_t(a), \vec{C}_t(a): a\in \mA)$,
which specifies rewards and resource consumption for all resources and all atoms. We assume that the outcome matrix is chosen independently from a fixed distribution $\BigD$ over such matrices. The distribution $\BigD$ is not revealed to the algorithm. The mean rewards and mean consumption is denoted
    $\mu(a) := \E[\mu_t(a)]$ and $\vec{C}(a) := \E[\vec{C}_t(a)] $.
We extend the notation to actions, \ie to subsets of atoms:
    $\mu(S) := \sum_{a\in S} \mu(a)$
and $\vec{C}(S) := \sum_{a\in S} \vec{C}(a)$.

An instance of \SemiBwK consists of the action set $\mF\subset 2^{[n]}$, the budgets $\vec{B} = (B_j:\; j\in [d])$,  and the distribution $\BigD$. The $\mF$ and $\vec{B}$ are known to the algorithm, and $\BigD$ is not. As explained in the introduction, \SemiBwK subsumes  \emph{Bandits with Knapsacks} (\BwK) and semi-bandits. \BwK is the special case when $\mF$ consists of singletons, and semi-bandits is the special case when all budgets are equal to $B_j = nT$ (so that the resource consumption is irrelevant).

Following the prior work on \BwK, we compete against the ``optimal all-knowing algorithm": an algorithm that optimizes the expected total reward for a given problem instance; its expected total reward is denoted by $\OPT$. As observed in \cite{BwK-focs13}, $\OPT$ can be much larger (\eg factor of 2 larger) than the expected cumulative reward of the best action, for a variety of important special cases of \BwK. Our goal is to minimize \emph{regret}, defined as $\OPT$ minus algorithm's total reward.

\OMIT{We strive to upper-bound regret in terms of the key parameters: the time horizon $T$, the optimal value $\OPT$, the number of atoms $n$ and the smallest budget $\min_{j\in [d]} B_j$. We may also factor in the smallest size of a feasible subset: $k:= \min_{S\in\mF} |S|$, particularly when $k \ll n$.}


\xhdr{Combinatorial constraints.}
Action set $\mF$ is given by a \emph{combinatorial constraint}, \ie a family of subsets. Treating subsets of atoms as $n$-dimensional binary vectors, $\mF$ corresponds to a finite set of points in $\R^n$. We assume that the convex hull of $\mF$ forms a polytope in $\R^n$. In other words, there exists a set of linear constraints over $\R^n$ whose set of feasible \emph{integral} solutions is \mF. We call such $\mF$ \emph{linearizable}; the convex hull is called the polytope \emph{induced} by $\mF$.

Our main result is for \emph{matroid constraints}, a family of  linearizable combinatorial constraints which subsumes several important special cases such as cardinality constraints, partition matroid constraints, spanning tree constraints and transversal constraints. \asedit{Formally, $\mF$ is a matroid if it contains the empty set, and satisfies two properties: (i) if $\mF$ contains a subset $S$, then it also contains every subset of $S$, and (ii) for any two subsets $S,S'\in \mF$ with $|S|>|S'|$ it holds that $S'\cup \{a\} \in \mF$ for each atom $a\in S\setminus S'$. See Appendix~\ref{appx:matroid} for more background and examples.}

\OMIT{
The convex hull of $\mF$, a.k.a. the \emph{matroid polytope}, can be represented via the following linear system:
\begin{equation*}
\begin{array}{ll@{}ll}
\tag{LP-Matroid}
\label{pt:matroid}
x(S) \leq \text{rank}(S) & \forall S \subseteq E \\
x_e \in [0,1]^E & \forall e \in E.
\end{array}
\end{equation*}

\noindent Here
    $x(S) := \sum_{e \in S} x_e$,
and
    $\text{rank}(S) = \max \{ |Y| : Y \subseteq S, Y \in \mF\}$
is the ``rank function" for $\mF$.

$\mF$ is indeed the set of all feasible integral solutions of the above system. This is a standard fact in combinatorial optimization, \eg see Theorem 40.2 and its corollaries in \citet{schrijver2002combinatorial}.
}

We incorporate prior work on randomized rounding for linear programs. Consider a linearizable action set $\mF$ with induced polytope $P\subset [0,1]^n$. The \emph{randomized rounding scheme} (henceforth, $\RRS$) for $\mF$ is an algorithm which inputs a feasible fractional solution $\vec{x}\in P$  and the linear equations describing $P$, and produces a random vector $\vec{Y}$ over $\mF$. We consider $\RRS$'s such that
    $\E[\vec{Y}] = \vec{x}$
and $\vec{Y}$ is negatively correlated (see below for definition); we call such $\RRS$'s \emph{negatively correlated}. Several such $\RRS$ are known: \eg
for cardinality constraints and bipartite matching \citep{gandhi2006dependent},
for spanning trees \citep{asadpour2010log}, and for matroids \citep{chekuri2010dependent}.




\xhdr{Negative correlation.}
Let $\mX = (X_1, X_2, \ldots, X_m)$ denote a family of random variables which take values in $[0,1]$. Let $X := \frac{1}{m} \sum_{i=1}^m X_i$ be the average, and $\mu := \E[X]$.

Family $\mX$ is called \emph{negatively correlated} if
\begin{align}
	\E\left[ \prod_{i \in S} X_i \right]
    \leq \prod_{i \in S}\E[X_i] \quad \forall S \subseteq [m]
    \label{eq:neg-cor-defn}\\
	\E\left[ \prod_{i \in S} (1-X_i) \right]
    \leq \prod_{i \in S}\E[1-X_i] \; \forall S \subseteq [m]
    \label{eq:neg-cor-defn-2}
\end{align}


\asedit{Independent random variables satisfy both properties with equality. For intuition: if $X_1,X_2$ are Bernoulli and \eqref{eq:neg-cor-defn} is strict, then $X_1$ is more likely to be $0$ if $X_2=1$.

Negative correlation is a generalization of independence that allows for similar \emph{concentration bounds}, \ie high-probability upper bounds on $|X-\mu|$. However, our analysis does not invoke them directly. Instead, we use a concentration bound given a closely related property:}
\begin{align}\label{eq:neg-corr-half}
\E\left[ \prod_{i \in S}\;X_i \right] \leq (\tfrac12)^{|S|} \quad \forall S \subseteq [m].
\end{align}

\begin{theorem}\label{thm:prelim:additive}
If \eqref{eq:neg-corr-half}, then for some absolute constant $c$,
\begin{align}\label{eq:thm:prelim:additive}
    \Pr[X \geq \tfrac12 + \eta] \leq c \cdot e^{-2m\eta^2}
    \qquad (\forall \eta>0)
\end{align}
\end{theorem}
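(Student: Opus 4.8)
The plan is to run a standard Chernoff argument, but to feed in the hypothesis \eqref{eq:neg-corr-half} through a linearization step rather than through the usual product-of-MGFs factorization, which is unavailable here since we control only the joint moments $\E[\prod_{i\in S} X_i]$ and not the moment generating function directly. Write $a := (\tfrac12+\eta)m$, so that $\Pr[X\ge\tfrac12+\eta]=\Pr[\sum_i X_i\ge a]$, and fix a parameter $\lambda>0$ to be chosen later. Markov's inequality applied to $\exp(\lambda\sum_i X_i)$ gives
\[
\Pr\left[\textstyle\sum_i X_i \ge a\right] \le e^{-\lambda a}\, \E\left[\exp\left(\lambda \textstyle\sum_i X_i\right)\right].
\]

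First I would bound the moment generating function using the hypothesis. Since each $X_i\in[0,1]$ and $x\mapsto e^{\lambda x}$ is convex, on $[0,1]$ it lies below its chord, so $e^{\lambda x}\le 1+(e^\lambda-1)x$. Hence $\prod_i e^{\lambda X_i}\le \prod_i\bigl(1+(e^\lambda-1)X_i\bigr)$. Expanding this product over subsets and using that $e^\lambda-1>0$ for $\lambda>0$ (so all coefficients are nonnegative and the inequality survives taking expectations), I obtain
\[
\E\left[\prod_i e^{\lambda X_i}\right] \le \sum_{S\subseteq[m]} (e^\lambda-1)^{|S|}\,\E\left[\prod_{i\in S}X_i\right] \le \sum_{S\subseteq[m]} \left(\tfrac{e^\lambda-1}{2}\right)^{|S|},
\]
where the final step is exactly \eqref{eq:neg-corr-half}. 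The key point is that linearization converts the MGF into a polynomial whose monomials are precisely the joint moments that the hypothesis controls.

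The remaining steps are routine. Summing over subsets by the binomial theorem gives $\E[\exp(\lambda\sum_i X_i)] \le \bigl(\tfrac{1+e^\lambda}{2}\bigr)^m$, which is exactly the MGF of a sum of $m$ independent $\mathrm{Bernoulli}(\tfrac12)$ variables; thus the hypothesis has reduced the problem to the classical symmetric-Bernoulli Chernoff bound. I would then apply Hoeffding's lemma in the form $\tfrac{1+e^\lambda}{2}\le \exp\bigl(\tfrac{\lambda}{2}+\tfrac{\lambda^2}{8}\bigr)$, so that after cancelling the $e^{-\lambda a}$ factor,
\[
\Pr\left[\textstyle\sum_i X_i\ge a\right]\le \exp\left(-\lambda\eta m+\tfrac{\lambda^2 m}{8}\right),
\]
and optimize by taking $\lambda=4\eta$, which yields the stated bound $e^{-2m\eta^2}$. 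In fact this already gives the conclusion with $c=1$; the constant $c$ in the statement merely leaves room for slack.

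The only genuinely substantive step is the second one: recognizing that \eqref{eq:neg-corr-half} is tailored to be applied after linearizing $e^{\lambda x}$, since the expanded product $\prod_i\bigl(1+(e^\lambda-1)X_i\bigr)$ has exactly the joint moments $\E[\prod_{i\in S}X_i]$ as its terms, each with a nonnegative coefficient when $\lambda>0$. Everything downstream is the textbook symmetric-Bernoulli Chernoff computation, so I do not anticipate any obstacle there.
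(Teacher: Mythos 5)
Your proof is correct, and it takes a genuinely different route from the paper's. The paper proves this theorem essentially by citation: it invokes Theorem 3.3 of \citet{IKChernoff}, which under hypothesis \eqref{eq:neg-corr-half} yields the Kullback--Leibler form $\Pr[X \geq \tfrac12+\eta] \leq c\cdot e^{-m\,\KL{\tfrac12+\eta}{\tfrac12}}$, and then applies Pinsker's inequality $\KL{\tfrac12+\eta}{\tfrac12} \geq 2\eta^2$ to obtain \eqref{eq:thm:prelim:additive}. You instead give a self-contained exponential-moment argument: the linearization $e^{\lambda x} \leq 1+(e^\lambda-1)x$ on $[0,1]$ is exactly the device (familiar from Panconesi--Srinivasan-style Chernoff bounds under negative dependence) that converts the MGF into the polynomial $\prod_i\bigl(1+(e^\lambda-1)X_i\bigr)$ whose monomials are precisely the joint moments controlled by \eqref{eq:neg-corr-half}; since each coefficient $(e^\lambda-1)^{|S|}$ is nonnegative for $\lambda>0$, the hypothesis applies term by term, and the binomial theorem collapses the sum to $\bigl(\tfrac{1+e^\lambda}{2}\bigr)^m$, the MGF of $m$ independent Bernoulli$(\tfrac12)$ variables. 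Your finish (Hoeffding's lemma and the choice $\lambda=4\eta$) is the textbook computation and is error-free. What your route buys is independence from the external reference and an explicit constant: the bound holds with $c=1$. What the paper's route buys is brevity and a sharper intermediate exponent, since $\KL{\tfrac12+\eta}{\tfrac12}$ dominates $2\eta^2$ and the gap is substantial for large $\eta$; but after Pinsker both arguments land on the identical statement, so either proof serves the paper's purposes equally well.
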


This theorem easily follows from \citep{IKChernoff}, see Appendix~\ref{app:prob}.

\xhdr{Confidence radius.}
We bound deviations $|X-\mu|$ in a way that gets sharper when $\mu$ is small, without knowing $\mu$ in advance. (We use the notation $\mX,X,\mu$ as above.) To this end, we use the notion of \emph{confidence radius} from \citep{LipschitzMAB-merged-arxiv,DynPricing-ec12,BwK-focs13,agrawal2014bandits}\footnote{For instance Theorem 2.1 in \citep{BwK-full}}:
\begin{align}\label{eq:conf-rad}
\rad_{\alpha}(x, m) = \sqrt{\alpha x/m} + \alpha/m.
\end{align}
\noindent If random variables $\mX$ are independent, then event
\begin{align}
\label{eq:conf-rad-prop}
|X - \mu| < \rad_{\alpha}(X, m) < 3 \rad_{\alpha}(\mu, m)
\end{align}
happens with probability at least $1- O( e^{-\Omega(\alpha)} )$, for any given $ \alpha>0$.
We use this notion to define upper/lower confidence bounds on the mean rewards and mean resource consumption. Fix round $t$, atom $a$, and resource $j$. Let $\hat{\mu}_t(a)$ and $\hat{C}_t(a, j)$ denote the empirical average of the rewards and resource-$j$ consumption, resp., between rounds $1$ and $t-1$. Let $N_t(a)$ be the number of times atom $a$ has been chosen in these rounds (\ie included in the chosen actions). The confidence bounds are defined as
\begin{align}
C_t^{\pm}(a, j)
    &= \term{proj}(\;\hat{C}(a, j) \pm \rad_\alpha(\hat{C}(a, j), N_t(a))\;)
    \nonumber\\
\mu_t^{\pm}(a)
    &= \term{proj}\left(\; \hat{\mu}(a) \pm \rad_\alpha(\hat{\mu}(a), N_t(a)) \;\right)\label{eq:confidencebounds}
\end{align}
where
    $\term{proj}(x) := \argmin_{y\in[0,1]} |y-x| $ denotes the projection into $[0,1]$.
We always use the same parameter $\alpha = \ChernoffC \,\log(ndT)$, for an appropriately chosen absolute constant $\ChernoffC$. We suppress $\alpha$ and $\ChernoffC$
from the notation. We use a vector notation
    $\vec{\mu}_t^{\pm}$ and $\vec{C}_t^{\pm}(j)$
to denote the corresponding $n$-dimensional vectors over all atoms $a$.

 By \eqref{eq:conf-rad-prop},
with probability $1-O(e^{-\Omega(\alpha)})$ the following hold.
\begin{align*}
\mu(a) &\in [\mu_t^-(a),\;\mu_t^+(a)]\\
C(a, j)&\in [C^-(a, j),\;C(a, j)^+]
\end{align*}

\OMIT{ 

We use a version of the Chernoff-Hoeffding bounds for negatively correlated random variables \citep{PanconesiSrinivasan}. More precisely, we use a version from \citep{IKChernoff} which focuses on a weaker condition:
\begin{align}\label{eq:neg-cor-weaker}
\textstyle \exists \nu_1 \LDOTS \nu_m \in [0,1] \qquad
\E\left[ \prod_{i \in S} X_i \right]
    \leq \prod_{i\in S} \nu_i \quad \forall S \subseteq [m].
\end{align}
Note that negatively correlated random variables satisfy \eqref{eq:neg-cor-weaker} with $\nu_i=\mu_i$ for all $i$.

\begin{theorem}
\label{thm:prelim:negChernoff}
Suppose family $\mX$ satisfies \eqref{eq:neg-cor-weaker} and let
    $\nu = \tfrac{1}{m} \sum_{i\in [m]} \nu_i$.
Then:
\begin{align}\label{eq:thm:prelim:negChernoff}
\exists c>0\qquad \forall \gamma\in[\nu, 1]\qquad
\Pr[X\geq \gamma] \leq c\cdot \exp(-n\; \KL{\gamma}{\nu}),
\end{align}
where $\KL{\cdot}{\cdot}$ denotes the KL-divergence. Whenever
    $\nu\leq \mu$, we have
\begin{OneLiners}
\item[(a)]
  $\Pr[|X - \mu| > \eps \mu] \leq c\; \exp(-\mu m \eps^2 / 3)$
  for any $\eps>0$.

\item[(b)]		$\Pr[X > a] \leq 2^{-am}$ for any $a > 6\mu$.
\end{OneLiners}
	\end{theorem}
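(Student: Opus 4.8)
The statement has three parts, and they are not independent: the KL-divergence tail bound (the first display) is the ``master'' inequality, and both the multiplicative bound (a) and the far-tail bound (b) are meant to drop out of it by substituting standard lower estimates for the KL divergence. So the plan is to prove the master inequality first and then harvest (a) and (b) as short corollaries. For the master inequality I would use the sampling-plus-Markov argument of \citet{IKChernoff}, which is designed precisely to turn a product hypothesis of the form $\E[\prod_{i\in S}X_i]\le\prod_{i\in S}\nu_i$ into a Chernoff-type bound \emph{without} ever passing through a moment generating function (we only control $\E[\prod X_i]$, not $\E[\prod e^{tX_i}]$, so the usual exponential-moment route is unavailable).

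\textbf{The master inequality.} Fix $q\in(0,1)$ and draw a random subset $T\subseteq[m]$, including each index independently with probability $q$, with $T$ independent of $\mX$. In the $\{0,1\}$ case the clean identity is $\E_T\!\left[\prod_{i\in T}X_i\,\middle|\,\mX\right]=(1-q)^{m(1-X)}$, an exact, \emph{increasing} function of $X$. Taking the expectation over $\mX$ first and invoking the product hypothesis coordinatewise, then AM--GM, gives $\E\!\left[\prod_{i\in T}X_i\right]\le\prod_i\bigl(1-q(1-\nu_i)\bigr)\le\bigl(1-q(1-\nu)\bigr)^m$. Applying Markov's inequality to the increasing statistic $(1-q)^{m(1-X)}$ yields
\begin{align*}
\Pr[X\ge\gamma]\ \le\ \frac{\bigl(1-q(1-\nu)\bigr)^m}{(1-q)^{m(1-\gamma)}}.
\end{align*}
I would then minimize over $q$; the minimizer is $q=\tfrac{\gamma-\nu}{\gamma(1-\nu)}\in(0,1)$ (using $\gamma\ge\nu$), and substituting it collapses the right-hand side to $\exp\bigl(-m\,\KL{\gamma}{\nu}\bigr)$, up to the absolute constant $c$. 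This is the master bound.

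\textbf{The main obstacle: genuinely $[0,1]$-valued atoms.} The hard part is that the identity above is exact only for $\{0,1\}$ variables; for real $X_i$ the quantity $\prod_{i\in T}X_i$ is not a function of $X$ alone, and Jensen only gives a bound in the \emph{wrong} direction. To repair this I would replace Bernoulli sampling by a uniform $k$-subset $T$ and argue through the elementary symmetric polynomial $e_k(\mX)=\binom mk\,\E_T\!\left[\prod_{i\in T}X_i\right]$. On one side, the hypothesis gives $\E[e_k(\mX)]\le e_k(\nu_1,\dots,\nu_m)$ and Maclaurin's inequality gives $e_k(\nu_1,\dots,\nu_m)\le\binom mk\nu^k$. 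On the other side, $e_k$ is coordinatewise increasing and Schur-concave, so on the event $\{X\ge\gamma\}$ its minimum over $\{x\in[0,1]^m:\sum_i x_i\ge m\gamma\}$ is attained at the boundary near-vertex with $\approx m\gamma$ coordinates equal to $1$, whence $e_k(\mX)\ge\binom{\lfloor m\gamma\rfloor}{k}$. Markov then gives $\Pr[X\ge\gamma]\le\binom mk\nu^k/\binom{m\gamma}{k}$, and optimizing over the integer $k\le m\gamma$ reproduces the exponent $-m\,\KL{\gamma}{\nu}$. Since the paper only needs the master bound as a black box, the pragmatic alternative is to simply cite the $[0,1]$-valued theorem of \citet{IKChernoff} here and spend the effort on the corollaries.

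\textbf{Deducing (a) and (b).} Both follow from elementary estimates of $\KL{\cdot}{\cdot}$. For the upper half of (a), the hypothesis $\nu\le\mu$ makes $\KL{(1+\eps)\mu}{\nu}\ge\KL{(1+\eps)\mu}{\mu}$, and the standard estimate $\KL{(1+\eps)\mu}{\mu}\ge\eps^2\mu/3$ (for $\eps\in(0,1]$; larger $\eps$ is handled by the far-tail estimate below) turns the master bound at $\gamma=(1+\eps)\mu$ into $\Pr[X\ge(1+\eps)\mu]\le c\,\exp(-\mu m\eps^2/3)$. The lower half $\Pr[X\le(1-\eps)\mu]$ I would get by applying the \emph{same} master inequality to the complementary family $(1-X_i)_i$, whose mean is $1-\mu$ and which satisfies an analogous product hypothesis whenever $\mX$ is negatively correlated; combining the symmetry $\KL{1-p}{1-q}=\KL{p}{q}$ with $\KL{(1-\eps)\mu}{\mu}\ge\eps^2\mu/2$ finishes (a). For (b), the bound $\KL{a}{\nu}\ge a\ln\tfrac{a}{e\nu}$ (from $\ln x\ge 1-1/x$) together with $a>6\mu\ge6\nu$, so that $\tfrac{a}{e\nu}>\tfrac6e>2$, gives $\exp(-m\,\KL{a}{\nu})<2^{-am}$, and the strict slack $6/e>2$ leaves room to absorb the constant $c$.
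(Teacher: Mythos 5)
Your proposal is correct in substance, and at the level of parts (a) and (b) it coincides with the paper's (very terse) treatment: the paper simply cites \citet{IKChernoff} for the master inequality \eqref{eq:thm:prelim:negChernoff} and then says that (a) follows by substituting $\gamma = \mu(1+\eps)$ and $\gamma=\mu(1-\eps)$, with (b) following from (a). Where you differ is that you actually reconstruct the proof of the master bound rather than citing it; your Bernoulli-subset-sampling-plus-Markov argument for the $\{0,1\}$ case, including the optimizer $q=\frac{\gamma-\nu}{\gamma(1-\nu)}$ collapsing the bound to $e^{-m\,\KL{\gamma}{\nu}}$, is precisely the Impagliazzo--Kabanets proof, so one level down it is the same mathematics. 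Your symmetric-polynomial/Maclaurin detour for genuinely $[0,1]$-valued atoms is a plausible repair, but as sketched the binomial-ratio estimates $\binom{m}{k}\nu^k/\binom{\lfloor m\gamma\rfloor}{k}$ lose polynomial-in-$m$ factors that the fixed constant $c$ in \eqref{eq:thm:prelim:negChernoff} cannot absorb; your fallback of citing the real-valued theorem of \citet{IKChernoff} is the pragmatic choice and is exactly what the paper does. You also derive (b) directly from $\KL{a}{\nu}\ge a\ln\frac{a}{e\nu}$ rather than from (a), which is cleaner than the paper's route (modulo the usual constant-chasing for small $am$, a sloppiness the paper shares since its (b) carries no constant).

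One point where you are \emph{more} careful than the paper, and rightly so: the master inequality bounds only the upper tail $\Pr[X\ge\gamma]$, so the paper's instruction to ``take $\gamma=\mu(1-\eps)$ for the lower tail'' does not typecheck as written. Your fix---apply the master bound to the complementary family $(1-X_i)_i$---is the correct one, and you correctly flag that this needs the complementary product hypothesis (property \eqref{eq:neg-cor-defn-2}, available for negatively correlated families) which is \emph{not} implied by the stated hypothesis \eqref{eq:neg-cor-weaker} alone. Note also a small simplification available to you: since \eqref{eq:neg-cor-weaker} applied to singleton sets gives $\mu_i\le\nu_i$ and hence $\mu\le\nu$, the standing assumption $\nu\le\mu$ in parts (a,b) actually forces $\nu=\mu$, which removes any worry about whether $\gamma'=1-(1-\eps)\mu$ exceeds $\nu'=1-\nu$ when invoking the complementary master bound.
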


Parts (a,b) of the theorem are the standard properties derived in Chernoff Bounds, except here they hold under a much more general condition: \eqref{eq:neg-cor-weaker} with $\nu\leq\mu$. To derive part (a), we take \eqref{eq:thm:prelim:negChernoff} with  $\gamma = \mu(1 + \eps)$ for upper tail and $\gamma = \mu(1-\eps)$ for the lower tail; part (b) follows from (a).

} 
\section{Main algorithm}


Let us define our main algorithm, called $\algorithm$. The algorithm builds on an arbitrary $\RRS$ for the action set $\mF$. It is parameterized by this $\RRS$, the polytope $\mP$ induced by $\mF$ (represented as a collection of linear constraints), and a number $\eps>0$. In each round $t$, it recomputes the upper/lower confidence bounds, as defined in \eqref{eq:confidencebounds}, and solves the following linear program:
	\begin{equation*}
	\begin{array}{ll@{}ll}
	\label{lp:template}
	\tag{$\LP_{\ALG}$}
	\text{maximize}  & \vec{\mu_t^+}\cdot\vec{x}& &\\
	\text{subject to}& \vec{C}_t^-(j)\cdot\vec{x} \leq \frac{B(1-\epsilon)}{T},
        &&j\in[d]\\
	& \vec{x} \in \mathcal{P} &  &
	\end{array}
	\end{equation*}
This linear program defines a linear relaxation of the original problem which is ``optimistic" in the sense that it uses upper confidence bounds for rewards and lower confidence bounds for consumption. The linear relaxation is also ``conservative" in the sense that it rescales the budget by $1-\eps$. Essentially, this is to ensure that the algorithm does not run out of budget with high probability. Parameter $\eps$ will be fixed throughout. For ease of notation, we will denote $\B := (1-\eps)B$ henceforth. The LP solution $\vec{x}$ can be seen as a probability vector over the atoms. Finally, the algorithm uses the $\RRS$ to convert the LP solution into a feasible action. The pseudocode is given as Algorithm~\ref{alg:ucbtemplate}.

\begin{algorithm2e}[!h]
\caption{$\algorithm$}
\label{alg:ucbtemplate}
\DontPrintSemicolon
\SetKwInOut{Input}{input}

\Input{an $\RRS$ for action set $\mF$, induced polytope $\mathcal{P}$ (as  a set of linear constraints), $\epsilon>0$.}

		\For{$t = 1, 2 \LDOTS T$}{
			\begin{enumerate}
			\item
			\textbf{Recompute Confidence Bounds} as in  \eqref{eq:confidencebounds} \;
\item \textbf{Obtain fractional solution $\vec{x}_t\in [0,1]^n$} by solving the linear program \ref{lp:template}.

\item \textbf{Obtain a feasible action $S_t\in\mF$} by invoking the $\RRS$ on vector $\vec{x}_t$.

\item \textbf{Semi-bandit Feedback}: observe the rewards/consumption for all atoms $a\in S_t$.
\end{enumerate}
		}
\end{algorithm2e}

If action set $\mF$ is described by a matroid constraint, we can use the negatively correlated $\RRS$ from \cite{chekuri2010dependent}. In particular, we obtain a complete algorithm for several combinatorial constraints commonly used in the literature on semi-bandits, such as partition matroid constraints, spanning trees. More background on matroid constraints can be found in the Appendix \ref{appx:matroid}.

\begin{theorem}
\label{mainresult}
Consider the \SemiBwK problem with a linearizable action set $\mF$ that admits a negatively correlated $\RRS$. Then algorithm $\algorithm$ with this $\RRS$ achieves expected regret bound at most
\begin{align} \label{eq:mainresult}
O(\log(ndT)) \;\sqrt{n}
    \left( \OPT /\sqrt{B} + \sqrt{T+\OPT}\right).
\end{align}
Here $T$ is the time horizon, $n$ is the number of atoms, and $B$ is the budget. We require
    $B > 3(\alpha n + \sqrt{\alpha nT})$,
where $\alpha = \Theta(\log(ndT))$ is the parameter in confidence radius. Parameter $\eps$ in the algorithm is set to
    $\sqrt{\frac{\alpha n}{B}} + \frac{\alpha n}{B} + \frac{\sqrt{\alpha nT}}{B}$.
\end{theorem}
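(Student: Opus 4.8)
The plan is to follow the ``optimism under uncertainty'' template for \BwK, augmented with a concentration argument tailored to the randomized rounding. First I would introduce the \emph{clean event} that all confidence bounds are simultaneously valid: by \eqref{eq:conf-rad-prop} and a union bound over the $T$ rounds, $n$ atoms and $d$ resources, the event that $\mu(a)\in[\mu_t^-(a),\mu_t^+(a)]$ and $C(a,j)\in[C_t^-(a,j),C_t^+(a,j)]$ for all $t,a,j$ holds with probability $1-O(e^{-\Omega(\alpha)})$, which for $\alpha=\Theta(\log(ndT))$ contributes only a lower-order additive term to the regret. On this event the program \eqref{lp:template} is a valid optimistic relaxation: since $\vec{\mu}_t^+\ge\vec{\mu}$ and $\vec{C}_t^-(j)\le\vec{C}(j)$ coordinatewise, the vector $(1-\eps)\vec{x}^\star$ is feasible for \eqref{lp:template}, where $\vec{x}^\star\in\mP$ is the optimum of the standard \BwK linear relaxation (whose value, scaled by $T$, upper-bounds $\OPT$). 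Hence $\vec{\mu}_t^+\cdot\vec{x}_t\ge(1-\eps)\,\OPT/T$ in every round, so $\sum_{t\le T}\vec{\mu}_t^+\cdot\vec{x}_t\ge(1-\eps)\OPT$.

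Next I would decompose the expected regret. Assuming for now that the algorithm survives all $T$ rounds with high probability (see below), linearity of expectation gives $\E[\rew]\ge\sum_{t\le T}\vec{\mu}\cdot\vec{x}_t$ minus a negligible survival-failure loss, since $\E[\mu_t(S_t)\mid\text{history}]=\vec{\mu}\cdot\vec{x}_t$. Writing $\vec{\mu}\cdot\vec{x}_t=\vec{\mu}_t^+\cdot\vec{x}_t-(\vec{\mu}_t^+-\vec{\mu})\cdot\vec{x}_t$ and invoking the optimism bound, the regret is controlled by three pieces: the rescaling loss $\eps\,\OPT$; the \emph{reward confidence sum} $\E[\sum_t(\vec{\mu}_t^+-\vec{\mu})\cdot\vec{x}_t]$; and the survival-failure loss. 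The key simplification is that for the \emph{expected} regret the confidence sum is clean: $\mu_t^+(a)-\mu(a)$, $\vec{x}_t$ and $N_t(a)$ are all measurable before round $t$ while $\E[\mathbf{1}[a\in S_t]\mid\text{history}]=x_{t,a}$, so by the tower property I may replace $x_{t,a}$ with the realized indicator $\mathbf{1}[a\in S_t]$ inside the expectation. The resulting sum telescopes pathwise via $N_t(a)=\sum_{s<t}\mathbf{1}[a\in S_s]$ and \eqref{eq:conf-rad-prop} to $\sum_t(\mu_t^+(a)-\mu(a))\mathbf{1}[a\in S_t]=O(\sqrt{\alpha\,\mu(a)\,N_T(a)}+\alpha\log T)$. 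Summing over atoms and applying Cauchy--Schwarz, $\sum_a\sqrt{\alpha\,\mu(a)\,N_T(a)}\le\sqrt{\alpha n}\,\sqrt{\sum_a\mu(a)N_T(a)}=\sqrt{\alpha n}\,\sqrt{\sum_t\mu(S_t)}$, and since the expected reward collected is at most $\OPT$ this yields a reward confidence sum of $\tilde{O}(\sqrt{n\,\OPT}+\alpha n)$.

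The main obstacle is the budget/survival analysis, which is a genuinely high-probability statement (the stopping rule is a hard threshold, so linearity of expectation does not suffice) and is where the negative correlation of the $\RRS$ becomes essential. I must show that with high probability no resource is exhausted before round $T$, i.e. $\sum_{t\le T}C_t(S_t,j)\le B$ for every $j$. On the clean event the conditional per-round mean satisfies $\sum_t\vec{C}(j)\cdot\vec{x}_t\le\B+(\text{consumption confidence sum})=(1-\eps)B+\tilde{O}(\sqrt{nB}+\alpha n)$, so it suffices to bound the martingale deviation $\sum_t\big(C_t(S_t,j)-\vec{C}(j)\cdot\vec{x}_t\big)$ by $\eps B$. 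The delicate point is the within-round sum $C_t(S_t,j)=\sum_{a\in S_t}C_t(a,j)$: controlling its fluctuation by Azuma alone would cost a prohibitive factor of the subset size. Instead I would exploit that the selection indicators $\{\mathbf{1}[a\in S_t]\}_a$ are negatively correlated, which keeps the conditional variance of $C_t(S_t,j)$ at the independent level, so that $\sum_t\mathrm{Var}\le\sum_t\|\vec{x}_t\|_1\le nT$ and a Bernstein/Freedman bound gives deviation $\tilde{O}(\sqrt{nT})$; the \eqref{eq:neg-corr-half} product bound is exactly what is needed to invoke Theorem~\ref{thm:prelim:additive} here. This is the step that ``opens up'' the prior \BwK concentration argument and intertwines it with a fresh negative-correlation argument across atoms, and I expect it to be the hardest part of the proof.

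Finally I would combine the pieces. The three deviation contributions are of orders $\tilde{O}(\sqrt{nB})$ (consumption confidence), $\tilde{O}(\sqrt{nT})$ (within-round realization deviation), and $\tilde{O}(\alpha n)$ (lower-order), matching exactly the three summands of $\eps B=\sqrt{\alpha nB}+\alpha n+\sqrt{\alpha nT}$ produced by the stated choice $\eps=\sqrt{\alpha n/B}+\alpha n/B+\sqrt{\alpha nT}/B$; this is what makes $\eps B$ large enough to guarantee survival. The same $\eps$ makes the rescaling loss $\eps\,\OPT=\tilde{O}(\sqrt{n}\,\OPT/\sqrt{B})$, whose leading summand is the $\OPT/\sqrt{B}$ term of \eqref{eq:mainresult}, while the reward confidence sum supplies the $\sqrt{n(T+\OPT)}$ term. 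The hypothesis $B>3(\alpha n+\sqrt{\alpha nT})$ ensures $\eps<1$ and that the residual cross-terms are absorbed, delivering the bound \eqref{eq:mainresult}.
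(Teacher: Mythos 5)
Your overall architecture matches the paper's: an optimistic LP whose per-round value dominates $(1-\eps)\OPT/T$, a reward confidence sum of order $\sqrt{\alpha n\,\OPT}+\alpha n$, and a high-probability budget-survival argument whose three deviation terms must be absorbed by $\eps B$. Your reward analysis is in fact a legitimate simplification of the paper's: since the theorem bounds \emph{expected} regret, your tower-property replacement of $x_t(a)$ by $\mathbf{1}[a\in S_t]$ lets you telescope confidence radii pathwise, avoiding the paper's high-probability ``clean event for rewards'' (Lemma~\ref{lemma:rewLPWarmup}), which needs Theorem~\ref{prelim:babioff} and the negative-correlation machinery even on the reward side.

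However, there is a genuine gap at the step you yourself flag as the crux. You claim that negative correlation of the selection indicators $\{Y_t(a)\}_{a}$ ``keeps the conditional variance of $C_t(S_t,j)$ at the independent level,'' so that $\sum_t \mathrm{Var} \le \sum_t \|\vec{x_t}\|_1 \le nT$. This is false in the paper's model, which explicitly allows arbitrary correlations of the consumptions $C_t(a,j)$ \emph{across atoms within a round}. By the law of total variance, the conditional variance of $C_t(S_t,j)$ splits as $\E\left[\mathrm{Var}\left(\sum_{a\in S_t} C_t(a,j) \mid S_t\right)\right] + \mathrm{Var}\left(\vec{C}(j)\cdot\vec{Y_t}\right)$; negative correlation of the $Y$'s controls only the second term. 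The first term is driven purely by outcome randomness: if all $C_t(a,j)$ equal one common Bernoulli$(p)$ variable and $\vec{x_t}$ is integral, it equals $|S_t|^2p(1-p)$, i.e.\ a factor $n$ above the ``independent level,'' so your Freedman bound of $\tilde{O}(\sqrt{nT})$ for $\sum_t (C_t(S_t,j)-\vec{C}(j)\cdot\vec{x_t})$ does not follow. The repair — and what the paper actually does in Lemma~\ref{clm:constLP} — is to split this deviation into an outcome-randomness part $\sum_t(\vec{C_t(j)}-\vec{C(j)})\cdot\vec{Y_t}$ (paper's \eqref{c:part1}, handled via Theorem~\ref{prelim:babioff}; equivalently a variance bound of $n$ times per-round expected consumption, giving scale $\sqrt{\alpha n B}$ rather than $\sqrt{nT}$) and an RRS-randomness part $\sum_t \vec{C_t^-(j)}\cdot(\vec{Y_t}-\vec{x_t})$ (paper's \eqref{c:part3}), which is the only place negative correlation legitimately enters. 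Relatedly, you cannot invoke Theorem~\ref{thm:prelim:additive} directly for that RRS part: it applies to a fixed family satisfying \eqref{eq:neg-corr-half}, whereas here $\vec{x_t}$ and the coefficients are adaptive and negative correlation holds only conditionally on history; the paper needs Claim~\ref{cl:prelims-negCor-transform} together with the martingale extension in Theorem~\ref{lemma:combining} precisely for this (a Freedman-type argument on the round-level martingale using pairwise negative covariances is a valid alternative, but only after the outcome randomness has been split off). Once these repairs are made, your accounting does close, since $\eps B$ contains both a $\sqrt{\alpha n B}$ and a $\sqrt{\alpha n T}$ term.
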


\begin{corollary}\label{maincor}
Consider the setting in Theorem~\ref{mainresult} and assume that the action set $\mF$ is defined by a matroid on the set of atoms. Then, using the negatively correlated $\RRS$ from \citep{chekuri2010dependent}, we obtain regret bound \eqref{eq:mainresult}.
\end{corollary}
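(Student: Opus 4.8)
The plan is to observe that Corollary~\ref{maincor} is a direct application of Theorem~\ref{mainresult}: once a matroid constraint is shown to fit the theorem's hypotheses, the regret bound \eqref{eq:mainresult} follows verbatim with no further analysis. The only content is therefore to verify two facts about a matroid action set $\mF$: (i) $\mF$ is linearizable, and (ii) the swap-rounding scheme of \citet{chekuri2010dependent} is a negatively correlated $\RRS$ for $\mF$ in the exact sense demanded by the theorem. For (i), recall that the convex hull of the indicator vectors of the independent sets of a matroid is the matroid polytope, cut out by the rank inequalities $x(S)\le\mathrm{rank}(S)$ over all $S\subseteq\mA$ together with $\vec{x}\in[0,1]^n$, whose integral points are precisely $\mF$ (a standard fact, e.g.\ \citet{schrijver2002combinatorial}). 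Hence $\mF$ is linearizable with induced polytope $\mP$ given by this system, and the input an $\RRS$ expects (a feasible $\vec{x}\in\mP$ together with the defining constraints) is exactly what a rank oracle supplies. All the substance is in (ii).

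For (ii) I would first recall the scheme. Given $\vec{x}\in\mP$, decompose it as a convex combination of bases (or independent sets), $\vec{x}=\sum_k\beta_k\mathbf{1}_{B_k}$ with $\beta_k>0$ and $\sum_k\beta_k=1$; this is computable in polynomial time via matroid-oracle calls. Swap rounding then processes the sets in sequence, repeatedly \emph{merging} the current set with the next $B_k$: a single merge of two bases $C_1,C_2$ with weights $w_1,w_2$ uses the exchange property to pick $i\in C_1\setminus C_2$, $j\in C_2\setminus C_1$ with $C_1-i+j$ and $C_2-j+i$ both bases, and moves one element across with probability proportional to the weights, shrinking $|C_1\triangle C_2|$ until the two sets coincide. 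The output $\vec{Y}$ is the indicator of the single surviving base. Marginal preservation follows by induction on merges: each merge preserves the weighted-average indicator in expectation, so $\E[\vec{Y}]=\sum_k\beta_k\mathbf{1}_{B_k}=\vec{x}$. If $\vec{x}$ is not spanning I would first lift it to a base of an augmented matroid and project the rounded base back onto $\mA$; projecting onto a subcollection of coordinates preserves both marginals and negative correlation.

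The heart of the proof, and the main obstacle, is showing that $\vec{Y}$ satisfies the two negative-correlation inequalities \eqref{eq:neg-cor-defn} and \eqref{eq:neg-cor-defn-2}. Here I would invoke the analysis of \citet{chekuri2010dependent}, whose central technical result is precisely that the indicators produced by swap rounding are negatively correlated: for every $S\subseteq\mA$ both $\E[\prod_{i\in S}Y_i]\le\prod_{i\in S}\E[Y_i]$ and $\E[\prod_{i\in S}(1-Y_i)]\le\prod_{i\in S}\E[1-Y_i]$. The delicate point is that negative correlation is not closed under arbitrary composition, so one cannot simply chain per-merge guarantees; their argument instead tracks the product $\E[\prod_{i\in S}Y_i]$ through the whole merge sequence, using that a single swap pushes two coordinates in opposite directions (one up, one down) and hence cannot increase any such product, and symmetrically for the complements $1-Y_i$. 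The step that needs care is confirming their statement is phrased for the products over arbitrary subsets in \eqref{eq:neg-cor-defn}–\eqref{eq:neg-cor-defn-2}, rather than only the Chernoff bound these imply, so that it plugs in as a bona fide negatively correlated $\RRS$. With (i) and (ii) in hand, the hypotheses of Theorem~\ref{mainresult} are met and the bound \eqref{eq:mainresult} is immediate; the polynomial per-round running time is inherited from the polynomial-time decomposition and the $O(n)$ merges.
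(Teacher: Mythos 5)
Your proposal is correct and follows exactly the route the paper intends: the corollary is a direct instantiation of Theorem~\ref{mainresult}, with linearizability supplied by the standard matroid-polytope characterization (as in Appendix~\ref{appx:matroid}, citing \citet{schrijver2002combinatorial}) and the negative-correlation property \eqref{eq:neg-cor-defn}--\eqref{eq:neg-cor-defn-2} supplied by the swap-rounding analysis of \citet{chekuri2010dependent}. Your additional detail on the merge process and marginal preservation is a faithful expansion of what the paper leaves implicit, and your caution that the cited result must give the subset-product inequalities themselves (not merely the Chernoff consequence) is exactly the right condition to check.
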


\xhdr{Running time of the algorithm.}
The algorithm does two computationally intensive steps in each round: solves the linear program \eqref{lp:template} and runs the $\RRS$. For matroid constraints, the $\RRS$ from \cite{chekuri2010dependent} has $O(n^2)$ running time. Hence, in the general case the computational bottleneck is solving the LP, which has $n$ variables and $O(2^n)$ constraints. Matroids are known to admit a polynomial-time seperation oracle \citep[\eg see][]{schrijver2002combinatorial}. It follows that the entire set of constraints in \ref{lp:template} admits a polynomial-time separation oracle, and therefore we can use the Ellipsoid algorithm to solve \ref{lp:template} in polynomial time.  For some classes of matroid constraints the LP is much smaller: \eg for cardinality constraints (just $d+1$ constraints) and for traversal matroids on bipartite graphs (just $2n+d$ constraints). Then near-linear-time algorithms can be used.


Our algorithm works under any negatively correlated RRS. We can use this flexibility to improve the per-round running time for some special cases. (Making decisions extremely fast is often critical in practical applications of bandits \citep[\eg see][]{MWT-WhitePaper-2016}.) We obtain near-linear per-round running times for cardinality constraints and partition matroid constraints. Indeed, \ref{lp:template} can be solved in near-linear time, as mentioned above, and we can use a negatively correlated RRS from \citep{gandhi2006dependent} which runs in linear time. These classes of matroid constraints are important in our applications (see Section~\ref{sec:applications}).


\OMIT{ 
\subsection{Extension to linearizable action sets}

We extend our analysis to any linearizable action set, assuming each resource is consumed by at most one atom. (\Eg this is the case for the ``dynamic assortment" problem, see Section~\ref{sec:applications}.) We use a very simple $\RRS$: given a fractional solution $\vec{x}$ which lies in $\mP$, the polytope induced by the action set $\mF$, we represent $\vec{x}$ as a distribution $\vec{Y}$ over the vertices of $\mP$, and output $\vec{Y}$. This is a valid $\RRS$ because vertices of $\mP$ lie in $\mF$. However, while we get
    $\E[\vec{Y}]=\vec{x}$,
we cannot guarantee negative correlation or any other similarly useful property.

\begin{theorem}\label{thm:naive}
Consider the \SemiBwK problem with a linearizable action set. Assume each resource can be consumed by at most one atom. Use the same notation and same parameter $\eps$ as in Theorem \ref{mainresult}. Then algorithm $\algorithm$ with the simple $\RRS$ described above achieves expected regret at most
\begin{align}
 \label{eq:thm:naive}
O(\log(ndT/\delta)) \;
    ( \OPT \sqrt{n/B} + n\; \sqrt{\OPT} + n ).
\end{align}
The result holds if $B>\alpha n$, where $\alpha = \log(ndT/\delta)$.
Parameter $\eps$ in the algorithm is set to
    $\eps = \sqrt{\frac{\alpha n}{B}} + \frac{\alpha n}{B}$.
\end{theorem}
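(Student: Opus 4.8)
The plan is to reuse the analysis of Theorem~\ref{mainresult} wherever possible: $\algorithm$ is the same algorithm, and the $\RRS$ enters only through the identity $\E[\vec Y_t]=\vec x_t$ and (in the main proof) negative correlation across atoms. The simple scheme supplies the former but not the latter, so I would keep every step that uses only $\E[\vec Y_t]=\vec x_t$ and replace those that exploit negative correlation. First I would set up the \BwK scaffolding: condition on the clean event that all confidence bounds in \eqref{eq:confidencebounds} hold for every atom, resource, and round, which by \eqref{eq:conf-rad-prop} and a union bound fails with probability $O(e^{-\Omega(\alpha)})$ for $\alpha=\Theta(\log(ndT/\delta))$. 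On this event \eqref{lp:template} is optimistic — the fractional optimum of the true relaxation with budget $\B/T$ is feasible for \eqref{lp:template} and is undervalued there — so, using the standard fact that $\OPT$ is at most $T$ times the value of the expected one-round LP, we obtain $\vec\mu_t^+\cdot\vec x_t\ge(1-\eps)\,\OPT/T$ in every round.

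Next I would convert per-round LP value into collected reward. Since the round-$t$ outcome is independent of the rounding and $\E[\vec Y_t]=\vec x_t$, we have $\E[\mu_t(S_t)\mid\mathcal F_{t-1}]=\vec\mu\cdot\vec x_t$, and on the clean event the optimism gap $\vec\mu_t^+\cdot\vec x_t-\vec\mu\cdot\vec x_t$ is at most a weighted sum of confidence radii $\sum_a x_{t,a}\,\rad_\alpha(\mu(a),N_t(a))$. Relating $N_t(a)$ to its fractional proxy $\sum_{s<t}x_{s,a}$ is an across-rounds, per-atom concentration that needs no negative correlation, so the telescoped radius sum for a single atom is $O(\sqrt{\alpha\,\mu(a)N_T(a)}+\alpha)\le O(\sqrt{\alpha\,\OPT}+\alpha)$. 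This is where the simple scheme costs a factor: lacking joint control across atoms, I would simply sum this coarse per-atom bound over the $n$ atoms, already producing the claimed $n\sqrt{\OPT}+n$ terms. (The sharper $\sqrt{n\,\OPT}$ of Theorem~\ref{mainresult} would require aggregating the atoms jointly, which I avoid here.)

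The consumption side is where the hypothesis that each resource is consumed by at most one atom replaces negative correlation. Under it, $C_t(S_t,j)=C_t(a_j,j)\,\mathbf 1[a_j\in S_t]\in[0,1]$ for the unique atom $a_j$ affecting resource $j$, so each resource's cumulative consumption is a sum of bounded across-rounds martingale differences and concentrates by a standard single-atom argument; neither Theorem~\ref{thm:prelim:additive} nor any across-atom machinery is invoked. Combining this with the LP constraint $\vec C_t^-(j)\cdot\vec x_t\le\B/T$ and the usual confidence-radius correction shows that, as long as the slack $\eps B=\sqrt{\alpha n B}+\alpha n$ dominates the accumulated deviation, every resource stays under budget through round $T$ with high probability. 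This is exactly why $\eps=\sqrt{\alpha n/B}+\alpha n/B$ suffices and why the extra $\sqrt{\alpha nT}/B$ buffer — and the matching $\sqrt{nT}$ regret term — of Theorem~\ref{mainresult} disappear here. With the stopping time equal to $T$ whp, the $\eps$-rescaling costs $\eps\,\OPT=\OPT\sqrt{\alpha n/B}+\alpha n\,\OPT/B$, whose first piece is the $\OPT\sqrt{n/B}$ term and whose second is absorbed into it because $B>\alpha n$.

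Summing regret $\le \eps\,\OPT+(\text{reward-exploration sum})+(\text{reward forgone after stopping})$ and discarding the negligible off-clean-event contribution then yields the stated bound, valid once $B>\alpha n$. The main obstacle is the reward step above: with only $\E[\vec Y_t]=\vec x_t$ in hand one cannot concentrate $\sum_{a\in S_t}\mu_t(a)$ jointly across the atoms chosen in a round, so the exploration and rounding errors must be controlled atom by atom (equivalently, through the $[0,n]$ range of the per-round reward). This coarsening is precisely what degrades the $\sqrt n$ dependence to $n$ in the middle term, and is the concrete price of dispensing with a negatively correlated rounding scheme.
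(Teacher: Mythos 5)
Your proposal is correct and takes essentially the same route as the paper's own proof sketch: keep the Theorem~\ref{mainresult} machinery (clean events, LP optimism, $\eps$-rescaling), replace the negative-correlation concentration step by an atom-by-atom martingale analysis with a union bound over atoms (exactly the coarsening that degrades $\sqrt{n}$ to $n$ in the middle term), and use the one-atom-per-resource assumption so each resource's consumption concentrates as a single-atom sequence, which is why the $\sqrt{\alpha nT}/B$ buffer in $\eps$ and the $\sqrt{nT}$ regret term disappear. The only difference is bookkeeping: you bound each atom's deviation by $\sqrt{\alpha\,\OPT}$ directly, whereas the paper maximizes $\sum_a \sqrt{r(a)}$ subject to $\sum_a r(a)=\OPT$ via a Lagrangian argument --- both yield the same $n\sqrt{\alpha\,\OPT}$ term.
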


Compared to the main result, Theorem~\ref{thm:naive} makes a significant assumption that resources correspond to atoms. On the other hand, it relaxes the assumption on budget $B$. The regret bound is incomparable with \refeq{eq:mainresult}: for example, it is worse when $T/n \ll \OPT <B$, and better when
    $\OPT \ll T/n < nB$.
The theorem is proved similarly to Theorem~\ref{mainresult}; the main modification is a simpler-to-prove but less efficient version of \refeq{eq:concentration-rewards-body} below.

In particular, this extension helps in the application to dynamic assortment with mutually exclusive products (see Section~\ref{sec:moreApplications}).

} 

\def\mainresult {\ref{mainresult}}
\section{Proof of Theorem \mainresult}
\label{sec:proof}

\xhdr{Proof overview.}
First, we argue that \ref{lp:template} provides a good benchmark that we can use instead of $\OPT$. Specifically, at any given round, the optimal value for \ref{lp:template} in each round is at least
$\frac{1}{T} (1-\eps) \OPT$
with high probability. We prove this by constructing  a series of LPs, starting with a generic linear relaxation for \BwK and ending with \ref{lp:template}, and showing that the optimal value does not decrease along the series.

Next we define an event that occur with high probability, henceforth called \emph{clean event}. This event concerns total rewards, and compares our algorithm against \ref{lp:template}:
\begin{equation}\label{eq:concentration-rewards-body}
        \textstyle |\sum_{t\in [T]}\; r_t - \sum_{t\in [T]}\; \vec{\mu_t^+}\cdot\vec{x_t}|
        \leq \textstyle O\left( \sqrt{\alpha n \sum_{t\in [T]}\; r_t} + \sqrt{\alpha n T} + \alpha n \right).
\end{equation}

We prove that it is indeed a high-probability event in three steps. First, we relate the algorithm's reward $\sum_t r_t$ to its expected reward $\sum_t \vec{\mu}\cdot S_t$, where we interpret the chosen action $S_t$, a subset of atoms, as a binary vector over the atoms. Then we relate
    $\sum_t \vec{\mu}\cdot S_t$
to
    $\sum_t \vec{\mu^+_t}\cdot S_t$,
replacing expected rewards with the upper confidence bounds.
Finally, we relate
    $\sum_t \vec{\mu^+_t}\cdot S_t$
to
    $\sum_t \vec{\mu^+_t}\cdot\vec{x_t}$,
replacing the output of the RRS with the corresponding expectations. Putting it together, we relate algorithm's reward to $\sum_t \vec{\mu^+_t}\cdot\vec{x_t}$, as needed.
It is essential to bound the deviations in the sharpest way possible; in particular, the naive $\tilde{O}(\sqrt{T})$ bounds are not good enough. To this end, we use several tools: the confidence radius from \eqref{eq:conf-rad}, the negative correlation property of the RRS, and another concentration bound from prior work.

A similar ``clean event'' (with a similar proof) concerns the total resource consumption of the algorithm. We condition on both clean events, and perform the rest of the analysis via a ``deterministic" argument not involving probabilities.  In particular, we use the second ``clean event" to guarantee that the algorithm never runs out of resources.

We use negative correlation via a rather delicate argument. We extend the concentration bound in Theorem~\ref{thm:prelim:additive} to a random process that evolves over time, and only assumes that property \eqref{eq:neg-corr-half} holds within each round conditional on the history. For a given round, we start with a negative correlation property of $S_t$ and construct another family of random variables that conditionally satisfies \eqref{eq:neg-corr-half}. The extended concentration bound is then applied to this family. The net result is a concentration bound for $\sum_t \vec{\mu^+_t}\cdot S_t$ \emph{as if} we had $n\times T$ independent random variables there.

The rest of the section contains the full proof.

\subsection{Linear programs}
\label{sec:LPs}

We argue that \ref{lp:template} provides a good benchmark that we can use instead of $\OPT$. Fix round $t$ and let $\LPOptAlg[t]$ denote the optimal value for \ref{lp:template} in this round. Then:

\begin{lemma}\label{clm:algopt}
$\LPOptAlg[t] \geq \frac{1}{T} (1-\eps) \OPT$
with probability at least $1- \delta$.
\end{lemma}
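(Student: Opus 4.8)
The plan is to exhibit a short chain of linear relaxations whose optimal values can only improve, apart from the deliberate $(1-\eps)$ shrinkage of the budget, starting from a benchmark that dominates $\OPT/T$ and ending at \ref{lp:template}. First I would invoke the standard \BwK linear-programming bound: for the per-round distribution LP $\LPOptDistr$ that maximizes $\sum_{S\in\mF}\mu(S)\,p_S$ over sub-distributions $p$ (i.e. $p_S\ge 0$, $\sum_S p_S\le 1$) subject to $\sum_S C(S,j)\,p_S\le B/T$ for every resource $j\in[d]$, one has $\OPT \le T\cdot\LPOptDistr$. This is exactly the LP-relaxation benchmark for \BwK from \citep{BwK-focs13} specialized to our instance, and I would cite it rather than reprove it; it is the one ingredient that genuinely imports prior work.

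Next I would rewrite this distributional LP as a fractional LP over atoms. Because rewards and consumption are additive over atoms, a sub-distribution $p$ over actions induces the vector $\vec{x}=\sum_S p_S\,\mathbf{1}_S=\E_{S\sim p}[\mathbf{1}_S]$, for which $\sum_S\mu(S)p_S=\vec{\mu}\cdot\vec{x}$ and $\sum_S C(S,j)p_S=\vec{C}(j)\cdot\vec{x}$, where $\vec{C}(j):=(C(a,j):a\in\mA)$ is the mean-consumption vector for resource $j$. Since $\mP$ is the convex hull of $\{\mathbf{1}_S:S\in\mF\}$ and contains $\vec{0}$ (the empty action is feasible), ranging over sub-distributions $p$ is the same as ranging over $\vec{x}\in\mP$, so $\LPOptDistr=\LPOptAtoms$, where $\LPOptAtoms$ maximizes $\vec{\mu}\cdot\vec{x}$ subject to $\vec{C}(j)\cdot\vec{x}\le B/T$ for all $j$ and $\vec{x}\in\mP$. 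In particular $\LPOptAtoms\ge\OPT/T$.

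The core step is a rescaling argument on the clean event of \eqref{eq:conf-rad-prop}, under which $\mu(a)\le\mu_t^+(a)$ and $C_t^-(a,j)\le C(a,j)$ for all atoms $a$ and resources $j$. Let $\vec{x}^*$ be optimal for $\LPOptAtoms$ and consider $(1-\eps)\vec{x}^*$. This point is feasible for \ref{lp:template}: it lies in $\mP$ because $\mP$ is convex and contains both $\vec{x}^*$ and $\vec{0}$; and for each $j$, using $\vec{C}_t^-(j)\le\vec{C}(j)$ coordinatewise with $\vec{x}^*\ge\vec{0}$, we get $\vec{C}_t^-(j)\cdot(1-\eps)\vec{x}^*\le(1-\eps)\,\vec{C}(j)\cdot\vec{x}^*\le(1-\eps)B/T=\B/T$. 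Its objective obeys $\vec{\mu_t^+}\cdot(1-\eps)\vec{x}^*\ge(1-\eps)\,\vec{\mu}\cdot\vec{x}^*=(1-\eps)\LPOptAtoms$, where I use $\vec{\mu_t^+}\ge\vec{\mu}$ and $\vec{x}^*\ge\vec{0}$. Chaining everything, $\LPOptAlg[t]\ge\vec{\mu_t^+}\cdot(1-\eps)\vec{x}^*\ge(1-\eps)\LPOptAtoms\ge\tfrac1T(1-\eps)\OPT$, which is the claim.

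Finally I would account for the failure probability: every inequality involving confidence bounds holds on the clean event, which by \eqref{eq:conf-rad-prop} fails with probability $O(e^{-\Omega(\alpha)})$ per atom–resource pair; with $\alpha=\ChernoffC\log(ndT)$ and a union bound over the $O(nd)$ relevant pairs this is at most $\delta$. The main obstacle here is not any single inequality, since each is a one-line consequence of additivity, coordinatewise confidence bounds, and nonnegativity, but rather keeping the directions of all relaxations straight and justifying that $(1-\eps)\vec{x}^*\in\mP$, which hinges on $\vec{0}\in\mP$, i.e. feasibility of the empty action. For the matroid constraints of Corollary~\ref{maincor} this is immediate; for a general linearizable $\mF$ one should either assume the empty action is available or route the argument through the $\sum_S p_S\le 1$ slack of $\LPOptDistr$.
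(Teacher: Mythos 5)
Your proof is correct and follows essentially the same route as the paper: cite the $\OPT \le T\cdot\LPOptDistr$ benchmark from \citep{BwK-focs13}, pass from the distributional LP to the atoms LP via additivity and the convex-hull structure of $\mP$, and use the clean event $\vec{\mu}_t^+ \ge \vec{\mu}$, $\vec{C}_t^-(j) \le \vec{C}(j)$ to compare against \ref{lp:template}. The only (immaterial) difference is that you apply the $(1-\eps)$ shrinkage to the atoms-level solution $\vec{x}^*$ at the final step, whereas the paper rescales the distributional LP solution first and carries the shrunk budget $\B$ through the chain; both variants rely on $\vec{0}\in\mP$ (empty action feasible), a point you correctly flag and which holds for the matroid setting of the main theorem.
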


We will prove this by constructing  a series of LP's, starting with a generic linear relaxation for \BwK and ending with \ref{lp:template}. We show that along the series the optimal value does not decrease \whp.

The first LP, adapted from \cite{BwK-focs13}, has one decision variable for each action, and applies generically to any \BwK problem.
\begin{equation*}
\begin{array}{lll}
\label{LP:subsetsOriginal}
\tag{$\LP_{\BwK}$}
\text{maximize}  & \sum_{S \in\mF}\; \mu(S)\, x(S)&\\
\text{subject to}& \sum_{S\in\mF} \;  C(S, j)\, x(S) \leq B/T
&j=1 ,..., d\\
&0 \leq \sum_{S \in\mF}\; x(S) \leq 1.
\end{array}
\end{equation*}

Let $\LPOptDistr(B)$ denote the optimal value of this LP with a given budget $B$. Then:


\OMIT{ 
	Let $\Con$ denote the set of valid subsets of atoms that can be chosen, according to the semi-bandit constraint. Note that, we need to handle a hard constraint on the budgets $B$. Hence, we will use a slightly smaller value $\B = (1-\eps)B$. The following program hence, is the distribution assuming that the total available budget is $\B$.
	
	\begin{equation*}
	\begin{array}{lll}
	\label{LP:subsets}
	\tag{$\LP_{\subsets}$}
	\text{maximize}  & \displaystyle\sum\limits_{S \subseteq \mA: S \in \Con} \mu(S) x(S)& \\
	\text{subject to}& \displaystyle\sum\limits_{S \subseteq \mA: S \in \Con}   C(S, j)x(S) \leq \B/T,  &j=1 ,..., d\\
	& 0 \leq \sum\limits_{S \subseteq \mA: S \in \Con}x(S) \leq 1&\\
	\end{array}
	\end{equation*}
} 

\begin{claim}\label{clm:oneMinusEpsilon}
	$\LPOptDistr(\B) \geq (1-\eps)\LPOptDistr(B) \geq \frac{1}{T}(1-\eps)\, \OPT$.
\end{claim}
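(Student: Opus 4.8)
The plan is to prove the two inequalities of the claim separately, working throughout with the true means $\mu(\cdot)$ and $\vec{C}(\cdot)$, so that the statement is entirely deterministic (no clean event is needed here).

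For the first inequality $\LPOptDistr(\B) \ge (1-\eps)\LPOptDistr(B)$ I would use a direct scaling argument. Let $(x^\star(S))_{S\in\mF}$ be an optimal solution of \ref{LP:subsetsOriginal} with budget $B$, so its objective value equals $\LPOptDistr(B)$. Consider the scaled vector with entries $(1-\eps)\,x^\star(S)$. Every budget constraint and the normalization constraint of \ref{LP:subsetsOriginal} is homogeneous in the decision variables, so scaling by $(1-\eps)\in[0,1]$ keeps the point feasible for the rescaled budget $\B=(1-\eps)B$: indeed $\sum_S C(S,j)\,(1-\eps)x^\star(S)\le (1-\eps)B/T=\B/T$ for every $j$, and $0\le (1-\eps)\sum_S x^\star(S)\le 1$. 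Its objective value is exactly $(1-\eps)\LPOptDistr(B)$, so since $\LPOptDistr(\B)$ is the maximum over feasible points we get $\LPOptDistr(\B)\ge (1-\eps)\LPOptDistr(B)$.

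The second inequality reduces, after dividing by $1-\eps>0$, to $\LPOptDistr(B)\ge \OPT/T$, the standard statement that the per-round relaxation upper-bounds $\OPT/T$; I would adapt the argument of \citet{BwK-focs13}. Let $\mathrm{ALG}^\star$ be the optimal all-knowing policy with expected total reward $\OPT$, let $S_t$ be the action it plays in round $t$, and set $N(S)=\#\{t<\StopTime: S_t=S\}$. Define the candidate LP point $\hat x(S)=\E[N(S)]/T$. I would then verify feasibility and the objective value: the normalization $\sum_S \hat x(S)=\E[\StopTime-1]/T\le 1$ holds because the policy runs for at most $T$ rounds; the objective $\sum_S \mu(S)\hat x(S)=\tfrac1T\E[\sum_{t<\StopTime}\mu(S_t)]$ should equal $\tfrac1T\E[\sum_{t<\StopTime}\mu_t(S_t)]=\OPT/T$; and each budget constraint $\sum_S C(S,j)\hat x(S)=\tfrac1T\E[\sum_{t<\StopTime}C_t(S_t,j)]\le B/T$ should hold because the policy never overspends before \StopTime, so $\sum_{t<\StopTime}C_t(S_t,j)\le B$ holds deterministically. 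Feasibility of $\hat x$ for \ref{LP:subsetsOriginal} then yields $\LPOptDistr(B)\ge \OPT/T$, and combining with the first inequality gives $\LPOptDistr(\B)\ge(1-\eps)\LPOptDistr(B)\ge \tfrac1T(1-\eps)\OPT$.

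The main obstacle is the bookkeeping around the random stopping time \StopTime. Swapping the realized outcomes $\mu_t(S_t)$ and $C_t(S_t,j)$ for their means $\mu(S_t)$, $C(S_t,j)$ inside the sum $\sum_{t<\StopTime}$ is not immediate: the action $S_t$ is measurable with respect to the history before round $t$, but the event $\{t<\StopTime\}$ depends on round-$t$ consumption, and rewards and consumption may be correlated within a round. The clean way to handle this is an optional-stopping argument, since the partial sums $\sum_{s\le t}(\mu_s(S_s)-\mu(S_s))$ and $\sum_{s\le t}(C_s(S_s,j)-C(S_s,j))$ are martingales with respect to the history filtration; stopping them at the stopping time \StopTime (which is a stopping time for this filtration) lets one exchange realized outcomes for their means up to the single contribution of the stopping round itself. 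That contribution is lower order (each action uses at most $n$ atoms, each consuming at most $1$ and yielding reward at most $1$), and I would absorb it exactly as in \citet{BwK-focs13}. This stopping-round accounting is the only delicate part; everything else is elementary linear algebra on the LP.
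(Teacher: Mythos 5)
Your handling of the first inequality coincides exactly with the paper's proof: scale an optimal solution of \ref{LP:subsetsOriginal} with budget $B$ by $(1-\eps)$, check that all (homogeneous) constraints remain satisfied with budget $\B$, and read off the objective. The divergence is in the second inequality: the paper does not prove it at all, but imports it as Lemma~3.1 of \citet{BwK-focs13}; you instead try to re-derive it via $\hat x(S)=\E[N(S)]/T$, and that derivation does not close. As you yourself note, optional stopping controls sums stopped \emph{at} \StopTime (the event $\{\StopTime\ge t\}$ is determined by rounds $1,\dots,t-1$), whereas both quantities you need for $\hat x$ --- the objective value and the consumption constraints --- are sums stopped \emph{before} \StopTime, and $\StopTime-1$ is not a stopping time. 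Exchanging realized outcomes for means therefore leaves a stopping-round residual of size up to $n$. You propose to ``absorb'' it, but the claim is an \emph{exact} inequality; there is no slack to absorb anything into. What your argument actually establishes is that $\hat x$ satisfies the consumption constraints only with the inflated budget $B+n$ and has objective at least $(\OPT-n)/T$, which after rescaling gives roughly $\LPOptDistr(B)\ \ge\ \tfrac{B}{B+n}\cdot\tfrac{\OPT-n}{T}$ --- strictly weaker than the claim, and the extra additive terms would then propagate through Lemma~\ref{clm:algopt} and the regret analysis.

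Moreover, the residual is not a bookkeeping artifact that a sharper argument along the same lines could remove. Take a single atom with deterministic per-round reward $1$, consumption Bernoulli$(p)$ with $p<1$, budget $B$, and $T\gg B/p$. The only sensible policy plays every round and stops at the round of the $(B+1)$-st unit of consumption, so
\begin{equation*}
\E\Bigl[\textstyle\sum_{t<\StopTime}C(S_t)\Bigr] \;=\; p\,\E[\StopTime-1]\;=\;B+1-p\;>\;B,
\end{equation*}
i.e., your $\hat x$ is genuinely infeasible for \ref{LP:subsetsOriginal} with budget $B$. (In this example one even has $\OPT\to (B+1)/p-1$ while $T\cdot\LPOptDistr(B)=B/p$, so the exact inequality is sensitive to the one-round stopping convention --- precisely the model-level detail that the paper's citation of \citet{BwK-focs13} is carrying, and that a self-contained proof must confront explicitly.) So to fix your proposal you would have to either invoke the cited lemma as a black box, as the paper does, or redo its accounting under its exact conventions; the step ``feasibility of $\hat x$ then yields $\LPOptDistr(B)\ge\OPT/T$'' is unjustified as written.
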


\begin{proof}
The second inequality in Claim~\ref{clm:oneMinusEpsilon} follows from \citep[Lemma 3.1 in ][]{BwK-focs13}. We will prove the first inequality as follows. Let $\vec{x^*}$ denote an optimal solution to \ref{LP:subsetsOriginal}(B). Consider $(1-\eps)x^*$; this is feasible to \ref{LP:subsetsOriginal}($\B$), since for every $S$, 
\[ (1-\eps)x^*(S) \leq 1 \quad\text{and}\quad 
    \displaystyle\sum\limits_{S \subseteq \mA: S \in \Con}   C(S, j)(1-\eps)x(S) \leq \B/T. \]
Hence, this is a feasible solution. Now, consider the objective function. Let $\vec{y}$ denote an optimal solution to \ref{LP:subsetsOriginal}($\B$). We have that
\[
\LPOptDistr(\B) = \displaystyle\sum\limits_{S \subseteq \mA: S \in \Con} \mu(S) y^*(S) \geq \displaystyle\sum\limits_{S \subseteq \mA: S \in \Con} \mu(S) (1-\eps)x^*(S) = (1-\eps)\LPOptDistr(B). \qedhere
\]
\end{proof}

Now consider a simpler LP where the decision variables correspond to atoms. As before, $\mP$ denotes the polytope induced by action set $\mF$.
\begin{equation*}
\begin{array}{llll}
\label{LP:arms}
\tag{$\LP_{\atoms}$}
\text{maximize}  & \vec{\mu}\cdot\vec{x}& &\\
\text{subject to}&  C^\dag\cdot\vec{x} \preccurlyeq \B/T&\vec{x} \in \mathcal{P}&\vec{x} \in [0,1]^n.
\end{array}
\end{equation*}
Here $C = (C(a,j):\; a\in A, j\in d)$ is the $n\times d$ matrix of expected consumption, and $C^\dag$ denotes its transpose. The notation $\preccurlyeq$ means that the inequality $\leq$ holds for for each coordinate.

Leting $\LPOptAtoms$ denote the optimal value for \ref{LP:arms}, we have:

\begin{claim}
	\label{clm:armsubsets}
	With probability at least $1-\delta$ we have, $\LPOptAlg[t] \geq \LPOptAtoms \geq \LPOptDistr(\B)$.
\end{claim}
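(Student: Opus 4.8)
The plan is to prove the two inequalities separately. The right-hand inequality $\LPOptAtoms \geq \LPOptDistr(\B)$ will hold deterministically, while the left-hand inequality $\LPOptAlg[t] \geq \LPOptAtoms$ will hold on the ``clean event'' of \eqref{eq:conf-rad-prop}, which has probability at least $1-\delta$. Combining the two with Claim~\ref{clm:oneMinusEpsilon} then yields Lemma~\ref{clm:algopt}.

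For the right-hand inequality I would exhibit a feasibility- and objective-preserving map from \ref{LP:subsetsOriginal}$(\B)$ into \ref{LP:arms}. Given an optimal solution $\{x^*(S)\}_{S\in\mF}$ of \ref{LP:subsetsOriginal}$(\B)$, set $\vec{x} := \sum_{S\in\mF} x^*(S)\,\vec{1}_S$, where $\vec{1}_S\in\{0,1\}^n$ is the indicator vector of $S$. Additivity of rewards and consumption over atoms gives $\vec{\mu}\cdot\vec{x} = \sum_{S} x^*(S)\,\mu(S)$ and $(C^\dag\vec{x})_j = \sum_{S} x^*(S)\,C(S,j) \leq \B/T$, so the objective is preserved and the consumption constraints of \ref{LP:arms} hold. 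The one thing to check is membership $\vec{x}\in\mathcal{P}$: since $\sum_{S} x^*(S)\leq 1$, I move the slack $1-\sum_{S} x^*(S)$ onto the empty set (using $\emptyset\in\mF$, hence $\vec{0}=\vec{1}_\emptyset\in\mathcal{P}$), writing $\vec{x}$ as a genuine convex combination of the vertices $\{\vec{1}_S:S\in\mF\}$ of $\mathcal{P}=\mathrm{conv}(\mF)$. Thus $\vec{x}$ is feasible for \ref{LP:arms} with value $\LPOptDistr(\B)$, giving $\LPOptAtoms\geq\LPOptDistr(\B)$ with no appeal to randomness.

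For the left-hand inequality I would condition on the clean event of \eqref{eq:conf-rad-prop}, on which $\mu(a)\leq\mu_t^+(a)$ and $C_t^-(a,j)\leq C(a,j)$ hold simultaneously for all atoms $a$ and resources $j$; this event has probability at least $1-\delta$. Let $\vec{x}$ be an optimal solution of \ref{LP:arms}. Because $\mathcal{P}\subseteq[0,1]^n$ forces $\vec{x}\geq 0$, optimism acts coordinatewise: the consumption constraint of \ref{lp:template} is satisfied, since $\vec{C}_t^-(j)\cdot\vec{x}=\sum_a C_t^-(a,j)\,x_a \leq \sum_a C(a,j)\,x_a \leq \B/T = \tfrac{B(1-\eps)}{T}$; and the objective can only increase, since $\vec{\mu_t^+}\cdot\vec{x}=\sum_a \mu_t^+(a)\,x_a \geq \sum_a \mu(a)\,x_a = \vec{\mu}\cdot\vec{x}=\LPOptAtoms$. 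As the remaining constraint $\vec{x}\in\mathcal{P}$ is identical in both programs, $\vec{x}$ is feasible for \ref{lp:template} with value at least $\LPOptAtoms$, so $\LPOptAlg[t]\geq\LPOptAtoms$.

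I expect the only genuinely delicate point to be the polytope-membership step in the right-hand inequality: concluding $\vec{x}\in\mathcal{P}$ when $\sum_{S} x^*(S)<1$. This is exactly where I must use that $\mF$ contains the empty set (automatic for matroids) so that the deficient probability mass can be absorbed into $\vec{0}\in\mathcal{P}$ and the convex-hull description of $\mathcal{P}$ applies. Everything else is bookkeeping: the additivity identity $\mu(S)=\vec{\mu}\cdot\vec{1}_S$ (and likewise for $C$), and the sign condition $\vec{x}\geq 0$ that makes the confidence inequalities point in the feasibility/optimism direction.
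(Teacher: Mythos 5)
Your proof is correct and takes essentially the same approach as the paper's: the same atom-marginal mapping $\vec{x}=\sum_S x^*(S)\vec{1}_S$ for the inequality $\LPOptAtoms \geq \LPOptDistr(\B)$, and the same clean-event argument (feasibility via $C_t^-\leq C$ plus objective monotonicity via $\mu\leq\mu_t^+$) for $\LPOptAlg[t]\geq\LPOptAtoms$. If anything, your treatment of polytope membership is more careful than the paper's, which asserts that $\vec{y}$ is a ``convex combination of vertices'' without addressing that $\sum_S x^*(S)$ may be strictly less than $1$; your device of absorbing the slack mass onto $\vec{1}_\emptyset=\vec{0}$ (using $\emptyset\in\mF$) closes that small gap.
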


\begin{proof}
			We will first prove the second inequality.
		
		Consider the optimal solution vector $\vec{x}$ to \ref{LP:subsetsOriginal}($\B$).
		Define $S^* := \{ S: x(S) \neq 0\}$.
		
		We will now map this to a feasible solution to $\LP_{\atoms}$ and show that the objective value does not decrease. This will then complete the claim. Consider the following solution $\vec{y}$ defined as follows.
		\[
		y(a) = \sum\limits_{S \in S^*: a \in S} x(S).
		\]
		We will now show that $\vec{y}$ is a feasible solution to the polytope $\mathcal{P}$. From the definition of $\vec{y}$, we can write it as $\vec{y} = \sum\limits_{S \in S^*} x(S) \times \mathbb{I}[S]$. Here, $\I[S]$ is a binary vector, such that it has $1$ at position $a$ if and only if atom $a$ is present in set $S$. Hence, $\vec{y}$ is a point in the polytope since it can be written as convex combination of its vertices.\\
		
		Now, we will show that, $\vec{y}$ also satisfies the resource consumption constraint.
\begin{align*}
\vec{C(j)}\cdot\vec{y}
    = \sum_{a\in\mA} C(a, j) \sum_{S \in S^*: a \in S} x(S) 
    =\sum_{S \in S^*} \sum_{a \in S} C(a, j) x(S) 
    = \sum_{S \in S^*} C(S, j) x(S)
    \leq \B/T.
\end{align*}

		The last inequality is because in the optimal solution, the x value corresponding to subset $S^*$ is 1 while rest all are 0. We will now show that $\vec{y}$ produces an objective value at least as large as $\vec{x}$.
		
\begin{align*}
\LPOptAtoms
    &= \vec{\mu}\cdot\vec{y^*}
    \geq \vec{\mu}\cdot\vec{y}
    = \sum_{a=1}^n \mu(a) \sum_{S \in S^*: a \in S} x(S) \\
	&=\sum_{S \in S^*} \sum_{a \in S}\mu(a) x(S)
    = \sum_{S \in S^*} \mu(S) x(S) \\
    &= \LPOptSubsets(\B).
\end{align*}
		
		Now we will prove the first inequality. We will assume the ``clean event'' that $\vec{\mu}^+_t \geq \vec{\mu}$ and $\vec{C}^-_t \leq \vec{C}_t$ for all $t$. Hence, the inequality holds with probability at least $1- \delta$.
		
		Consider a time $t$. Given an optimal solution $\vec{x^{\ast}}$ to $\LP_{\atoms}$ we will show that this is feasible to $\LP_{\ALG, t}$. Note that, $\vec{x^{\ast}}$ satisfies the constraint set $\vec{x} \in \mathcal{P}$ since that is same for both $\LP_{\ALG, t}$ and $\LP_{\atoms}$. Now consider the constraint $\vec{C_t^-(j)}\cdot\vec{x} \leq \frac{\B}{T}$. Note that $C_t^-(a, j) \leq C(a, j)$. Hence, we have that
		$\vec{C_t^-(j)}\cdot\vec{x^{\ast}} \leq \vec{C(j)}\cdot\vec{x^{\ast}} \leq \frac{\B}{T}$.
		The last inequality is because $\vec{x^{\ast}}$ is a feasible solution to $\LP_{\atoms}$.
		
		Now consider the objective function. Let $\vec{y^{\ast}}$ denote the optimal solution to $\LP_{\ALG, t}$.\\
		$\LPOptAlg[t] = \vec{\mu_t^+}\cdot\vec{y^{\ast}} \geq \vec{\mu_t^+}\cdot\vec{x^{\ast}} \geq \vec{\mu}\cdot\vec{y^{\ast}} = \LPOptAtoms$.
\end{proof}

Hence, combining Claim~\ref{clm:oneMinusEpsilon} and Claim~\ref{clm:armsubsets}, we obtain Lemma~\ref{clm:algopt}.

\OMIT{ 
	Now we will consider the following LP used by the algorithm at round $t$ and show a claim relating it to \ref{LP:arms}.
	\begin{equation*}
	\begin{array}{llll}
	\label{LP:algt}
	\tag{$\LP_{\ALG}$}
	\text{maximize}  & \vec{\mu_t^+}\cdot\vec{x}& &\\
	\text{subject to}& \vec{C_t^-}\cdot\vec{x} \preccurlyeq \vec{\B/T} & \vec{x} \in \mathcal{P} & \vec{x} \in [0,1]^n
	\end{array}
	\end{equation*}
	Let $\LPOptAlg$ denote the optimal value to \ref{LP:algt}. We then have,
	\begin{claim}
		\label{clm:armALG}
		$\LPOptAlg \geq \LPOptAtoms$
	\end{claim}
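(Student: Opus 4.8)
The plan is to establish this via the standard ``optimism under uncertainty'' domination argument; in fact the claim is exactly the first inequality already proved inside Claim~\ref{clm:armsubsets}, so the same reasoning transfers verbatim. First I would condition on the clean event that $\vec{\mu}_t^+ \geq \vec{\mu}$ and $\vec{C}_t^-(j) \leq \vec{C}(j)$ hold coordinatewise, for every resource $j$ and every atom. By the confidence-radius guarantee \eqref{eq:conf-rad-prop} this event fails with probability at most $\delta$, which accounts for the ``with probability at least $1-\delta$'' qualifier carried by the surrounding lemma.

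Next I would take an optimal solution $\vec{x}^*$ of \ref{LP:arms} and argue that it is feasible for the algorithm's optimistic LP \ref{LP:algt} (which is identical to \ref{lp:template}). The membership constraint $\vec{x}\in\mathcal{P}$ is literally the same in both programs, so it is inherited. For the budget constraints, nonnegativity of $\vec{x}^*$ together with the clean-event inequality $\vec{C}_t^-(j) \leq \vec{C}(j)$ gives $\vec{C}_t^-(j)\cdot\vec{x}^* \leq \vec{C}(j)\cdot\vec{x}^* \leq \B/T$ for each $j$, where the last step is feasibility of $\vec{x}^*$ in \ref{LP:arms}. Hence $\vec{x}^*$ is a feasible point of \ref{LP:algt}.

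Finally I would compare objective values. Since $\vec{\mu}_t^+ \geq \vec{\mu}$ and $\vec{x}^* \geq 0$, the value attained by $\vec{x}^*$ in \ref{LP:algt} satisfies $\vec{\mu}_t^+\cdot\vec{x}^* \geq \vec{\mu}\cdot\vec{x}^* = \LPOptAtoms$. As the optimum of \ref{LP:algt} is at least the value of any feasible point, I conclude $\LPOptAlg \geq \LPOptAtoms$.

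I do not anticipate a genuine obstacle here: the whole argument is a one-directional domination that uses no concentration beyond the clean event itself. The single point that requires care is the direction of the two confidence inequalities — employing the \emph{upper} bound on rewards and the \emph{lower} bound on consumption is precisely what makes \ref{LP:algt} an optimistic relaxation, so that feasibility of $\vec{x}^*$ is preserved while its objective can only increase. Using the wrong-sided bounds would break both the feasibility transfer and the objective comparison simultaneously.
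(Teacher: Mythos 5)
Your proposal is correct and is essentially identical to the paper's own argument, which indeed proves this claim as the ``first inequality'' inside Claim~\ref{clm:armsubsets}: condition on the clean event, transfer feasibility of the optimal solution of \ref{LP:arms} to the optimistic LP via $\vec{C}_t^-(j) \leq \vec{C}(j)$, and dominate the objective via $\vec{\mu}_t^+ \geq \vec{\mu}$. Your write-up is in fact slightly cleaner, since the paper's final chain of inequalities contains a typo ($\vec{\mu}\cdot\vec{y^{\ast}}$ where $\vec{\mu}\cdot\vec{x^{\ast}}$ is meant) that your version avoids.
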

} 

\subsection{Negative correlation and concentration bounds}

Our analysis relies on several facts about negative correlation and concentration bounds. First, we argue that property \eqref{eq:neg-cor-defn} in the definition of negative correlation is preserved under a specific linear transformation:

\begin{claim}\label{cl:prelims-negCor-transform}
Suppose $(X_1,X_2 \LDOTS X_m)$ is a family of negatively correlated random variables with support $[0,1]$. Fix numbers $\lambda_1, \lambda_2 \LDOTS\lambda_m\in[0,1]$.
Consider two families of random variables:
\[
    \mF^+ = \left( \frac{1+\lambda_i (X_i-\E[X_i])}{2}:\, i\in[m]\right)
\quad\text{and}\quad
    \mF^-=\left(\frac{1-\lambda_i(X_i-\E[X_i])}{2}:\, i\in[m] \right). \]
Then both families satisfy property \eqref{eq:neg-cor-defn}.
\end{claim}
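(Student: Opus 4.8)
The plan is to reduce property \eqref{eq:neg-cor-defn} for each family to the negative correlation we already have for the $X_i$'s, by choosing an affine representation of each $Y_i^{\pm}$ that keeps the product expansion ``one-sided''. First I would record the elementary facts that, writing $\mu_i := \E[X_i]$, both $Y_i^+ := \tfrac{1+\lambda_i(X_i-\mu_i)}{2}$ and $Y_i^- := \tfrac{1-\lambda_i(X_i-\mu_i)}{2}$ have mean exactly $\tfrac12$, and that each lies in $[0,1]$ since $\lambda_i(X_i-\mu_i)\in[-1,1]$. Consequently $\prod_{i\in S}\E[Y_i^{\pm}] = (\tfrac12)^{|S|}$, so the target inequality \eqref{eq:neg-cor-defn} is exactly $\E[\prod_{i\in S}Y_i^{\pm}]\le(\tfrac12)^{|S|}$ for every $S\subseteq[m]$.

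For $\mF^+$, the key step is to rewrite $Y_i^+ = p_i + q_i X_i$ with $p_i = \tfrac{1-\lambda_i\mu_i}{2}$ and $q_i = \tfrac{\lambda_i}{2}$; crucially both coefficients are nonnegative because $\lambda_i,\mu_i\in[0,1]$ force $\lambda_i\mu_i\le1$. Expanding $\prod_{i\in S}(p_i+q_iX_i)$ over subsets $U\subseteq S$ gives $\E[\prod_{i\in S}Y_i^+] = \sum_{U\subseteq S}(\prod_{i\in U}q_i)(\prod_{i\in S\setminus U}p_i)\,\E[\prod_{i\in U}X_i]$, where every coefficient is nonnegative. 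Applying \eqref{eq:neg-cor-defn} term by term, i.e.\ $\E[\prod_{i\in U}X_i]\le\prod_{i\in U}\mu_i$, and re-collapsing the product yields the bound $\prod_{i\in S}(p_i+q_i\mu_i) = (\tfrac12)^{|S|}$, as desired. For $\mF^-$ I would argue symmetrically, but expand in the variables $1-X_i$ instead: writing $Y_i^- = p_i' + q_i'(1-X_i)$ with $p_i' = \tfrac{1-\lambda_i(1-\mu_i)}{2}\ge0$ and $q_i'=\tfrac{\lambda_i}{2}\ge0$, the same expansion produces nonnegative coefficients multiplying $\E[\prod_{i\in U}(1-X_i)]$, which I bound using the \emph{second} negative-correlation property \eqref{eq:neg-cor-defn-2}, again collapsing to $(\tfrac12)^{|S|}$.

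The main obstacle — and the reason the affine representation must be chosen with care — is that the ``obvious'' expansion of $\prod_{i\in S}(1+\lambda_i(X_i-\mu_i))$ in powers of the centered variables $X_i-\mu_i$ produces joint central moments $\E[\prod_{i\in T}(X_i-\mu_i)]$ of order $|T|\ge3$, whose sign is not controlled by negative correlation (only the $|T|=2$ covariances are guaranteed to be $\le0$). Likewise, any expansion mixing factors $X_i$ and $1-X_i$ would require bounding mixed products such as $\E[\prod_{i\in U}X_i\prod_{i\in V}(1-X_i)]$, which the two product conditions \eqref{eq:neg-cor-defn}--\eqref{eq:neg-cor-defn-2} do not supply. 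The resolution is precisely that $Y_i^+$ is an increasing affine function of $X_i$ while $Y_i^-$ is increasing in $1-X_i$, so each family admits a representation with nonnegative coefficients along a single monotone direction; this keeps the expansion purely in products of the $X_i$ (respectively the $1-X_i$) and lets exactly one of the two negative-correlation conditions do all the work.
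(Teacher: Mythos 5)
Your proof is correct and follows essentially the same route as the paper: the same affine decomposition $Y_i^+ = \tfrac{1-\lambda_i\mu_i}{2} + \tfrac{\lambda_i}{2}X_i$ with nonnegative coefficients, the same subset expansion of the product, term-by-term application of \eqref{eq:neg-cor-defn}, and re-collapsing to $(\tfrac12)^{|S|}$. Your explicit treatment of $\mF^-$ via the representation in $1-X_i$ and property \eqref{eq:neg-cor-defn-2} is exactly what the paper leaves implicit when it says the $\mF^-$ case is ``very similar.''
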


\begin{proof}
Let us focus on family $\mF^+$; the proof for family $\mF^-$ is very similar.

Denote $\mu_i = \E[X_i]$ and $Y_i:= (1+\lambda_i(X_i-\mu_i))/2$
and
$z_i:= (1-\lambda_i \mu_i)/2$
for all $i \in [m]$. Note that
    $Y_i = \lambda_i X_i/2 + z_i$ and $z_i \geq 0, X_i \geq 0$.
Fix a subset $S \subseteq [m]$. We have,
\begin{align*}
\E\left[\prod_{i \in S} Y_i\right]
    &= \E\left[\sum_{T \subseteq S}\prod_{i \in T}(\lambda_i X_i/2)\prod_{j \in S \setminus T}z_j \right]
    &\text{by Binomial Theorem}\\
&= \sum_{T \subseteq S} \E\left[\prod_{i \in T} (\lambda_i X_i/2) \right] \prod_{j \in S \setminus T} z_j &\\
&\leq \sum_{T \subseteq S}  \prod_{i \in T} (\lambda_i \mu_i/2) \prod_{j \in S \setminus T} z_j&\text{\eqref{eq:neg-cor-defn} invariant under non-negative scaling, $X_i$ neg. correlated} \\
&= \prod_{i \in S}( (1-\lambda_i \mu_i)/2 + \lambda_i \mu_i/2)
    &\text{by Binomial Theorem} \\
&= (\tfrac12)^{|S|}
= \prod_{i \in S} \E[Y_i] & \qedhere
\end{align*}
\end{proof}

\OMIT{ 
Now we will show that for any sequence $\{\lambda_i: 0 \leq \lambda_i \leq 1, i \in [m]\}$, the random variables $\{ (1 - \lambda_i (X_i - \E[X_i]))/2 : i \in [m]\}$ satisfy \eqref{eq:neg-cor-defn}. Also, the symmetric properties \eqref{eq:neg-cor-defn} and \eqref{eq:neg-cor-defn-2} imply that a sequence of random variables $\{X_i : i \in [m]\}$ are negatively correlated if and only if the sequence $\{ 1-X_i : i \in [m] \}$ are negatively correlated.

Define $A_i:= ((1-\lambda_i) + \lambda_i \mu_i)/2$ and $B_i := \lambda_i(1-X_i)/2$. Note that from the argument we made above we have, $B_i = \lambda Z_i/2$ where the sequence $\{Z_i : i \in [m] \}$ are negatively correlated. Since $0 \leq \lambda_i \leq 1$ for every $i \in [m]$ and that $0 \leq X_i \leq 1$ and hence $0 \leq \mu_i \leq 1$, we have that $A_i \geq 0$ and $B_i \geq 0$. Hence, the random variables of interest $\{ (1 - \lambda_i (X_i - \E[X_i]))/2 : i \in [m]\}$ = $\{ A_i + B_i : i \in [m] \}$.

Consider a subset $S \subseteq [m]$. We have,
\begin{align*}
\E[\prod_{i \in S} (B_i + A_i)] &= \E[\sum_{T \subseteq S}\prod_{i \in T}B_i\prod_{j \in S \setminus T}A_j]&\text{Applying Binomial Theorem on expression inside expectation}\\
&= \sum_{T \subseteq S} \E[\prod_{i \in T} B_i ] \prod_{j \in S \setminus T} A_j &\text{From Linearity of Expectation, $A_i \geq 0$ }\\
&\leq \sum_{T \subseteq S} \prod_{i \in T} (\lambda_i(1-\mu_i)/2) \prod_{j \in S \setminus T} A_j &\text{$\lambda_i \geq 0$, $Z_i$ are neg. correlated with \eqref{eq:neg-cor-defn} invariant under scaling} \\
&= \prod_{i \in S}( \lambda_i (1-\mu_i)/2 + A_i) = 1/2 &\text{From Binomial Theorem}\\
& = \prod_{i\in S} \E[A_i + B_i]&	\qedhere
\end{align*}
} 

Second, we extend Theorem~\ref{thm:prelim:additive} to a random process that evolves over time, and only assumes that property \eqref{eq:neg-corr-half} holds within each round conditional on the history.

\begin{theorem}\label{lemma:combining}
Let 	
	$\mZ_T = \{\zeta_{t,a}: a \in \mA, t \in [T]\}$ 	
be a family of random variables taking values in $[0,1]$. Assume random variables 	
	$\{\zeta_{t,a}: a \in \mA\}$ 	
\kaedit{satisfy property \eqref{eq:neg-cor-defn}} given $\mZ_{t-1}$ and have expectation $\tfrac12$ given $\mZ_{t-1}$, for each round $t$. Let
    $Z = \tfrac{1}{nT} \; \sum_{a\in \mA, t\in [T]} \zeta_{t,a} $
be the average. Then for some absolute constant $c$,
\begin{align}\label{eq:lemma:combining}
    \Pr[Z \geq \tfrac12 + \eta] \leq c \cdot e^{-2m\eta^2}
    \qquad (\forall \eta>0).
\end{align}
\end{theorem}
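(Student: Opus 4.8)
The plan is to reduce Theorem~\ref{lemma:combining} to Theorem~\ref{thm:prelim:additive} by showing that the \emph{entire} collection $\mZ_T$, viewed as a family of $m:=nT$ random variables, satisfies the unconditional property \eqref{eq:neg-corr-half}. Once this is done, Theorem~\ref{thm:prelim:additive} applied to these $m$ variables yields \eqref{eq:lemma:combining} immediately (reading the exponent as $e^{-2nT\eta^2}$, i.e.\ with $m=nT$).

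First I would record the conditional version of \eqref{eq:neg-corr-half}. Fix a round $t$ and a subset $S\subseteq\mA$ of atoms. Combining the two hypotheses of the theorem---property \eqref{eq:neg-cor-defn} conditional on $\mZ_{t-1}$, and conditional mean $\tfrac12$---gives
\[
\E\Big[\prod_{a\in S}\zeta_{t,a}\,\Big|\,\mZ_{t-1}\Big]
\;\leq\; \prod_{a\in S}\E[\zeta_{t,a}\mid \mZ_{t-1}]
\;=\; (\tfrac12)^{|S|}.
\]

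The core step is to ``unroll'' this conditional bound across rounds using the tower property. Fix any index set $U\subseteq\mA\times[T]$, and for each round $t$ write $S_t=\{a:(t,a)\in U\}$, so that $\sum_t|S_t|=|U|$. I claim, by induction on the number of rounds, that $\E\big[\prod_{(t,a)\in U}\zeta_{t,a}\big]\le(\tfrac12)^{|U|}$. For the inductive step, observe that the partial product $W:=\prod_{t<T}\prod_{a\in S_t}\zeta_{t,a}$ is measurable with respect to $\mZ_{T-1}$ and is nonnegative, since every $\zeta_{t,a}\in[0,1]$. Conditioning on $\mZ_{T-1}$ and pulling the $\mZ_{T-1}$-measurable factor $W$ out of the inner expectation,
\[
\E\Big[\prod_{(t,a)\in U}\zeta_{t,a}\Big]
= \E\Big[\,W\cdot \E\big[\textstyle\prod_{a\in S_T}\zeta_{T,a}\,\big|\,\mZ_{T-1}\big]\Big]
\;\leq\; (\tfrac12)^{|S_T|}\,\E[W],
\]
where the inequality uses the conditional bound above together with $W\ge0$. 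The induction hypothesis gives $\E[W]\le(\tfrac12)^{\sum_{t<T}|S_t|}$, and multiplying the two bounds proves the claim.

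This claim is precisely property \eqref{eq:neg-corr-half} for the $m=nT$ variables in $\mZ_T$, so Theorem~\ref{thm:prelim:additive} completes the proof. I expect the only genuine subtlety to be the measurability-and-nonnegativity bookkeeping in the inductive step: it is essential that the past product $W$ is $\mZ_{t-1}$-measurable, so that it can be treated as a constant inside the conditional expectation, and that it is nonnegative, so that multiplying by the conditional upper bound preserves the inequality. Everything else is a direct invocation of the already-established concentration bound.
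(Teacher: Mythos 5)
Your proposal is correct and follows essentially the same route as the paper: both establish the unconditional property \eqref{eq:neg-corr-half} for all $nT$ variables by inducting over rounds, using the tower property to peel off the last round (bounding its conditional product by $(\tfrac12)^{|S_T|}$ via conditional negative correlation and conditional mean $\tfrac12$, while pulling out the nonnegative, history-measurable past product), and then invoke Theorem~\ref{thm:prelim:additive}. Your explicit attention to the measurability and nonnegativity of the partial product is a point the paper treats more tersely, but the argument is the same.
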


\begin{proof}
We prove that family $\mZ_t$ satisfies property \eqref{eq:neg-corr-half}, and then invoke Theorem~\ref{thm:prelim:additive}. Let us restate property \eqref{eq:neg-corr-half} for the sake of completeness:
\begin{align}\label{eq:lemma:combining-proof}
\E\left[ \prod_{(t, a) \in S} \zeta_{t,a} \right] \leq 2^{-|S|}
    \quad \text{for any subset $S \subseteq \mZ_T$.}
\end{align}
Fix subset $S\subset \mZ_T$. Partition $S$ into subsets
    $S_t = \{\zeta_{t,a} \in \mZ_T \cap S\}$,
for each round $t$. Fix round $\T$ and denote
\[  G_\T = \prod_{t \in [\T]}H_t,
    \;\text{where}\;
    H_t = \prod_{a \in S_t} \zeta_{t,a}.\]
    We will now prove the following statement by induction on $\tau$:
\begin{align}
\label{eq:combining}
\E[G_\T] \leq 2^{-k_\T},
\;\text{where}\;
 k_\T = \sum_{t \in [\T]}\, |S_t|.
\end{align}
The base case is when $\T = 1$. Note that $G_\T$ is just the product of elements in set $\zeta_1$ and they are negatively correlated from the premise. Therefore we are done. Now for the inductive case of $\tau \geq 2$,
\begin{align}
\label{pf:change}
\E[H_\T| \mathcal{Z}_{\T-1}] &\leq \prod_{a \in S_\T} \E[\zeta_{\T,a} | \mathcal{Z}_{\T-1}] & \text{ From \kaedit{property \eqref{eq:neg-cor-defn}} in the conditional space}\\
\label{pf:final}
&\leq 2^{-|S_\T|}&\text{From assumption in Lemma~\ref{lemma:combining}}
\end{align}

Therefore, we have
\begin{align*}
\E[G_\T] &= \E[\E[G_{\T-1}H_\T | \mathcal{Z}_{\T-1}]]&\text{Law of iterated expectation}\\
&=\E[G_{\T-1} \E[H_\T | \mathcal{Z}_{\T-1}]] &\text{Since $G_{\T-1}$ is a fixed value conditional on $\mathcal{Z}_{\T-1}$}\\
&\leq 2^{-|S_\T|} \E[G_{\T-1}]&\text{From \refeq{pf:final}}\\
&\leq 2^{-k_\T} &\text{From inductive hypothesis}
\end{align*}	
This completes the proof of \refeq{eq:combining}. We obtain \refeq{eq:lemma:combining-proof} for $\T = T$.
\end{proof}

Third, we invoke \refeq{eq:conf-rad-prop} for rewards and resource consumptions:

\begin{lemma}
	\label{appx:devanurb3}
With probability at least $1- e^{-\Omega(\alpha)}$, we have the following:
	\begin{align}
	\begin{split}
	|\hat{\mu}_t(a) - \mu_t(a)| &\leq 2 \rad(\hat{\mu}_t(a), N_t(a) + 1)\\
	\forall j \in [d] \quad |\hat{C}_t(a, j) - C_t(a, j)| &\leq 2 \rad(\hat{C}_t(a, j), N_t(a) + 1).
	\end{split}
	\end{align}
\end{lemma}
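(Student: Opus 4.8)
The plan is to obtain the lemma directly from the confidence-radius property \eqref{eq:conf-rad-prop}, which already supplies the required deviation bound for a \emph{fixed} number of independent samples; here $\mu_t(a)$ and $C_t(a,j)$ denote the true means $\mu(a)$ and $C(a,j)$, consistent with \eqref{eq:conf-rad-prop}. The only genuine difficulty is that the number of observations $N_t(a)$ of each atom is a random, adaptively chosen quantity, so \eqref{eq:conf-rad-prop} cannot be applied verbatim. I would dispose of this by the standard device of freezing the sample count and then union-bounding over its possible values.

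Concretely, fix an atom $a$ and a coordinate (either the reward or some resource $j$). First I would argue that the values observed for $a$ form an i.i.d.\ sequence with mean $\mu(a)$: since the decision to include $a$ in $S_s$ depends only on the history through round $s-1$, while the round-$s$ outcome is drawn independently from $\BigD$, one may couple the process to an i.i.d.\ tape $\xi^{(1)}_a, \xi^{(2)}_a, \dots$ so that the $i$-th observation of $a$ equals $\xi^{(i)}_a$ and $\hat{\mu}_t(a)$ is the average of its first $N_t(a)$ entries. For each fixed integer $m \in \{1,\dots,T\}$, applying \eqref{eq:conf-rad-prop} to $\tfrac1m\sum_{i\le m}\xi^{(i)}_a$ yields $|\tfrac1m\sum_{i\le m}\xi^{(i)}_a - \mu(a)| < \rad(\cdot,m)$ with probability $1-O(e^{-\Omega(\alpha)})$, and likewise for each resource coordinate.

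Next I would union-bound over all $m\le T$, all $n$ atoms, and all $d+1$ coordinates, i.e.\ over $O(ndT)$ events. Because $\alpha = \ChernoffC\log(ndT)$ with $\ChernoffC$ a sufficiently large absolute constant, the factor $O(ndT)$ is absorbed into the exponent and the total failure probability stays $e^{-\Omega(\alpha)}$. On the complementary event the deviation bound holds simultaneously for every sample count, hence in particular for the realized value $m=N_t(a)$.

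Finally, to match the exact form in the statement I would pass from $N_t(a)$ to $N_t(a)+1$. The elementary estimate $\rad(x,m)\le 2\,\rad(x,m+1)$ for $m\ge 1$ (since $\sqrt{(m+1)/m}\le\sqrt2$ and $(m+1)/m\le 2$) upgrades $|\hat{\mu}_t(a)-\mu(a)|<\rad(\hat{\mu}_t(a),N_t(a))$ to $\le 2\,\rad(\hat{\mu}_t(a),N_t(a)+1)$, while the degenerate case $N_t(a)=0$ is trivial because $2\,\rad(\cdot,1)\ge 2\alpha\ge 1\ge|\hat{\mu}_t(a)-\mu(a)|$. The identical argument applied coordinatewise gives the consumption bound. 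I expect the only delicate point to be the i.i.d.-tape justification that lets one treat the adaptively chosen observations as a fixed-length i.i.d.\ sample prior to union-bounding; once that is in place, the remainder is bookkeeping over the union bound and the factor-of-two monotonicity of $\rad$.
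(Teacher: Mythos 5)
Your proposal is correct, and it is strictly more detailed than what the paper itself provides: the paper states this lemma with no proof body at all, presenting it as a direct invocation of the confidence-radius property \eqref{eq:conf-rad-prop} adapted from prior work (as the label suggests, it is essentially Lemma B.3 of \cite{agrawal2014bandits}; note that the concentration bounds cited there and in Theorem~\ref{prelim:babioff} are stated in martingale form, with conditional means given the history, so they already accommodate a random, adaptively chosen sample count $N_t(a)$). Your route uses only the i.i.d.\ version of \eqref{eq:conf-rad-prop} and reconstructs the adaptivity handling by hand: coupling each atom's observations to an i.i.d.\ tape, applying the bound at every fixed sample count $m \le T$, and union-bounding over the $O(ndT)$ choices of sample count, atom, and coordinate, with the polynomial factor absorbed into the exponent because $\alpha = \ChernoffC \log(ndT)$. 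What this costs is an extra union bound over $m$ (harmless here); what it buys is a self-contained argument that does not lean on the martingale-form statements from prior work. You also correctly resolve two points the paper glosses over: that $\mu_t(a)$ and $C_t(a,j)$ in the lemma statement must be read as the true means $\mu(a)$ and $C(a,j)$, and that the shift from $N_t(a)$ to $N_t(a)+1$ is exactly what the factor $2$ pays for, via $\rad_\alpha(x,m) \le 2\,\rad_\alpha(x,m+1)$ for $m \ge 1$ together with the trivial treatment of the case $N_t(a)=0$.
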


Fourth, we use a concentration bound from prior work which gets sharper when the expected sum is very small, and does not rely on independent random variables:

\begin{theorem}[\citet{DynPricing-ec12}]
	\label{prelim:babioff}
	Let $X_1, X_2, \ldots, X_m$ denote a set of $\{0, 1\}$ random variables. For each $t$, let $\alpha_t$ denote the multiplier determined by random variables $X_1, X_2, \ldots, X_{t-1}$. Let $M = \sum_{t=1}^m M_t$ where $M_t = \E[X_t | X_1, X_2, \ldots, X_{t-1}]$. Then for any $b \geq 1$, we have the following with probability at least $1- m^{-\Omega(b)}$:
	$$|\sum_{t=1}^m \alpha_t (X_t - M_t) | \leq b (\sqrt{M \log m} + \log m)$$
\end{theorem}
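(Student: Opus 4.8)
The plan is to recognize $S_m := \sum_{t=1}^m \alpha_t(X_t - M_t)$ as a martingale and apply a Bernstein-type (Freedman) tail bound, the crucial point being that the natural ``variance proxy'' for this martingale is exactly the quantity $M$ that appears in the claimed bound. I would work with the filtration $\mathcal{H}_t = \sigma(X_1,\ldots,X_t)$. By hypothesis each multiplier $\alpha_t$ is $\mathcal{H}_{t-1}$-measurable and $M_t = \E[X_t \mid \mathcal{H}_{t-1}]$, so the increments $D_t := \alpha_t(X_t - M_t)$ satisfy $\E[D_t \mid \mathcal{H}_{t-1}] = 0$ and $(S_m)$ is a martingale. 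Since $X_t \in \{0,1\}$, $M_t \in [0,1]$, and the multipliers satisfy $\alpha_t \in [0,1]$, each increment is bounded, $|D_t| \le 1$.

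Next I would control the predictable quadratic variation $V_m := \sum_{t=1}^m \E[D_t^2 \mid \mathcal{H}_{t-1}]$. Because $\E[(X_t - M_t)^2 \mid \mathcal{H}_{t-1}] = M_t(1-M_t) \le M_t$, we obtain $V_m = \sum_t \alpha_t^2\, M_t(1-M_t) \le \sum_t M_t = M$. Freedman's inequality for martingales with increments bounded by $1$ then gives, for every $\lambda,v>0$,
\[ \Pr\big[\, S_m \ge \lambda,\ V_m \le v \,\big] \le \exp\left( -\frac{\lambda^2}{2(v + \lambda/3)} \right). \]
The obstacle is that $M$, and hence the natural choice of variance bound $v$, is itself random, so this fixed-$v$ statement cannot be applied directly with ``$v=M$''.

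To remove the data-dependence I would peel over the $O(\log m)$ dyadic scales $v \in \{2^k : 0 \le k \le \lceil \log_2 m\rceil\}$ (note $V_m \le M \le m$). On the event $\{M \in (v/2,\,v]\}$ one has $V_m \le v$ and the target deviation $b(\sqrt{M\log m}+\log m)$ is, up to an absolute constant, at least $\lambda(v) := b(\sqrt{v\log m}+\log m)$, so Freedman applies with this $v$ and this $\lambda(v)$. A short computation of the exponent $\lambda(v)^2/\big(2(v+\lambda(v)/3)\big)$ in the two regimes $v \ge \log m$ (where $\lambda(v) \asymp b\sqrt{v\log m}$) and $v < \log m$ (where $\lambda(v) \asymp b\log m$) shows it is at least $\tfrac14 b\log m = \Omega(b\log m)$ in both cases, using $b\ge 1$. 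The resulting $\exp(-\Omega(b\log m)) = m^{-\Omega(b)}$ bound, summed over the $O(\log m)$ scales, only inflates the exponent by an additive $O(\log\log m)$, which is absorbed; applying the same argument to $-S_m$ yields the two-sided bound with probability $1 - m^{-\Omega(b)}$.

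The only genuinely delicate part is this peeling/stratification step, which converts Freedman's fixed-variance tail into a bound stated in terms of the random quantity $M$, together with the two-regime case analysis verifying that $\lambda(v)$ is calibrated so that the Bernstein exponent stays $\Omega(b\log m)$ uniformly across scales. As an alternative to invoking Freedman as a black box, one could reprove it inline by the standard exponential-supermartingale method: check that $\exp\!\big(\theta S_n - (e^\theta-1-\theta)V_n\big)$ is a supermartingale for $\theta>0$, apply Markov's inequality after stopping at the first time $V_n$ crosses a dyadic threshold, and optimize over $\theta$; this makes the statement self-contained at the cost of duplicating a well-known argument.
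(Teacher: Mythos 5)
Your proof is correct, but note that the paper you were compared against never actually proves Theorem~\ref{prelim:babioff}: the statement is imported as a black box from \citet{DynPricing-ec12}, so the only ``proof'' in the paper is the citation, and your argument is necessarily a self-contained reconstruction rather than a variant of an in-paper argument. Your route --- viewing $S_m=\sum_t \alpha_t(X_t-M_t)$ as a martingale with increments bounded by $1$, bounding the predictable quadratic variation by $\sum_t \alpha_t^2 M_t(1-M_t)\le M$, and then converting Freedman's fixed-variance tail into a bound in terms of the \emph{random} quantity $M$ by peeling over dyadic scales --- is a sound and standard way to obtain exactly this kind of statement, and your two-regime check that the Bernstein exponent stays $\Omega(b\log m)$ uniformly over scales (using $b\ge 1$) is the right calibration; summing the $O(\log m)$ scales indeed only costs an additive $O(\log\log m)$ in the exponent, which $\Omega(\cdot)$ absorbs. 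Two small points to tighten. First, the theorem as stated never says $\alpha_t\in[0,1]$; you read this into the word ``multiplier,'' which is the intended convention in \citet{DynPricing-ec12}, and some such bound is in fact \emph{necessary} (with unbounded $\alpha_t$ the claim is false), so this hypothesis should be stated explicitly rather than assumed silently. Second, your strata $\{M\in(v/2,v]\}$ with $v=2^k$, $k\ge 0$, do not cover the event $\{M\le 1/2\}$; add it as a separate stratum, where $V_m\le 1$ and the target deviation is still at least $b\log m$, so the same Freedman computation goes through. Neither point affects the substance of the argument.
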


\subsection{Analysis of the ``clean event"}

Let us set up several events, henceforth called \emph{clean events}, and prove that they hold with high probability. Then the remainder of the analysis can proceed conditional on the intersection of these events. The clean events are similar to the ones in \cite{agrawal2014bandits}, but are somewhat ``stronger", essentially because our algorithm has access to per-atom feedback and our analysis can use the negative correlation property of the $\RRS$.

In what follows, it is convenient to consider a version of \SemiBwK in which the algorithm does not stop, so that we can argue about what happens w.h.p. if our algorithm runs for the full $T$ rounds. Then we show that our algorithm does indeed run for the full $T$ rounds w.h.p.

Recall that $\vec{x_t}$ be the optimal fractional solution obtained by solving the LP in round $t$. Let $\vec{Y_t}\in \{0,1\}^n$ be the random binary vector obtained by invoking the $\RRS$ (so that the chosen action $S_t\in \mF$ corresponds to a particular realization of $\vec{Y_t}$, interpreted as a subset). Let
    $\mG_t := \{\vec{Y_{t'}} : \forall t' \leq t  \}$
denote the family of RRS realizations up to round $t$.

\subsubsection{``Clean event" for rewards}

For brevity, for each round $t$ let $\vec{\mu}_t = (\mu_t(a):\; a\in A)$ be the vector of realized rewards, and let
$r_t := \mu_t(S_t) = \vec{\mu_t} \cdot \vec{Y_t}$
be the algorithm's reward at this round.

\begin{lemma} \label{lemma:rewLPWarmup}
	Consider \SemiBwK without stopping. Then with probability at least
	$1-nT\; e^{-\Omega(\alpha)}$:
	\begin{equation*}
	|\sum_{t\in [T]}\; r_t - \sum_{t\in [T]}\; \vec{\mu_t^+}\cdot\vec{x_t}|
	\leq O\left( \sqrt{\alpha n \sum_{t\in [T]}\; r_t} + \sqrt{\alpha n T} + \alpha n \right).
	\end{equation*}
\end{lemma}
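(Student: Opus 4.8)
The plan is to insert two intermediate quantities between $\sum_t r_t$ and $\sum_t \vec{\mu_t^+}\cdot\vec{x_t}$ and bound the three resulting gaps separately. Writing $\vec{Y_t}\in\{0,1\}^n$ for the indicator of the chosen action $S_t$, set $M := \sum_t \vec{\mu}\cdot\vec{Y_t}$ (the \emph{mean} reward collected on the realized actions) and $M^+ := \sum_t \vec{\mu_t^+}\cdot\vec{Y_t}$ (the same with upper confidence bounds replacing the true means). By the triangle inequality it suffices to bound $|\sum_t r_t - M|$, $|M - M^+|$, and $|M^+ - \sum_t \vec{\mu_t^+}\cdot\vec{x_t}|$, which are exactly the three steps announced in the proof overview.

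For the first gap I would fix an atom $a$ and concentrate $\sum_{t} Y_{t,a}\,\mu_t(a)$ around its conditional-mean sum $N_T(a)\,\mu(a)$. Conditioned on the history $\mZ_{t-1}$ the indicator $Y_{t,a}$ is previsible while $\mu_t(a)$ is a fresh draw of mean $\mu(a)$, so this is a martingale whose per-step conditional variance is at most $\mu(a)$ (as $\mu_t(a)\in[0,1]$). The variance-adaptive bound of Theorem~\ref{prelim:babioff} (which applies to $[0,1]$-bounded summands) therefore gives a deviation of order $\sqrt{\alpha\,\mu(a)\,N_T(a)}+\alpha$ per atom. Summing over atoms and applying Cauchy--Schwarz, $\sum_a \sqrt{\mu(a)N_T(a)}\le \sqrt{n}\,\sqrt{\sum_a \mu(a)N_T(a)} = \sqrt{n\,M}$, collapses this to $O(\sqrt{\alpha n M}+\alpha n)$. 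The second gap is handled identically but deterministically on the clean event of Lemma~\ref{appx:devanurb3}: there $0\le \mu_t^+(a)-\mu(a)=O(\rad_\alpha(\mu(a),N_t(a)))$, so for each atom the telescoping sum $\sum_{N\le N_T(a)}\rad_\alpha(\mu(a),N)=O(\sqrt{\alpha\,\mu(a)N_T(a)}+\alpha)$ (the $\alpha/N$ terms form a harmonic series), and the same Cauchy--Schwarz step yields $|M-M^+|=O(\sqrt{\alpha n M}+\alpha n)$.

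The third gap is the delicate one and the main obstacle, since it is where negative correlation must be exploited. Both $\vec{x_t}$ and $\vec{\mu_t^+}$ are $\mZ_{t-1}$-measurable and $\E[\vec{Y_t}\mid\mZ_{t-1}]=\vec{x_t}$, so $\vec{\mu_t^+}\cdot\vec{x_t}$ is precisely the conditional mean of $\vec{\mu_t^+}\cdot\vec{Y_t}$; a naive Azuma bound would only give $O(\sqrt{T}\cdot\mathrm{poly}(n))$, far too weak. Instead I would apply Claim~\ref{cl:prelims-negCor-transform} inside the conditional space with coefficients $\lambda_{t,a}:=\mu_t^+(a)\in[0,1]$, forming $\zeta_{t,a}:=\tfrac12\big(1+\mu_t^+(a)(Y_{t,a}-x_{t,a})\big)$. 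Since $\{Y_{t,a}\}_a$ are negatively correlated given $\mZ_{t-1}$, the claim shows the $\{\zeta_{t,a}\}_a$ satisfy property~\eqref{eq:neg-cor-defn} conditionally, and each has conditional mean $\tfrac12$ by construction; this is exactly the hypothesis of Theorem~\ref{lemma:combining}. As $\sum_{t,a}\zeta_{t,a}=\tfrac{nT}{2}+\tfrac12\sum_t(\vec{\mu_t^+}\cdot\vec{Y_t}-\vec{\mu_t^+}\cdot\vec{x_t})$, the evolving bound \eqref{eq:lemma:combining} controls the upper tail, and the mirror family $\tfrac12\big(1-\mu_t^+(a)(Y_{t,a}-x_{t,a})\big)$ controls the lower tail. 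Choosing the deviation so that $e^{-\Omega(nT\eta^2)}=e^{-\Omega(\alpha)}$ gives $|M^+-\sum_t\vec{\mu_t^+}\cdot\vec{x_t}|=O(\sqrt{\alpha n T})$, i.e.\ concentration \emph{as if} there were $nT$ independent summands.

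Finally I would assemble the three bounds and re-express $M$ via the realized reward. The first-gap inequality gives $M\le \sum_t r_t+O(\sqrt{\alpha n M}+\alpha n)$; solving this quadratic in $\sqrt{M}$ yields $\sqrt{\alpha n M}=O(\sqrt{\alpha n \sum_t r_t}+\alpha n)$, so every occurrence of $M$ may be replaced by $\sum_t r_t$. Summing the three gaps then produces the claimed $O(\sqrt{\alpha n \sum_t r_t}+\sqrt{\alpha n T}+\alpha n)$. For the probability, the per-atom invocations of Theorem~\ref{prelim:babioff} and the confidence-radius event each fail with probability $nT\,e^{-\Omega(\alpha)}$ after union-bounding over atoms and over the possible values of $N_t(a)$, while Theorem~\ref{lemma:combining} contributes only $e^{-\Omega(\alpha)}$; together these give the stated $1-nT\,e^{-\Omega(\alpha)}$.
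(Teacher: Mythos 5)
Your proposal is correct and follows essentially the same route as the paper's proof: the same three-way decomposition through $\sum_t \vec{\mu}\cdot\vec{Y_t}$ and $\sum_t \vec{\mu_t^+}\cdot\vec{Y_t}$, with Theorem~\ref{prelim:babioff} for the first gap, confidence-radius summation plus Cauchy--Schwarz (via Lemma~\ref{appx:devanurb3}) for the second, and exactly the paper's negative-correlation argument for the third (Claim~\ref{cl:prelims-negCor-transform} with $\lambda_{t,a}=\mu_t^+(a)$ feeding Theorem~\ref{lemma:combining}, both tails, $\eta=\sqrt{\alpha/(nT)}$). The only differences are cosmetic: you solve the final quadratic in $\sqrt{M}$ where the paper solves it in $H=\sqrt{\sum_t\vec{\mu_t^+}\cdot\vec{x_t}}$, and you apply the martingale bound per atom before Cauchy--Schwarz where the paper applies it to the aggregate sum.
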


\begin{proof}
	We prove the Lemma by proving the following three high-probability inequalities.
	
	With probability at least $1-nT\; e^{-\Omega(\alpha)}$:
	the following holds:
	\begin{align}
	\label{w:part1}
	 |\sum_{t\in [T]}\; r_t - \sum_{t\in [T]}\; \vec{\mu}\cdot\vec{Y_t}|
	&\leq 3nT \rad \left(\frac{1}{nT} \sum_{t\in [T]}\;
	\vec{\mu_t^+}\cdot\vec{x_t}~,~nT \right) \\
	\label{w:part2}
	 |\sum_{t\in [T]}\;\vec{\mu}\cdot\vec{Y_t} - \vec{\mu_t^+}\cdot\vec{Y_t}|
	&\leq  12 \sqrt{\alpha n \left(\sum_{t\in [T]}\;
		\vec{\mu_t^+}\cdot\vec{x_t}\right)} + 12 \sqrt{\alpha} n + 12 \alpha n \\
	\label{w:part3}
	 |\sum_{t\in [T]}\; \vec{\mu_t^+}\cdot\vec{Y_t} -  \vec{\mu_t^+}\cdot\vec{x_t}|
	&\leq   \sqrt{\alpha nT}.
	\end{align}
	
	We will use the properties of $\RRS$ to prove \refeq{w:part3}. Proof of \refeq{w:part2} is similar to \cite{agrawal2014bandits}, while proof of \refeq{w:part1} follows immediately from the setup of the model. Using the parts \eqref{w:part1} and \eqref{w:part3} we can now find an appropriate upper bound on $\sqrt{\sum_{t\in [T]}\; \vec{\mu_t^+}\cdot\vec{x_t}}$ and using this upper bound, we prove Lemma \ref{lemma:rewLPWarmup}.

	\xhdrem{Proof of \refeq{w:part1}.}
	Recall that $r_t = \vec{\mu_t}\vec{Y_t}$. Note that, $\E[\vec{\mu_t}\vec{Y_t}] = \vec{\mu}\vec{Y_t}$ when the expectation is taken over just the independent samples of $\mu$. By Theorem \ref{prelim:babioff}, with probability $1-e^{-\Omega(\alpha)}$ we have:	
	\begin{align*}
	|\sum_{t\leq T} r_t - \sum_{t\leq T} \vec{\mu}\cdot\vec{Y_t}| & \leq 3nT \rad\left(\frac{1}{nT} \sum_{t\leq T} \vec{\mu}\cdot\vec{Y_t}~,~nT \right)\\
	&\leq 3nT \rad\left(\frac{1}{nT} \sum_{t\leq T} \vec{\mu_t^+}\cdot\vec{Y_t}~,~nT \right)\\
	&\leq 3nT \rad\left(\frac{1}{nT} \sum_{t\leq T} \vec{\mu_t^+}\cdot\vec{x_t}~,~nT \right).
	\end{align*}
	
	The last inequality is because $Y_t$ is a feasible solution to \ref{lp:template}.
	
	\xhdrem{Proof of \refeq{w:part2}.}
	For this part, the arguments similar to \cite{agrawal2014bandits} follow with some minor adaptations. For sake of completeness we describe the full proof. Note that we have,
	\[
	 |\sum_{t\leq T} \vec{\mu}\cdot\vec{Y_t} - \vec{\mu_t^+}\cdot\vec{Y_t}|
	\leq \sum_{a=1}^n |\sum_{t\leq T}\mu(a) Y_t(a) - \mu_t^+(a) Y_t(a)|.
	\]
	
	Now, using Lemma \ref{appx:devanurb3} in Appendix, we have that with probability $1-nT e^{-\Omega(\alpha)}$
	\[
	|\sum_{t\leq T}\mu(a) Y_t(a) - \mu_t^+(a) Y_t(a)| \leq 12 \sum_{t\leq T} \rad(\mu(a), N_t(a) + 1).
	\]
	Hence, we have
	\begin{align*}
	\sum_{a=1}^n |\sum_{t\leq T}\mu(a) Y_t(a) - \mu_t^+(a) Y_t(a)| &= 12 \sum_{a\in\mA} \sum_{r=1}^{N_T(a) + 1} \rad(\mu(a), r)\\
	&\leq 12 \sum_{a\in\mA} (N_T(a) + 1) \rad(\mu(a), N_T(a) + 1)\\
	&\leq 12 \sqrt{\alpha n \left(\vec{\mu}\cdot\vec{(N_T + 1)}\right)} + 12 \alpha n.
	\end{align*}
	The last inequality is from the definition of $\rad$ function and using the Cauchy-Swartz inequality. Note that $\vec{\mu N_T} = \sum_{t\leq T} \vec{\mu}\cdot\vec{Y_t}$. Also, since we have with probability $1- e^{-\Omega(\alpha)}$, $\mu(a) \leq \mu_t^+(a)$, we have,
	\[
	 12 \sqrt{\alpha n \left(\vec{\mu}\cdot\vec{(N_T + 1)}\right)} + 12 \alpha n \leq
	12 \sqrt{\alpha n \left( \sum_{t\leq T}\vec{\mu_t^+}\cdot\vec{Y_t} \right)} + 12 \sqrt{\alpha} n + 12 \alpha n.
	\]
	
	Finally note that $\vec{Y_t}$ is a feasible solution to the semi-bandit polytope $\mathcal{P}$. Hence, we have that
	\[
	\vec{\mu^+_t}\cdot\vec{Y_t} \leq \vec{\mu^+_t}\cdot\vec{x_t}.
	\]
	Hence,	
	\[
	 12 \sqrt{\alpha n \left( \sum_{t\leq T}\vec{\mu_t^+}\cdot\vec{Y_t} \right)} + 12 \sqrt{\alpha} n + 12 \alpha n \leq 12 \sqrt{\alpha n \left( \sum_{t\leq T}\vec{\mu_t^+}\cdot\vec{x_t} \right)} + 12 \sqrt{\alpha} n + 12 \alpha n.
	\]
	
	\xhdrem{Proof of \refeq{w:part3}:}
Recall that for each round $t$, the UCB vector $\vec{\mu_t^+}$ is determined by the random variables
    $\mG_{t-1} = \{\vec{Y_{t'}} : \forall t' < t  \}$.
Further, conditional on a realization of $\mG_{t-1}$, the random variables $\{Y_{t}(a): a \in \mA\}$ are \kaedit{negatively correlated from the property of RRS}. \kaedit{Let  $\tilde{\zeta}_t(a) := \mu_t^+(a)\, Y_t(a)$, $a\in \mA$. Note that we have
    $\E[\tilde{\zeta}_t(a) | \mG_{t-1}] = \mu_t^+(a)\, x_t(a)$. }
Define
\[ \zeta_t(a) := \frac{1+\mu_t^+(a)\, Y_t(a) -\mu_t^+(a)\,x_t(a)}{2}.\]
\kaedit{From Claim \ref{cl:prelims-negCor-transform}, we have that $\{\zeta_t(a): a \in \mA\}$ conditioned on $\mG_{t-1}$ satisfy 	\eqref{eq:neg-cor-defn}.} Further,
    $\E[\zeta_t(a) | \mG_{t-1}] = \tfrac12$. 	
Therefore, the family
	$\{\zeta_t(a) : t \in [T], a \in \mA\}$
satisfies the assumptions in Theorem~\ref{lemma:combining} and hence satisfies \refeq{eq:lemma:combining} for some absolute constant $c$. Plugging back the $\tilde{\zeta}_t(a)$'s, we obtain an upper-tail concentration bound:
	\begin{align*}
	\Pr\left[ \; \frac{1}{nT}(\sum_{t=1}^{T}\sum_{a\in\mA}\;
	\tilde{\zeta}_t(a) -\mu_t^+(a)\,x_t(a)) \geq \eta \;\right]
	\leq c \cdot e^{-2nT\eta^2}.
	\end{align*}
	
To obtain a corresponding concentration bound for the lower tail, we apply a similar argument to
    \[ \zeta'_t(a) = \frac{1+\mu_t^+(a)\,x_t(a)-\tilde{\zeta}_t(a)}{2}.\]
Once again from Claim~\ref{cl:prelims-negCor-transform}, we have that $\{\zeta'_t(a): a \in \mA\}$ conditioned on $\mG_{t-1}$ \kaedit{satisfy \eqref{eq:neg-cor-defn}}. The family
	$\{\zeta'_t(a) : t \in [T], a \in \mA\}$
satisfies the assumptions in Theorem~\ref{lemma:combining} and hence satisfies \refeq{eq:lemma:combining}. Plugging back the $\tilde{\zeta}_t(a)$'s, we obtain a lower-tail concentration bound:
	\begin{align*}
	\Pr\left[ \; \frac{1}{nT}(\sum_{t=1}^{T}\sum_{a\in\mA}\;
	 \mu_t^+(a)x_t(a) - \tilde{\zeta}_t(a)) \geq \eta \;\right]
	\leq c \cdot e^{-2nT\eta^2}.
	\end{align*}
	
	Combining these two we have,
	\begin{equation}
	\label{eq:rewChernoff}
	\Pr\left[ \; \frac{1}{nT}|\sum_{t=1}^{T}\sum_{a\in\mA}\;
	\mu_t^+(a)Y_t(a) -\mu_t^+(a)x_t(a)| \geq \eta \;\right]
	\leq 2\,c \cdot e^{-2nT\eta^2}.
	\end{equation}
	
Hence setting $\eta = \sqrt{\frac{\alpha}{nT}}$, we obtain \refeq{w:part3} with probability at least $1-e{^{-\Omega(\alpha)}}$.
	
\xhdrem{Combining Eq. \eqref{w:part1}, \eqref{w:part2} and \eqref{w:part3}}
Let us denote $H := \sqrt{\sum_{t\in [T]}\;\vec{\mu_t^+}\cdot\vec{x_t}}$. Adding the three equations we get
	\begin{equation}
	\label{eq:Heqwarmup}
	 |\sum_{t \in [T]}\;r_t - H^2| \leq \sqrt{\alpha}H + \alpha + \sqrt{\alpha n}H + O(\alpha n) + \sqrt{\alpha n T}
	\end{equation}
	
	Rearranging and solving for $H$, we have
	\[
	 H \leq \sqrt{\sum_{t \in [T]}\;r_t} + O(\sqrt{\alpha n}) + (\alpha n T)^{1/4}
	\]
	
	Plugging this back into \refeq{eq:Heqwarmup}, we get Lemma~\ref{lemma:rewLPWarmup}.
\end{proof}

\OMIT{
	\begin{lemma}\label{clm:rewLP}
		Consider \SemiBwK without stopping. Then with probability at least
		$1-nT\; e^{-\Omega(\alpha)}$:
		\begin{equation*}
		 |\sum_{t\in [T]}\; r_t - \sum_{t\in [T]}\; \vec{\mu_t^+}\cdot\vec{x_t}|
		\leq O\left( \sqrt{\alpha n \sum_{t\in [T]}\; r_t} + \alpha n \right).
		\end{equation*}
	\end{lemma}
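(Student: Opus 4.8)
The plan is to follow the three-part decomposition of Lemma~\ref{lemma:rewLPWarmup} essentially verbatim and to sharpen only the third inequality. Concretely, I would reuse \eqref{w:part1} and \eqref{w:part2} without change, since neither contributes a $\sqrt{\alpha n T}$ term: the former is an application of Theorem~\ref{prelim:babioff} and, after unfolding $\rad$, yields only $O(\sqrt{\alpha\,\sum_t \vec{\mu_t^+}\cdot\vec{x_t}}+\alpha)$, while the latter telescopes the confidence radii into a bound of the form $\sqrt{\alpha n\,\sum_t \vec{\mu_t^+}\cdot\vec{x_t}}+O(\alpha n)$. The entire $\sqrt{\alpha n T}$ slack in Lemma~\ref{lemma:rewLPWarmup} enters through \eqref{w:part3}, where the additive concentration of Theorem~\ref{lemma:combining} is applied with $\eta=\sqrt{\alpha/(nT)}$, giving a deviation of $nT\cdot\eta=\sqrt{\alpha n T}$ no matter how small the relevant sum is. The goal is therefore to replace this additive bound by a multiplicative one, showing
\[ \Bigl|\textstyle\sum_{t\in[T]}\vec{\mu_t^+}\cdot\vec{Y_t}-\vec{\mu_t^+}\cdot\vec{x_t}\Bigr|\le O\Bigl(\sqrt{\alpha\,\textstyle\sum_{t\in[T]}\vec{\mu_t^+}\cdot\vec{x_t}}+\alpha\Bigr), \]
which is never worse than $\sqrt{\alpha n T}$ (as $\sum_t\vec{\mu_t^+}\cdot\vec{x_t}\le nT$) and is far sharper when the optimistic reward is small.

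To prove this multiplicative bound I would work directly with $\tilde\zeta_t(a):=\mu_t^+(a)\,Y_t(a)\in[0,1]$ rather than with the recentred variables used for the additive argument. Since $\vec{\mu_t^+}$ and $\vec{x_t}$ are measurable with respect to $\mG_{t-1}$, the conditional mean is $\E[\tilde\zeta_t(a)\mid\mG_{t-1}]=\mu_t^+(a)\,x_t(a)$, and because $\{Y_t(a):a\in\mA\}$ is negatively correlated given $\mG_{t-1}$ while non-negative scaling preserves \eqref{eq:neg-cor-defn} (as already observed inside the proof of Claim~\ref{cl:prelims-negCor-transform}), the family $\{\tilde\zeta_t(a):a\in\mA\}$ satisfies \eqref{eq:neg-cor-defn} conditionally. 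I would then establish a \emph{multiplicative} analogue of Theorem~\ref{lemma:combining}: by the standard Chernoff moment-generating-function argument for negatively correlated variables (the multiplicative counterpart of Theorem~\ref{thm:prelim:additive}), conditional negative correlation lets one factorize $\E[\prod_a e^{s\tilde\zeta_t(a)}\mid\mG_{t-1}]\le\prod_a(1+(e^s-1)\mu_t^+(a)x_t(a))$ within a round, and the tower property chains these across rounds exactly as in the induction of Theorem~\ref{lemma:combining}. This gives the upper tail; the lower tail follows symmetrically from the complementary property \eqref{eq:neg-cor-defn-2} of the \RRS (equivalently, from the family $\mF^-$ of Claim~\ref{cl:prelims-negCor-transform}).

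The main obstacle is that the quantity we concentrate against, $\sum_t\vec{\mu_t^+}\cdot\vec{x_t}$, is itself random and data-dependent, so a fixed-mean Chernoff bound cannot be invoked directly. I would resolve this by a peeling (doubling) argument: partition the range of the predictable sum $M:=\sum_{t,a}\mu_t^+(a)x_t(a)\in[0,nT]$ into dyadic scales $[2^{j},2^{j+1})$, apply the martingale MGF bound conditionally on each scale to get deviation $O(\sqrt{\alpha\,2^{j}}+\alpha)$, and union-bound over the $O(\log(nT))$ scales, absorbing the extra logarithm into $\alpha=\Theta(\log(ndT))$; alternatively, a stopping-time supermartingale of the form $\exp\!\bigl(s\sum_{t\le\tau}(U_t-m_t)-(e^s-1)\sum_{t\le\tau}m_t\bigr)$ together with optional stopping yields the self-normalized bound directly. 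Once the sharpened \eqref{w:part3} is in hand, I would combine the three inequalities exactly as in \eqref{eq:Heqwarmup}: writing $H=\sqrt{\sum_t\vec{\mu_t^+}\cdot\vec{x_t}}$, the new \eqref{w:part3} contributes only $O(\sqrt{\alpha}\,H+\alpha)$, which is dominated by the $\sqrt{\alpha n}\,H$ term from \eqref{w:part2}; hence no $\sqrt{\alpha n T}$ term survives, and solving the resulting quadratic for $H$ in terms of $\sum_t r_t$ gives the claimed bound $O(\sqrt{\alpha n\,\sum_t r_t}+\alpha n)$.
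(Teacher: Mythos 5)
Your proposal is correct, and it follows the same skeleton as the paper's own (sketch) proof: both keep \eqref{w:part1} and \eqref{w:part2} from the warm-up Lemma~\ref{lemma:rewLPWarmup} unchanged and sharpen only the third inequality, replacing the additive $\sqrt{\alpha nT}$ bound by a multiplicative one of the form $O\bigl(\sqrt{\alpha\sum_t\vec{\mu_t^+}\cdot\vec{x_t}}+\alpha\bigr)$ --- the paper writes this as $3nT\,\rad\bigl(\tfrac{1}{nT}\sum_t\vec{\mu_t^+}\cdot\vec{x_t},\,nT\bigr)$ --- and then solve the same quadratic in $H$ to conclude. Where you genuinely diverge is in how the sharpened third inequality is established. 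The paper argues, via negative correlation, that the predictable sum $\sum_{t,a}\mu_t^+(a)x_t(a)$ dominates the sum of conditional expectations $\E[\mu_t^+(a)Y_t(a)\mid\mG_{t-1},\{Y_t(a'):a'<a\}]$, and then invokes a confidence-radius theorem for adapted sequences (a ``part (b)'' companion of \eqref{eq:conf-rad-prop}, resting on the concentration bound of Theorem~\ref{prelim:babioff}) to get $|X-\hat\mu|\le\rad_\alpha(X,nT)\le 3\rad_\alpha(\hat\mu,nT)$. You instead build the concentration from scratch: an MGF factorization under conditional negative correlation --- valid here because $e^{s\mu_t^+(a)Y_t(a)}$ is affine in the binary $Y_t(a)$, so the product expands into terms $\E[\prod_{a\in S}Y_t(a)\mid\mG_{t-1}]$ with nonnegative coefficients and property \eqref{eq:neg-cor-defn} applies term by term --- chained across rounds into a self-normalized supermartingale, with dyadic peeling or optional stopping to handle the fact that the centering quantity is random and adapted. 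Your route costs more work, but it buys two things: it is self-contained (the adapted-sequence confidence-radius theorem the paper cites is only referenced, not stated, in this version), and it needs only round-level conditional negative correlation of the whole block $\{Y_t(a):a\in\mA\}$, thereby sidestepping the paper's domination step, which conditions $Y_t(a)$ on arbitrary realizations of the earlier atoms $\{Y_t(a'):a'<a\}$ --- a property that does not follow from the product-form definitions \eqref{eq:neg-cor-defn} and \eqref{eq:neg-cor-defn-2} alone and is the least solid point of the paper's sketch. Your lower tail via \eqref{eq:neg-cor-defn-2}, and the final recombination yielding $O\bigl(\sqrt{\alpha n\sum_t r_t}+\alpha n\bigr)$, are both sound.
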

	
	\begin{proof}{(\emph{Sketch})}
		As before, we will split the proof of this lemma by proving the following three equations.
		
		With probability at least $1-nT\; e^{-\Omega(\alpha)}$:
		the following holds:
		\begin{align}
		\label{part1}
		 |\sum_{t\in [T]}\; r_t - \sum_{t\in [T]}\; \vec{\mu}\cdot\vec{Y_t}|
		&\leq   3nT \rad\left(\frac{1}{nT} \sum_{t\in [T]}\;
		\vec{\mu_t^+}\cdot\vec{x_t}~,~nT \right) \\
		\label{part2}
		 |\sum_{t\in [T]}\;\vec{\mu}\cdot\vec{Y_t} - \vec{\mu_t^+}\cdot\vec{Y_t}|
		&\leq  12 \sqrt{\alpha n \left(\sum_{t\in [T]}\;
			\vec{\mu_t^+}\cdot\vec{x_t}\right)} + 12 \sqrt{\alpha} n + 12 \alpha n \\
		\label{part3}
		 |\sum_{t\in [T]}\; \vec{\mu_t^+}\cdot\vec{Y_t} -  \vec{\mu_t^+}\cdot\vec{x_t}|
		&\leq   3nT \rad\left(\frac{1}{nT}\sum_{t\in [T]}\;
		\vec{\mu_t^+}\cdot\vec{x_t}~,~ nT\right)
		\end{align}
		
		Note that \refeq{part1} and \refeq{part2} are the same as before, while we get a sharper bound in \refeq{part3}.
		We will use the properties of $\RRS$ to prove \refeq{part3}. Proof of \refeq{part2} is similar to \cite{agrawal2014bandits}, while proof of \refeq{part1} follows immediately from the setup of the model. Using the parts \eqref{part1} and \eqref{part3} we can now find an appropriate upper bound on $\sqrt{\sum_{t\in [T]}\; \vec{\mu_t^+}\cdot\vec{x_t}}$ and using this upper bound, we prove Lemma \ref{clm:rewLP}.

		We prove \refeq{part3} as follows. Using similar arguments as the warm-up Lemma~\ref{lemma:rewLPWarmup}, we obtain \refeq{eq:rewChernoff}. Additionally, we make a few more observations. In particular, we have the following.
		
		\begin{align*}
		&\mu_t^+(a) x_t(a) &\\
		&= \E[\mu_t^+(a)Y_t(a)|\mG_{t-1}]&\\
		&= \mu_t^+(a)\E[Y_t(a)|\mG_{t-1}]&\text{ Since, $\mu_t^+(a)$ is a constant in the conditional space}\\
		& = \mu_t^+(a) \Pr[Y_t(a) = 1 | \mG_{t-1}] &\text{Since, $Y_t(a)$ is a 0-1 random variable}\\
		& \geq \mu_t^+(a) \Pr[Y_t(a) = 1 | \mG_{t-1}, \{Y_t(a'):\forall a' < a\}]& \text{$Y_t$'s are neg. correlated in the conditional space $\mG_{t-1}$}\\
		&=\E[\mu_t^+(a)Y_t(a)|\mG_{t-1}, \{Y_t(a'):\forall a' < a\}]&
		\end{align*}
		
		Taking sums over $t$ from $1$ to $T$ and $a$ from $1$ to $n$, we have
		
		$$\sum_{t=1}^{T}\sum_{a\in\mA} \mu_t^+(a)x_t(a) \geq \sum_{t=1}^{T}\sum_{a\in\mA} \E[\mu_t^+(a)Y_t(a)|\mG_{t-1}, \{Y_t(a'):\forall a' < a\}]$$
		
		We see that random variables
		$\{\tilde{\zeta}_t(a) : t \in [T], a \in \mA\}$, $\{\mu_t^+(a)x_t(a) : t \in [T], a \in \mA\}$
		satisfy the properties in Theorem \ref{prelim:radTheorem}(b), with
		$X = \frac{1}{nT}(\sum_{t=1}^{T}\sum_{a\in\mA}\;\tilde{\zeta}_t(a))$, $\hat{\mu} = \frac{1}{nT}(\sum_{t=1}^{T}\sum_{a\in\mA}\; \mu_t^+(a)x_t(a))$.
		By Theorem \ref{prelim:radTheorem}(b), these random variables satisfy
		\refeq{eq:conf-rad-prop}
		with $\mu$ replaced with $\hat{\mu}$.
		We conclude that with probability $1-\Omega(\exp(-\alpha))$ we have \eqref{part3}.
	\end{proof}
}

\vspace{-2mm}

\subsubsection{``Clean event" for resource consumption}

We define a similar ``clean event" for consumption of each resource $j$. By a slight abuse of notation, for each round $t$ let
$\vec{C_t(j)} = (C_t(a,j):\; a\in \mA)$
be the vector of realized consumption of resource $j$. Let $\chi_t(j)$ denote algorithm's consumption for resource $j$ at round $t$ (\ie $\chi_t(j) = \vec{C_t(j)} \cdot \vec{Y_t}$).

\begin{lemma}\label{clm:constLP}
	Consider \SemiBwK without stopping. Then with probability at least
	$1-nT\; e^{-\Omega(\alpha)}$:
	\begin{equation*}
	 \forall j \in [d] \quad |\sum_{t\in [T]}\; \chi_t(j) - \sum_{t\in [T]}\; \vec{C_t^-(j)}\cdot\vec{x_t}| \leq \sqrt{\alpha n \B} + \alpha n + \sqrt{\alpha n T}.
	\end{equation*}
\end{lemma}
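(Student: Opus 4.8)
The plan is to mirror the proof of Lemma~\ref{lemma:rewLPWarmup} almost verbatim, under the correspondence $r_t \leftrightarrow \chi_t(j)$, mean reward $\vec{\mu}\leftrightarrow$ mean consumption $\vec{C(j)}$, and upper confidence bound $\vec{\mu_t^+}\leftrightarrow$ lower confidence bound $\vec{C_t^-(j)}$. Fixing a resource $j$, I would route the target difference through the intermediate quantities $\sum_t \vec{C(j)}\cdot\vec{Y_t}$ and $\sum_t\vec{C_t^-(j)}\cdot\vec{Y_t}$, establishing three high-probability inequalities:
\begin{align*}
 |\textstyle\sum_t \chi_t(j) - \sum_t \vec{C(j)}\cdot\vec{Y_t}| &\leq 3nT\,\rad\big(\tfrac{1}{nT}\textstyle\sum_t \vec{C(j)}\cdot\vec{Y_t},\,nT\big),\\
 |\textstyle\sum_t \vec{C(j)}\cdot\vec{Y_t} - \sum_t\vec{C_t^-(j)}\cdot\vec{Y_t}| &\leq 12\sqrt{\alpha n \textstyle\sum_t \vec{C(j)}\cdot\vec{Y_t}} + 12\sqrt{\alpha}\,n + 12\alpha n,\\
 |\textstyle\sum_t\vec{C_t^-(j)}\cdot\vec{Y_t} - \sum_t\vec{C_t^-(j)}\cdot\vec{x_t}| &\leq \sqrt{\alpha nT}.
\end{align*}
The first follows from Theorem~\ref{prelim:babioff} applied to the per-atom consumption terms $C_t(a,j)$ (at most $nT$ of them, each with conditional mean $C(a,j)$ given the history). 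The third is the exact analogue of \eqref{w:part3}: since $\vec{C_t^-(j)}$ is $\mG_{t-1}$-measurable and $\E[\vec{Y_t}\mid\mG_{t-1}]=\vec{x_t}$, we have $\E[\vec{C_t^-(j)}\cdot\vec{Y_t}\mid\mG_{t-1}]=\vec{C_t^-(j)}\cdot\vec{x_t}$, so I would set $\zeta_t(a)=\tfrac12\big(1+C_t^-(a,j)Y_t(a)-C_t^-(a,j)x_t(a)\big)$, apply Claim~\ref{cl:prelims-negCor-transform} and Theorem~\ref{lemma:combining}, and take $\eta=\sqrt{\alpha/(nT)}$.

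The second inequality is where the consumption case diverges from the reward case. Its derivation is the same confidence-radius / Cauchy--Schwarz computation as in \eqref{w:part2}, using Lemma~\ref{appx:devanurb3} to bound $C(a,j)-C_t^-(a,j)$ by $O(\rad)$ on the clean event; this leaves $S_j:=\sum_t\vec{C(j)}\cdot\vec{Y_t}$ inside the square root. In the reward proof one replaces this quantity by $\sum_t\vec{\mu_t^+}\cdot\vec{x_t}=H^2$ using LP-optimality and then ``solves for $H$''. Here consumption is not the LP objective, so that substitution is unavailable; instead I would invoke the LP budget constraint directly: summing $\vec{C_t^-(j)}\cdot\vec{x_t}\le \B/T$ over the $T$ rounds gives $\sum_t\vec{C_t^-(j)}\cdot\vec{x_t}\le \B$.

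Combining the second and third inequalities then yields the self-referential estimate $S_j \le \B + \sqrt{\alpha nT} + 12\sqrt{\alpha n S_j} + 12\sqrt{\alpha}\,n + 12\alpha n$, a quadratic in $\sqrt{S_j}$ whose solution gives the a priori bound $S_j = O(\B + \alpha n + \sqrt{\alpha nT})$. Substituting this back into all three inequalities and summing, the dominant contribution is the $\sqrt{\alpha n S_j}$ term of the second inequality, which becomes $O(\sqrt{\alpha n \B}+\alpha n+\sqrt{\alpha nT})$ (the cross term $(\alpha n)^{3/4}T^{1/4}$ is in every regime dominated by $\alpha n + \sqrt{\alpha nT}$), while Parts~1 and~3 contribute only lower-order $O(\sqrt{\alpha\B}+\alpha)$ and $\sqrt{\alpha nT}$ terms; this matches the claimed bound. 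A union bound over the $d$ resources is absorbed into the $e^{-\Omega(\alpha)}$ factor since $\alpha=\Theta(\log(ndT))$, giving the ``$\forall j$'' statement. The main obstacle is precisely this self-referential Part-2 estimate: obtaining the clean $\sqrt{\alpha n \B}$ leading term (rather than a cruder $\sqrt{T}$-type bound) hinges on replacing the reward proof's ``solve for $H$'' step with the hard budget bound $S_j\lesssim\B$ extracted from the LP constraint.
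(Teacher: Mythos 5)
Your proposal is correct and follows essentially the same route as the paper's own proof: the identical three-part decomposition (the paper's \eqref{c:part1}--\eqref{c:part3}), proved with the same tools (Theorem~\ref{prelim:babioff} for the realized-vs-mean term, Lemma~\ref{appx:devanurb3} with Cauchy--Schwarz for the confidence-bound term, and Claim~\ref{cl:prelims-negCor-transform} plus Theorem~\ref{lemma:combining} with $\eta=\sqrt{\alpha/(nT)}$ for the RRS term), and the same closing step of invoking the LP budget constraint $\sum_{t\le T}\vec{C_t^-(j)}\cdot\vec{x_t}\le \B$ to turn the self-referential estimate into a quadratic in $\sqrt{S_j}$, solving it, substituting back, and union-bounding over the $d$ resources.
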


\begin{proof}
The proof is similar to Lemma~\ref{lemma:rewLPWarmup}. We will split the proof into following three equations. Fix an arbitrary resource $j \in [d]$. With probability at least $1-nT e^{-\Omega(\alpha)}$ the following holds:
	
	\begin{equation}
	\label{c:part1}
	|\sum_{t\leq T} \chi_t(j) - \sum_{t\leq T} \vec{C(j)}\cdot\vec{Y_t}| \leq 3nT \rad\left(\frac{1}{nT} \sum_{t\leq T} \vec{C(j)}\cdot\vec{Y_t}~,~nT \right).
	\end{equation}
	
	\begin{equation}
	\label{c:part2}
	|\sum_{t\leq T} \vec{C(j)}\cdot\vec{Y_t} - \vec{C_t^-(j)}\cdot \vec{Y_t}| \leq 12 \sqrt{\alpha n \left(\sum_{t\leq T} \vec{C(j)}\cdot \vec{Y_t} \right)} + 12 \sqrt{\alpha }n + 12 \alpha n.
	\end{equation}
	
	\begin{equation}
	\label{c:part3}
	|\sum_{t\leq T} \vec{C_t^-(j)}\cdot\vec{Y_t} - \vec{C_t^-(j)}\cdot \vec{x_t}| \leq \sqrt{\alpha n T}.
	\end{equation}
	
	Using the parts \ref{c:part1}, \ref{c:part2} and \ref{c:part3} we can find an upper bound on $\sqrt{\sum_{t\leq T} \vec{C_t(j)}\cdot\vec{Y_t}}$. Hence, combining Lemmas \ref{c:part1}, \ref{c:part2} and \ref{c:part3} with this bound and taking an Union Bound over all the resources, we get Lemma \ref{clm:constLP}.

	\xhdrem{Proof of \refeq{c:part1}.}
	We have that $\{C_t(a, j) : a \in \mA\}$ is a set of independent random variables over a probability spacee $C_{\Omega}$. Note that, $\E_{C_{\Omega}} C_t(a, j) Y_t(a) = C(a, j)Y_t(a)$. Hence, we can invoke Theorem \ref{prelim:babioff} on \emph{independent random variables} to get with probability $1- nT e^{-\Omega(\alpha)}$
	
	\begin{align*}
	|\sum_{t\leq T} \chi_t(j) - \sum_{t\leq T} \vec{C(j)}\cdot\vec{Y_t}| & \leq 3nT \rad\left(\frac{1}{nT} \sum_{t\leq T} \vec{C(j)}\cdot\vec{Y_t}~,~nT \right).
	\end{align*}
	
	\xhdrem{Proof of \refeq{c:part2}.}
	This is very similar to proof of \ref{w:part2} and we will skip the repetitive parts. Hence, we have with probability $1-nT e^{-\Omega(\alpha)}$
\begin{align*}
	 |\sum_{t\leq T} \vec{C(j)}\cdot \vec{Y_t} - \vec{C_t^-(j)}\cdot\vec{Y_t}|
	&\leq 12 \sqrt{\alpha n (\vec{C(j)}\cdot\vec{(N_T + 1)})} + 12 \alpha n \\
	 &\leq 12 \sqrt{\alpha n \left(\sum_{t\leq T} \vec{C(j)}\cdot\vec{Y_t} \right)} + 12 \sqrt{\alpha }n + 12 \alpha n.
\end{align*}
	
	\xhdrem{Proof of \refeq{c:part3}.}
    Recall that for each round $t$ and each resource $j$, the LCB vector $\vec{C_t^-(j)}$ is determined by the random variables
	$\mG_{t-1} = \{\vec{Y_{t'}} : \forall t' < t\}$.
Similar to the proof of \refeq{w:part3}, random variables
    $\{Y_t(a) : a \in \mA \}$ \kaedit{obtained from the RRS}
are negatively correlated given $\mG_{t-1}$. \kaedit{As before define
    $\tilde{\zeta}_t(a) = C_t^-(a)\,Y_t(a)$, $a\in \mA$.}
We have that $\E[\zeta_t(a)~|~\mG_{t-1}] = C_t^-(a)\,x_t(a)$.

By Claim~\ref{cl:prelims-negCor-transform}, random variables
    \[ \zeta_t(a) = \frac{1+\tilde{\zeta}_t(a)-C_t^-(a)\,x_t(a)}{2}, \; a\in\mA \]
\kaedit{satisfy \eqref{eq:neg-cor-defn}}, given $\mG_{t-1}$. We conclude that family
	$\{\zeta_t(a) : t \in [T], a \in \mA\}$
satisfies the assumptions in Theorem \ref{lemma:combining}, and therefore satisfies \refeq{eq:lemma:combining} for some absolute constant $c$.
 Therefore, we obtain an upper-tail concentration bound for $\tilde{\zeta}_t(a)$'s:
	\begin{align*}
	\Pr\left[ \; \frac{1}{nT}(\sum_{t=1}^{T}\sum_{a\in\mA}\tilde{\zeta}_t(a) -C_t^-(a)\,x_t(a)) \geq \eta \;\right]
	\leq c \cdot e^{-2nT\eta^2}.
	\end{align*}

To obtain a corresponding concentration bound for the lower tail, we apply a similar argument to
\[ \zeta'_t(a) = \frac{1+C_t^-(a)\,x_t(a)-\tilde{\zeta}_t(a)}{2}.\]
Once again, invoking Claim~\ref{cl:prelims-negCor-transform} we have that $\{\zeta'_t(a): a \in \mA\}$ conditioned on $\mG_{t-1}$ \kaedit{satisfy \eqref{eq:neg-cor-defn}}. Thus, family
	$\{\zeta_t(a) : t \in [T], a \in \mA\}$
satisfies the assumptions in Theorem \ref{lemma:combining}, and therefore satisfies \refeq{eq:lemma:combining}. We obtain:
	\begin{align*}
	\Pr\left[ \; \frac{1}{nT}(\sum_{t=1}^{T}\sum_{a\in\mA}C_t^-(a)\,x_t(a)-\tilde{\zeta}_t(a)) \geq \eta \;\right]
	\leq c \cdot e^{-2nT\eta^2}.
	\end{align*}
	
	Combing the two tails we have,
	\begin{equation}
	\label{eq:conChernoff}
	\Pr\left[ \; \frac{1}{nT}|\sum_{t=1}^{T}\sum_{a\in\mA}\;
	C_t^-(a)Y_t(a) -C_t^-(a)\,x_t(a)| \geq \eta \;\right]
	\leq 2\,c \cdot e^{-2nT\eta^2}.
	\end{equation}
	
	Once again, setting $\eta = \sqrt{\frac{\alpha}{nT}}$, we obtain \refeq{c:part3} with probability at least $1-e{^{-\Omega(\alpha)}}$.
	
	\xhdrem{Proof of Lemma \ref{clm:constLP}.}
	Denote $G = \sqrt{\sum_{t\leq T} \vec{C(j)}\cdot\vec{Y_t}}$. From Equation \ref{c:part1}, \ref{c:part2} and \ref{c:part3}, we have that $G^2 - 2 \Omega(\sqrt{\alpha n})G \leq \sum_{t\leq T} \vec{C_t^-(j)}\cdot\vec{x_t} + O(\alpha n) + \sqrt{\alpha nT}$. Note that $\sum_{t\leq T} \vec{C_t^-(j)}\cdot \vec{x_t} \leq \B$. Hence, $G^2 - 2 \Omega(\sqrt{\alpha n})G \leq \B + O(\alpha n) + \sqrt{\alpha nT}$. Hence, re-arranging this gives us $G \leq \sqrt{\B} + O(\sqrt{\alpha n}) + (\alpha nT)^{1/4}$. Plugging this back in Equations \ref{c:part1}, \ref{c:part2} and \ref{c:part3}, we get Lemma \ref{clm:constLP}.

\end{proof}

\subsection{Putting it all together}
Similar to \cite{agrawal2014bandits}, we will handle the hard constraint on budget, by choosing an appropriate value of $\eps$. We then combine the above Lemma on "rewards" clean event to compare the reward of the algorithm with that of the optimal value of LP to obtain the regret bound in Theorem~\ref{mainresult}. Additionally, we use the Lemma on "consumption" clean event to argue that the algorithm doesn't exhaust the resource budget before round $T$. Formally, consider the following.

Recall that from Lemma~\ref{clm:algopt}, we have $\LPOptAlg \geq \frac{1}{T}(1-\eps)\OPT$. Let us define the performance of the algorithm as $\ALG = \sum_{t\leq T} r_t$. From Lemma~\ref{lemma:rewLPWarmup}, that with probability at least $1-ndT\; e^{-\Omega(\alpha)}$
\begin{align*}
 \ALG & \geq  (1-\epsilon) \OPT - O(\sqrt{\alpha n \ALG}) - O(\alpha n) - \sqrt{\alpha nT}&\\
& \geq  (1-\epsilon) \OPT - O(\sqrt{\alpha n \OPT}) - O(\alpha n) -\sqrt{\alpha nT}&\text{(since $\ALG \leq \OPT$)}.
\end{align*}

Choosing
    $\epsilon = \sqrt{\frac{\alpha n}{B}} + \frac{\alpha n}{B} + \frac{\sqrt{\alpha n T}}{B}$
and using the assumption that $B > 3(\alpha n + \sqrt{\alpha nT})$,
we derive \refeq{eq:mainresult}. For any given $\delta$, we set $\alpha = \Omega(\log(\frac{ndT}{\delta}))$ to obtain a success probability of at least $1-\delta$.

Now we will argue that the algorithm does not exhaust the resource budget before round $T$ with probability at least $1-ndT\;e^{-\Omega(\alpha)}$. Note that for every resource $j \in [d]$,
\[ \sum_{t\leq T} \vec{C_t^-(j)}\cdot \vec{x_t} \leq (1-\epsilon)B.\]
Hence, combining this with Lemma \ref{clm:constLP}, we have
${\sum_{t\leq T} \vec{C_t(j)} \,\vec{Y_t} \leq (1-\epsilon)B + \epsilon B \leq B}.$

\OMIT{
	\xhdr{Proof of Lemma \ref{clm:rewLP}.}
		Denote $H = \sqrt{\sum_{t\leq T} \vec{\mu_t^+}\cdot\vec{x_t}}$. From \ref{part1}, \ref{part2} and \ref{part3}, we have that $H^2 - 2 \Omega(\sqrt{\alpha n})H \leq \sum_{t\leq T} r_t + O(\alpha n)$. Hence, re-arranging this gives us $H \leq \sqrt{\sum_{t\leq T} r_t} + O(\sqrt{\alpha n})$. Now plugging this back into equations \ref{part1}, \ref{part2} and \ref{part3} proves the Lemma \ref{clm:rewLP}.
}

\OMIT{
Similar to the rewards case, we have the following.
\begin{align*}
&C_t^-(a) x_t(a)&\\
&= \E[C_t^-(a)Y_t(a)|\mG_{t-1}]&\\
&= C_t^-(a)\E[Y_t(a)|\mG_{t-1}]&\text{ Since, $C_t^-(a)$ is a constant in the conditional space}\\
& = C_t^-(a) \Pr[Y_t(a) = 1 | \mG_{t-1}] &\text{Since, $Y_t(a)$ is a 0-1 random variable}\\
& \geq C_t^-(a) \Pr[Y_t(a) = 1 | \mG_{t-1}, \{Y_t(a'):\forall a' < a\}]& \text{$Y_t$'s are neg. correlated in the conditional space $\mG_{t-1}$}\\
&=\E[C_t^-(a)Y_t(a)|\mG_{t-1}, \{Y_t(a'):\forall a' < a\}]&
\end{align*}

Taking sums over $t$ from $1$ to $T$ and $a$ from $1$ to $n$, we have

\[
		\sum_{t=1}^{T}\sum_{a\in\mA} C_t^-(a)x_t(a) \geq \sum_{t=1}^{T}\sum_{a\in\mA} \E[C_t^-(a)Y_t(a)|\mG_{t-1}, \{Y_t(a'):\forall a' < a\}]
\]

Hence, from Theorem \ref{prelim:radTheorem}(b), with $X = \frac{1}{nT}(\sum_{t=1}^{T}\sum_{a\in\mA}\;\tilde{\zeta}_t(a))$, $\hat{\mu} = \frac{1}{nT}(\sum_{t=1}^{T}\sum_{a\in\mA}\; C_t^-(a)x_t(a))$ we have \refeq{eq:conf-rad-prop}
with $\mu$ replaced with $\hat{\mu}$. Hence, with probability $1-\Omega(\exp(-\alpha))$ we have,

	\begin{align*}
	|\sum_{t\leq T} \vec{C_t^-(j)}\cdot\vec{Y_t} -  \vec{C_t^-(j)}\cdot\vec{x_t}| &\leq nT \rad\left( \frac{1}{nT}\sum_{t\leq T} \vec{C_t^-(j)}\cdot\vec{Y_t}~,~ nT \right)\\
	&\leq nT \rad\left(\frac{1}{nT}\sum_{t\leq T} \vec{C(j)}\cdot\vec{Y_t}~,~ nT\right)\\
	\end{align*}
}

\OMIT{ 
\section{Proof sketch for an extension (Theorem~\ref{thm:naive})}
\label{sec:proof-extension}

	Here, we will give a sketch for proving the regret in \refeq{eq:thm:naive}. Note that, the $\RRS$ guarantees no form of concenteration among the atoms. Hence, we analyze it atom-by-atom and take an union bound across all the atoms. For the rewards clean event, we obtain with probability $1-n^2T \;e^{-\Omega(\alpha)}$

	\[
			|\sum_{t\leq T} r_t - \sum_{t\leq T} \vec{\mu_t^+}\cdot\vec{x_t}| \leq \sum_{a=1}^n O\left(\sqrt{\alpha n \sum_{t\leq T} r_t(a)}\right) + O(\alpha n^2)
	\]

	Translating this to the final regret calculation, we obtain

	\[
			OPT - ALG \leq O\left(\OPT \sqrt{\frac{\alpha n}{B}} + \sum_{a=1}^n \sqrt{\alpha n r_t(a)} + \alpha n^2 \right)
	\]

	Now, on the RHS we want to maximize $\sum_{a=1}^n \sqrt{r_t(a)}$ subject to $\sum_{a=1}^n r_t(a) = \OPT$. Using a standard Lagrangian calculation, we obtain that the maximizer is when $r_t(a) = \frac{\OPT}{n}$ for all the atoms $a$. Hence, we have the regret in \refeq{eq:thm:naive}. \\

	Now, let us look at the resources. Here, we will critically use the fact that each atom has a dedicated resource. Since, every atom has a dedicated resource, while analyzing a particular resource, we need to consider just the corresponding arm.
	In other words, we have
	\begin{equation*}
	\forall j \in [d] \quad |\sum_{t\leq T} \chi_t(j) - \sum_{t\leq T} \vec{C_t^-(j)}\cdot\vec{x_t}|
	= |\sum_{t\leq T} C_t(a_j, j).Y_t(a_j) - \sum_{t\leq T} C_t^-(a_j, j) x_t(a_j)|
	\end{equation*}

	Here, $a_j$ is the arm dedicated to resource $j$. Since $\{Y_t(a_j) : t\leq T \}$ form a Martingale, we can use the concenteration bounds similar to sampling a single atom and the arguments in \cite{agrawal2014bandits} goes as-is. Hence, with probability $1-ndT\; e^{-\Omega(\alpha)}$ the algorithm does not run out of resources before round $T$.

} 
\section{Applications and special cases}
\label{sec:applications}

Let us discuss some notable examples of \SemiBwK (which generalize some of the numerous applications listed in \cite{BwK-focs13}). Our results for these examples improve exponentially over a naive application of the \BwK framework. Compared to what can be derived from \citep{AgrawalDevanur-ec14,CBwK-nips16}, our results feature a substantially better dependence on parameters, a much better per-round running time, and apply to a wider range of parameters. However, we leave open the possibility that the regret bounds can be improved for some special cases. 

\OMIT{A more detailed discussion, including framing the examples in the \SemiBwK framework and comparisons to prior work, can be found in the supplement, see Section~\ref{sec:moreApplications}.}

\xhdr{Dynamic pricing.}
The dynamic pricing application is as follows. The algorithm has $d$ products on sale with limited supply: for simplicity, $B$ units of each.
Following \citet{BesbesZeevi-or12}, we allow supply constraints \emph{across} products, \eg a ``gadget" that goes into multiple products. In each round $t$, an agent arrives (who can buy any subset of the products), the algorithm chooses a vector of prices $p_t\in [0,1]^d$ to offer the agent, and the agent chooses what to buy at these prices. For simplicity, the agent is interested in buying (or is only allowed to buy) at most one item of each product.  The agent has a valuation vector over products, so that the agent buys a given product if and only if her valuation for this product is at least as high as the offered price. The entire valuation vector is drawn as an independent sample from a fixed and unknown distribution (but valuations may be correlated across products). The algorithm maximizes the total revenue from sales.

To side-step discretization issues, we assume that prices are restricted to a known finite subset $S\subset [0,1]$. Achieving general regret bounds without such restriction appears beyond reach of the current techniques for \BwK.%
\footnote{Prior work on dynamic pricing with limited supply
\citep[\eg][]{BZ09,DynPricing-ec12,BwK-focs13}
achieves regret bounds without restricting itself to a particular finite set of prices, but only for a simple special case of (essentially) a single product.}

To model it as a \SemiBwK problem, the set of atoms is all price-product pairs. The combinatorial constraint is that at most one price is chosen for each product. (If an action does not specify a price for some product, the default price is used.) This is a ``partition matroid" constraint, see Appendix~\ref{partitionMatroid}. Rewards correspond to revenue from sales, and resources correspond to inventory constraints.

We obtain regret
    $\tilde{O}(d \sqrt{dB|S|} + \sqrt{T|S|})$
using Corollary~\ref{maincor},
whenever
    $B> \tilde{\Omega}(n+\sqrt{nT})$.
This is because $\OPT \leq dB$, since that is the maximum number of products available, and the number of atoms is $n=d|S|$.
        
For comparison, results of \citep{AgrawalDevanur-ec14,CBwK-nips16} apply only when
    $B>\sqrt{n}\, T^{3/4}$,
and yield regret bound of
    $\tilde{O}(d^3 |S|^2\sqrt{T})$.%
\footnote{We obtain this by plugging in $\OPT \leq dB$ and $n=d|S|$ into their regret bound. For dynamic pricing the total per-resource consumption is bounded by $1$, so we can apply their results without rescaling the consumption.}
Thus, our regret bounds feature a better dependence on the number of allowed prices $|S|$ (which can be very large) and the number of products $d$. Further, our regret bounds hold in a meaningful way for the much larger range of values for budget $B$.

For a naive application of the \BwK framework, arms correspond to every possible realization of prices for the $d$ products. Thus, we have $|S|^d$ arms, with a corresponding exponential blow-up in regret. 

\xhdr{Dynamic assortment.}
The dynamic assortment problem is similar to dynamic pricing in that the algorithm is selling $d$ products to an agent, with a limited inventory $B$ of each product, and is interested in maximizing the total revenue from sales. As before, agents can have arbitrary valuation vectors, drawn from a fixed but unknown distribution. However, the algorithm chooses which products to offer, whereas all prices are fixed externally. There is a large number of products to choose from, and any subset of $k\ll d$ of them can be offered in any given round. 

To model this as \SemiBwK, atoms correspond to products, and actions correspond to subsets of at most $k$ atoms. The combinatorial constraint forms a matroid (see Appendix~\ref{partitionMatroid}). Rewards correspond to sales, and resources correspond to products, as in dynamic pricing. Since $\OPT \leq \min(dB, kT)$, Corollary~\ref{maincor} yields regret
     $\tilde{O}(k \sqrt{dT})$ when $B>\Omega(T)$,
and regret
    $\tilde{O}(d \sqrt{dB} + \sqrt{dT})$
in general. 

In a naive application of \BwK, arms are subsets of $k$ products. Hence, we have $O(d^k)$ arms. The other parameters of the problem remain the same. This leads to regret bound $\tilde{O}(d\sqrt{Bd^k})$, with an exponential dependence on $k$.

\OMIT{Since resources and atoms correspond to products, the extension in Theorem~\ref{thm:naive} applies, too. It yields regret bound
    $\tilde{O}(d\sqrt{dB})$,
which is better when $B\ll T/d^2$.}

\xhdr{Repeated auctions.}
Consider a repeated auction with adjustable parameters, \eg repeated second-price auction with reserve price that can be adjusted from one round to another. While prior work \citep{RepeatedAuctions-soda13,BwK-focs13} concerned running one repeated auction, we generalize this scenario to multiple repeated auctions with shared inventory (\eg the same inventory may be sold via multiple channels to different audiences).

More formally, the auctioneer is running $r$ simultaneous repeated auctions to sell a shared inventory of $d$ products, with limited supply $B$ of each product (\eg different auctions can cater to different audiences). Each auction has a parameter which the algorithm can adjust over time. We assume that this parameter comes from a finite domain $S \subset [0,1]$. For simplicity, assume the auctions are synchronized with one another. As in prior work, we assume that in every round of each auction a fresh set of participants arrives, sampled independently from a fixed joint distribution, and only a minimal feedback is observed: the products sold and the combined revenue.

Following prior work \citep{RepeatedAuctions-soda13,BwK-focs13}, we only assume minimal feedback: for each auction, what were the products sold and what was the combined revenue from this auction. In particular, we do not assume that the algorithm has access to participants' bids. Not using participants' bids is desirable for privacy considerations, and in order to reduce the participants' incentives to game the learning algorithm.

To model this problem as \SemiBwK, atoms are all auction-parameter pairs. The combinatorial constraint is that an action must specify at most one  parameter value for each auction.  This corresponds to partition matroid constraints, see Appendix~\ref{partitionMatroid}. There is a ``default parameter" for each auction, in case an action does not specify the parameter. We have a resource for each product being auctioned. For simplicity, each product has supply of $B$. Note that $\OPT \leq dB$ and number of atoms is $n=r|S|$. Hence, our main result yields regret
    $\tilde{O}(d \sqrt{r|S|B} + \sqrt{r|S|T})$.

A naive application of the \BwK framework would have arms that correspond to all possible combinations of parameters, for the total of $O(|S|^r)$ arms. Again, we have an exponential blow-up in regret. Alternatively, one may try running $r$ seperate instances of BwK, one for each auction, but that may result result in budgets being violated since the items are shared across the auctions and it is unclear a priori how much of each item will be sold in each auction.

One can also consider a ``flipped" version of the previous example, where the algorithm is a bidder rather than the auction maker. The bidder participates  in $r$ repeated auctions, \eg ad auctions for different keywords. We assume a stationary environment: bidder's utility from a given bid in a given round of a given auction is an independent sample from a fixed but unknown distribution. The only limited resource here is the bidder's budget $B$.  Bids are constrained to lie in a finite subset $S$. 

To model this as \SemiBwK, atoms correspond to the auction-bid pairs. The combinatorial constraint is that each action must specify at most one bid for each auction. (There is a ``default bid" for each auction in case an action does not specify the bid for this auction.) There is exactly one resource, which is money and the total budget is $B$. Note that the number of atoms is $n=r|S|$. Hence, our main result yields regret
$\tilde{O}(\OPT \sqrt{r|S|/B} + \sqrt{r|S|T})$.

A naive application of \BwK would have arms that correspond to all possible combinations of bids, for the total of $O(|S|^r)$ arms; so we have an exponential blow-up in regret.

\section{Numerical Simulations}
\label{sec:simulations}
		
	\begin{figure*}[t]
    	\includegraphics[width=\linewidth]{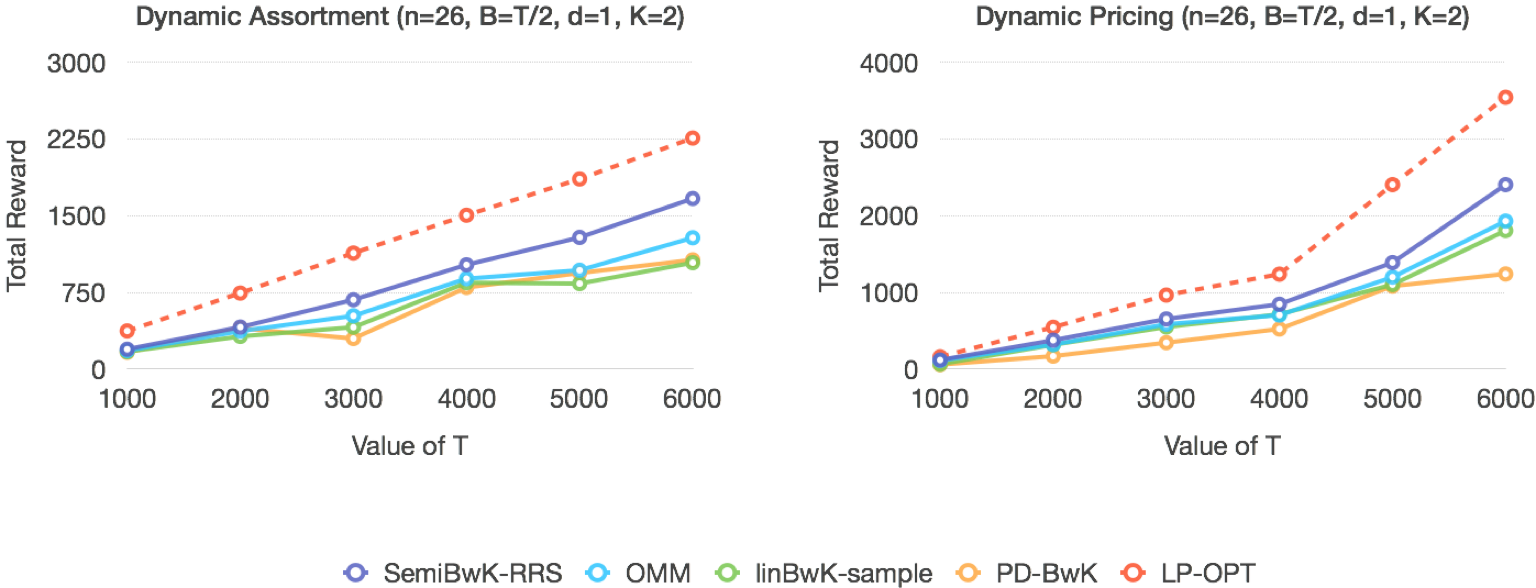}\par
		\caption{Dynamic Assortment (left) and Dynamic Pricing (right) experiments for $n=26$.}
		\label{fig:mainPlots}
	\end{figure*}
		
	\begin{figure*}[h]
		\begin{multicols}{2}
    	\includegraphics[width=\linewidth]{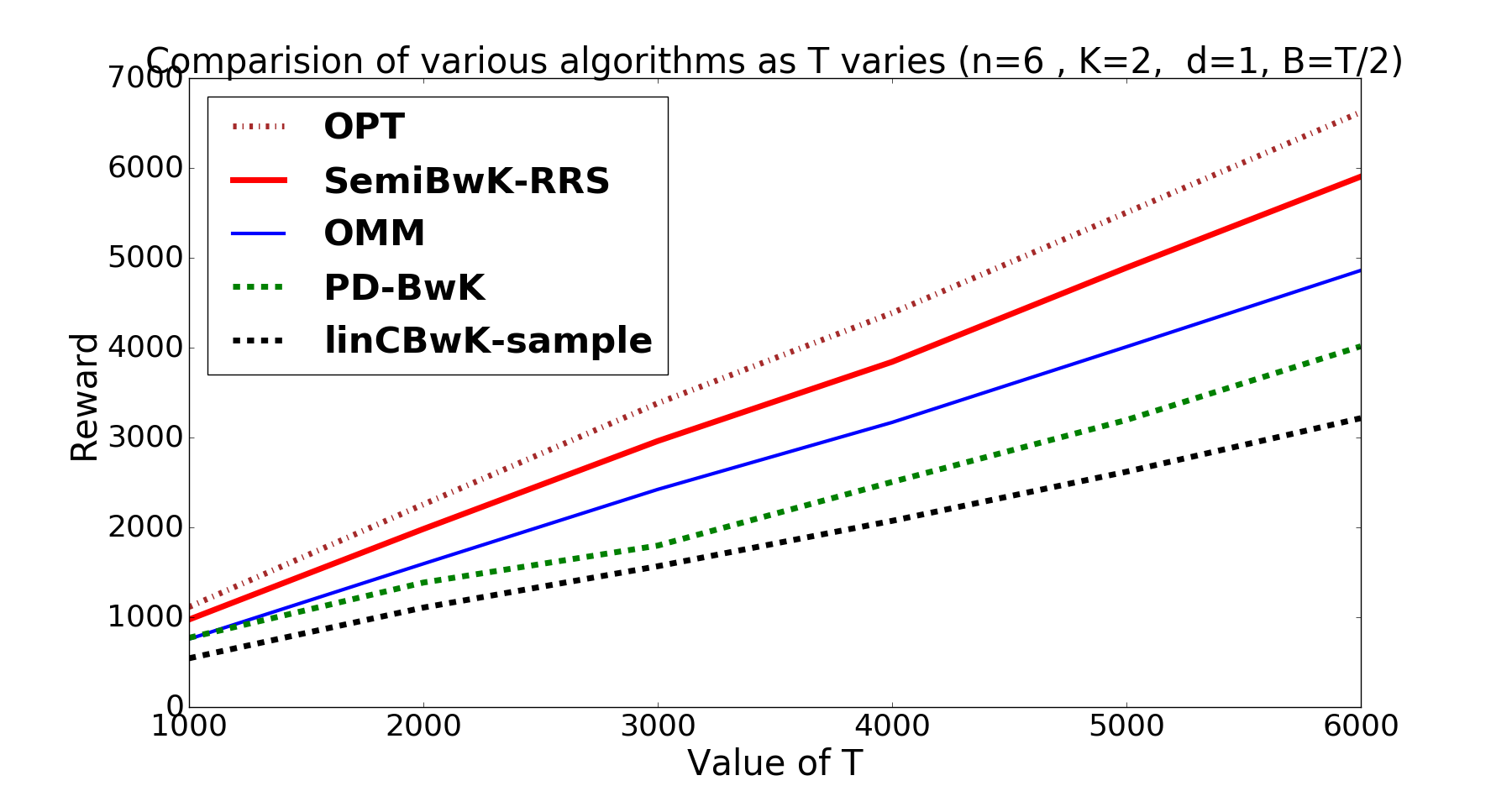}\par
    	\includegraphics[width=\linewidth]{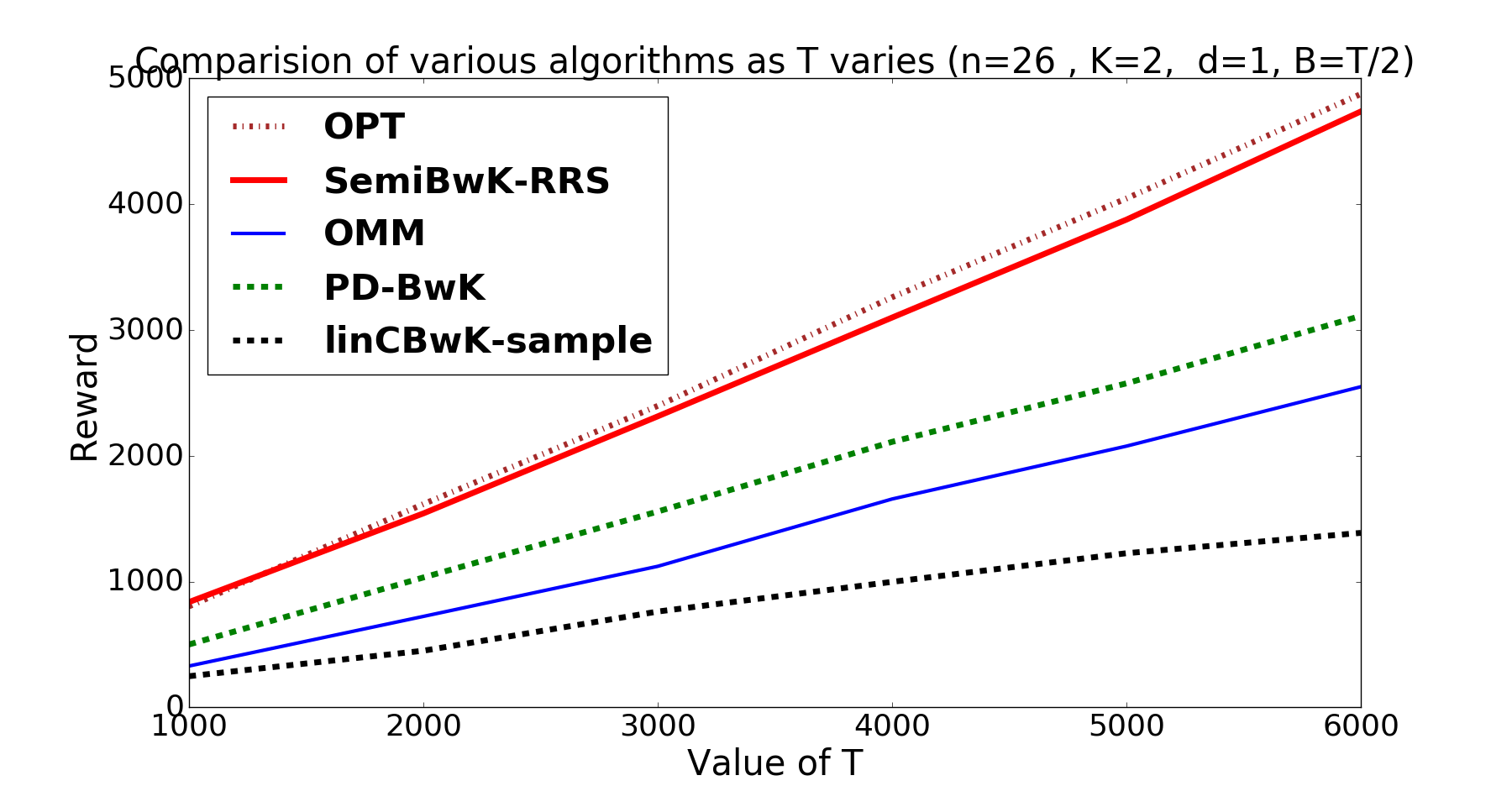}\par
    	\end{multicols}
		\begin{multicols}{2}
    	\includegraphics[width=\linewidth]{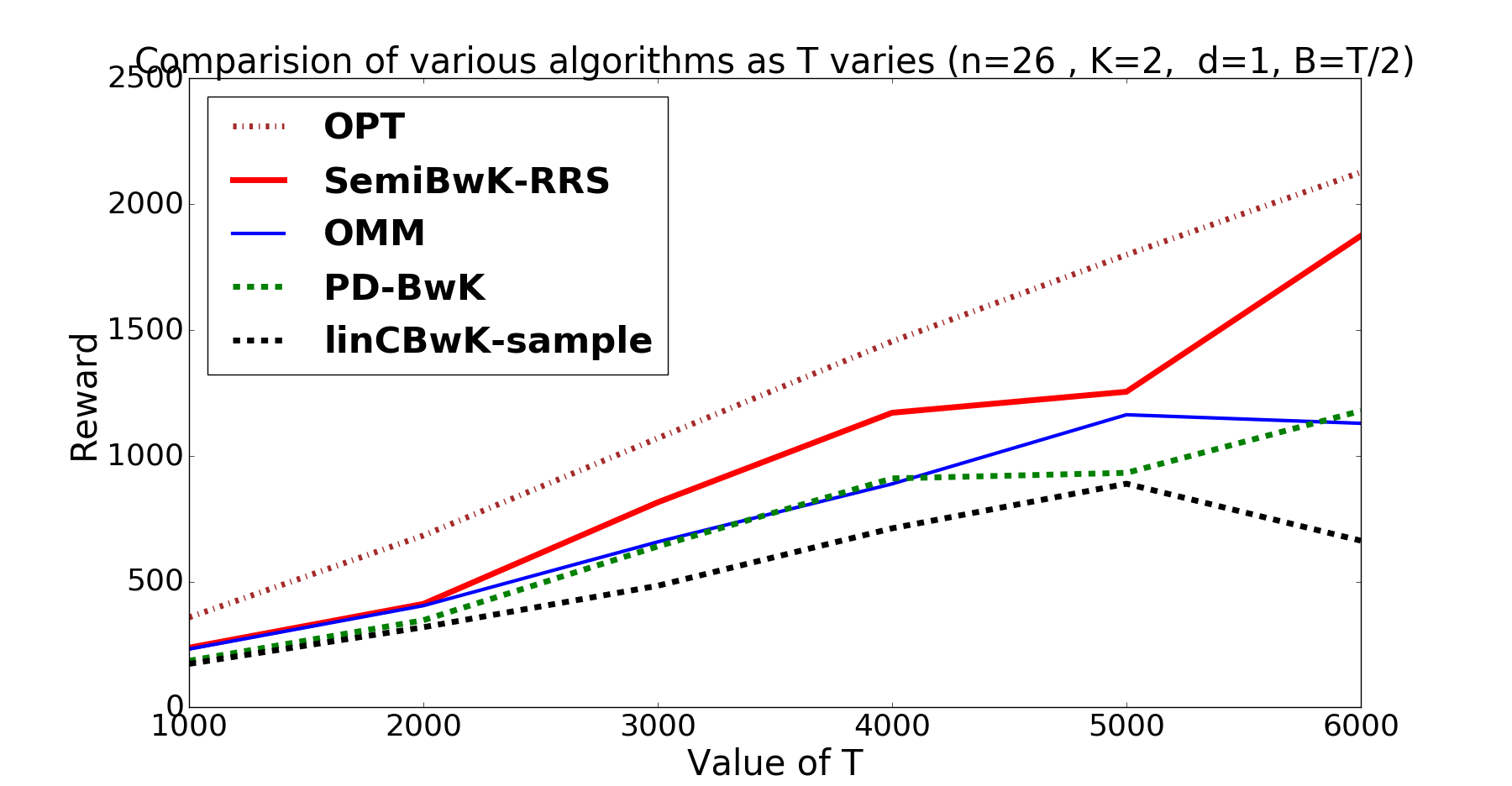}\par
    	\includegraphics[width=\linewidth]{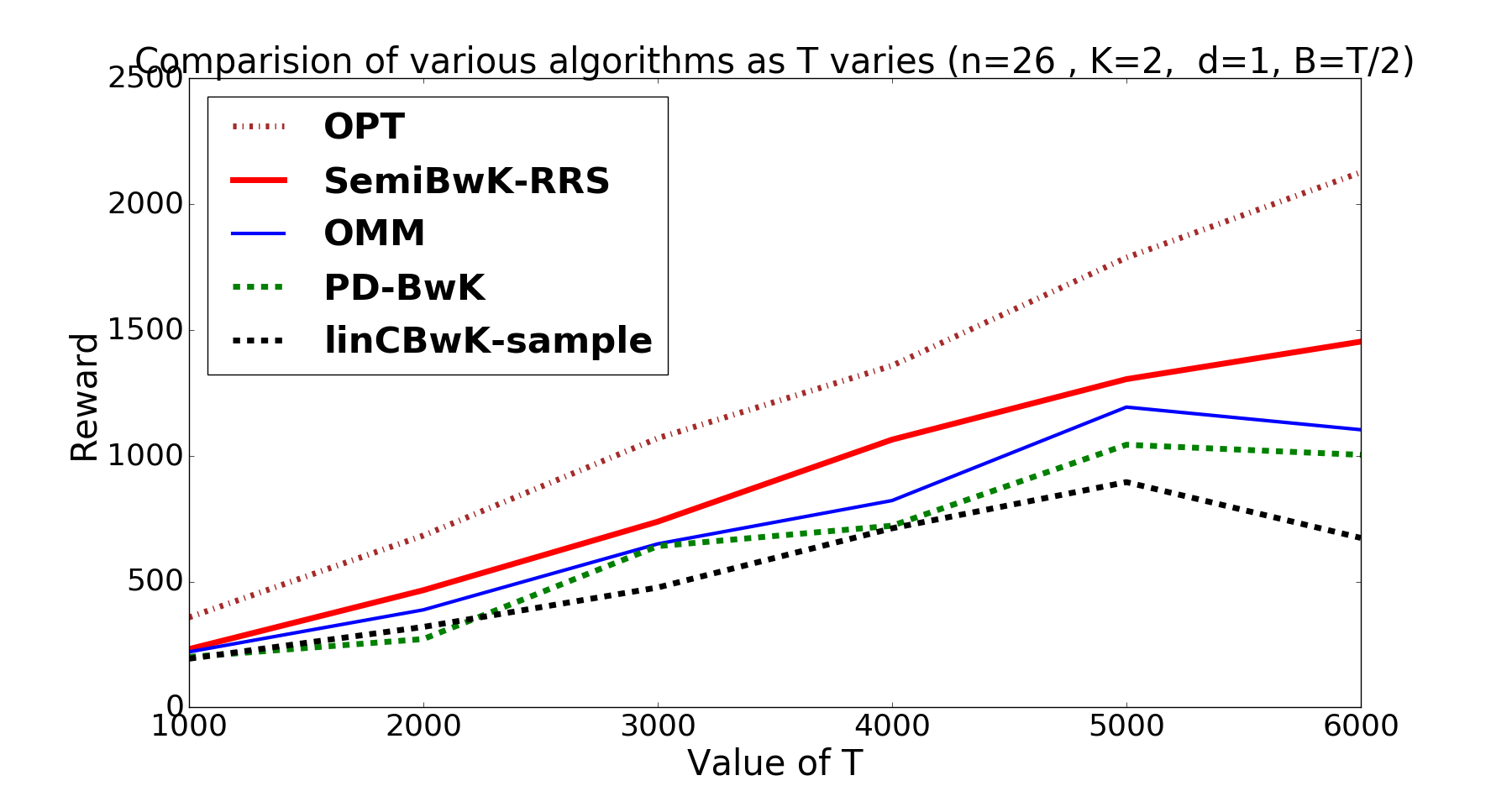}\par
		\end{multicols}
		\caption{Experimental Results for Uniform matroid (left plots) and Partition matroid (right plots) on independent (upper) and correlated (lower) instances for $n=26$.}
		\label{fig:modifiedExamples-large}
	\end{figure*}
	
	\begin{figure*}[h]
		\begin{multicols}{2}
    	\includegraphics[width=\linewidth]{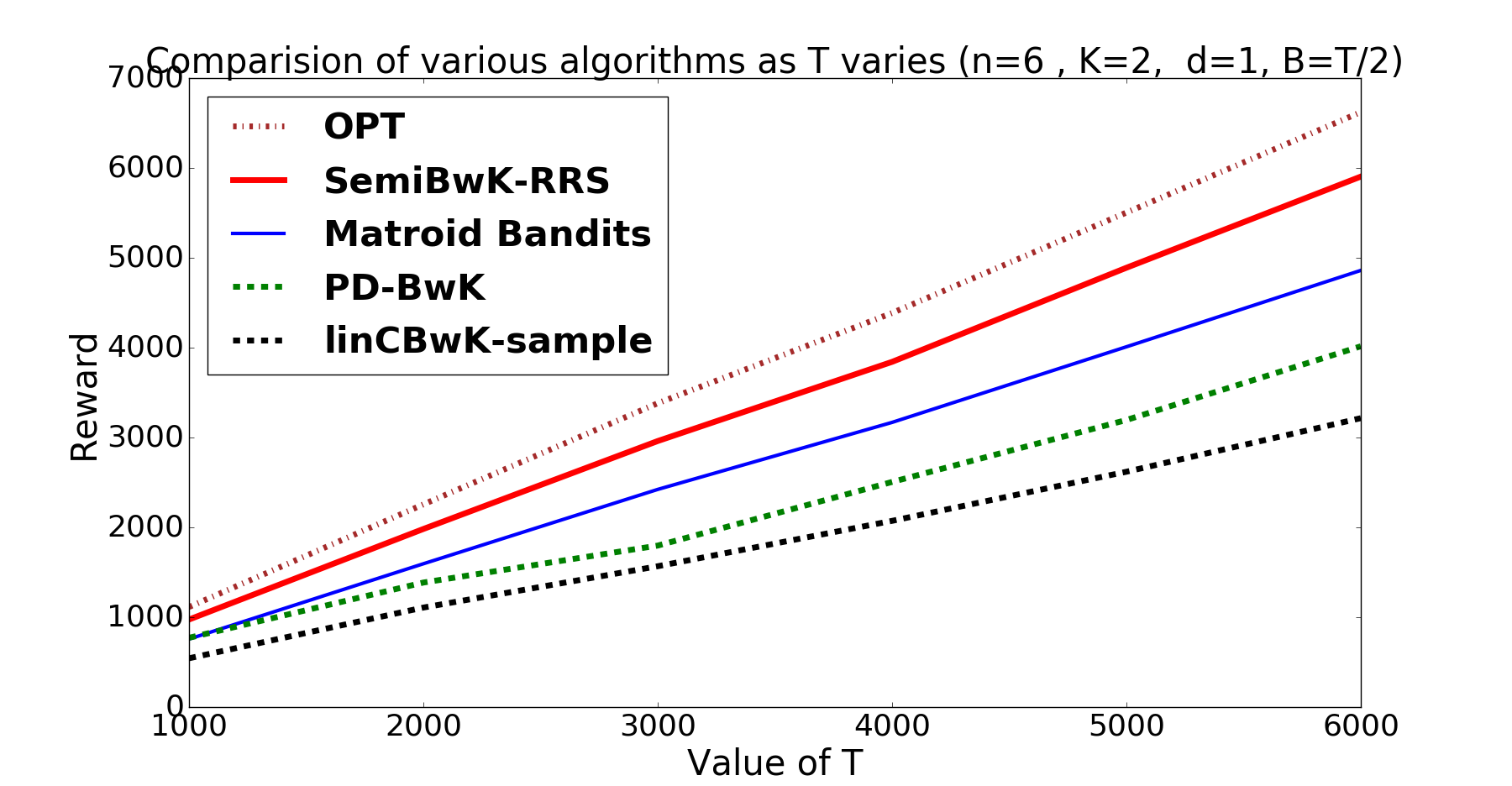}\par
    	\includegraphics[width=\linewidth]{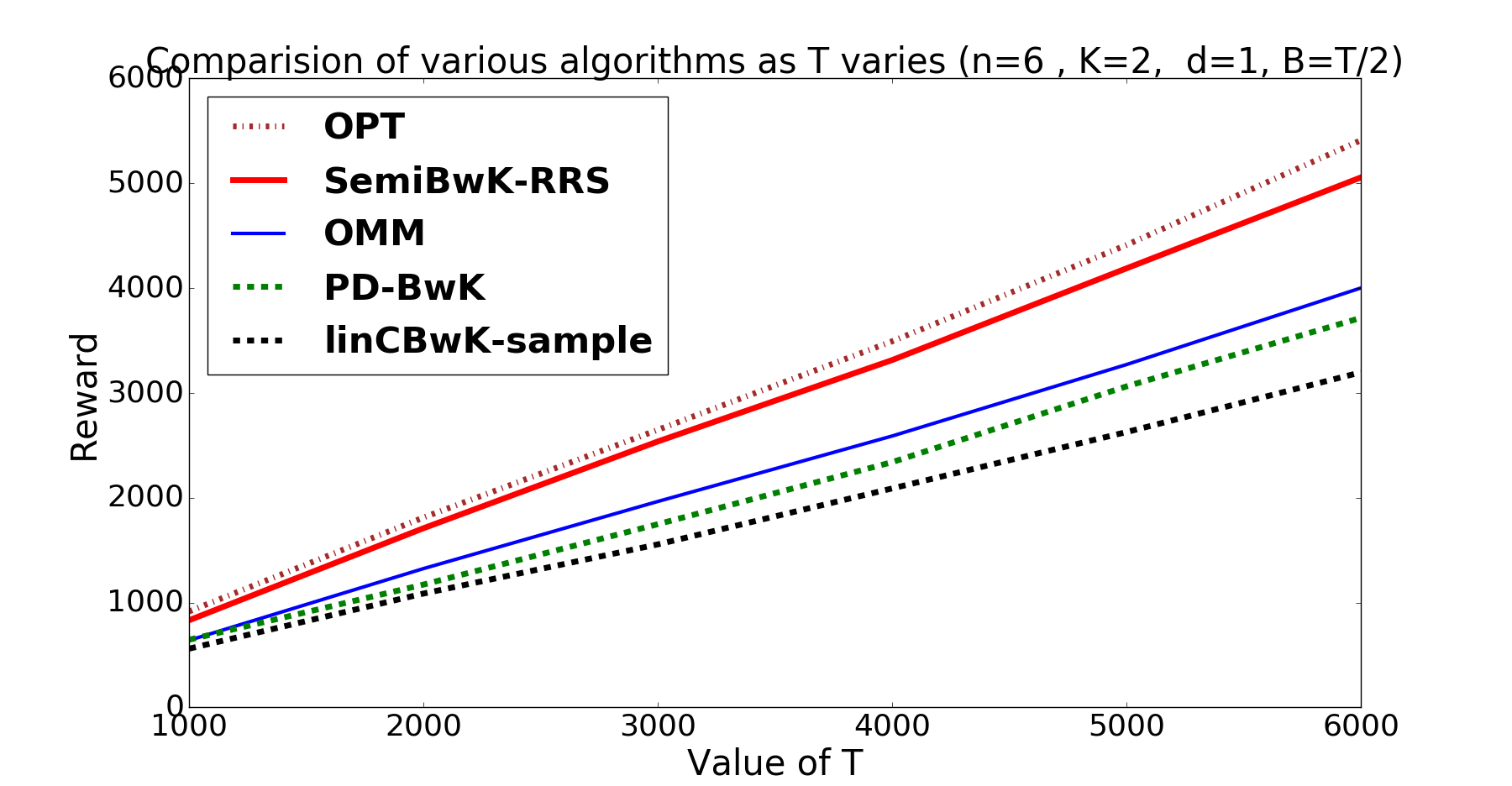}\par
    	\end{multicols}
		\begin{multicols}{2}
    	\includegraphics[width=\linewidth]{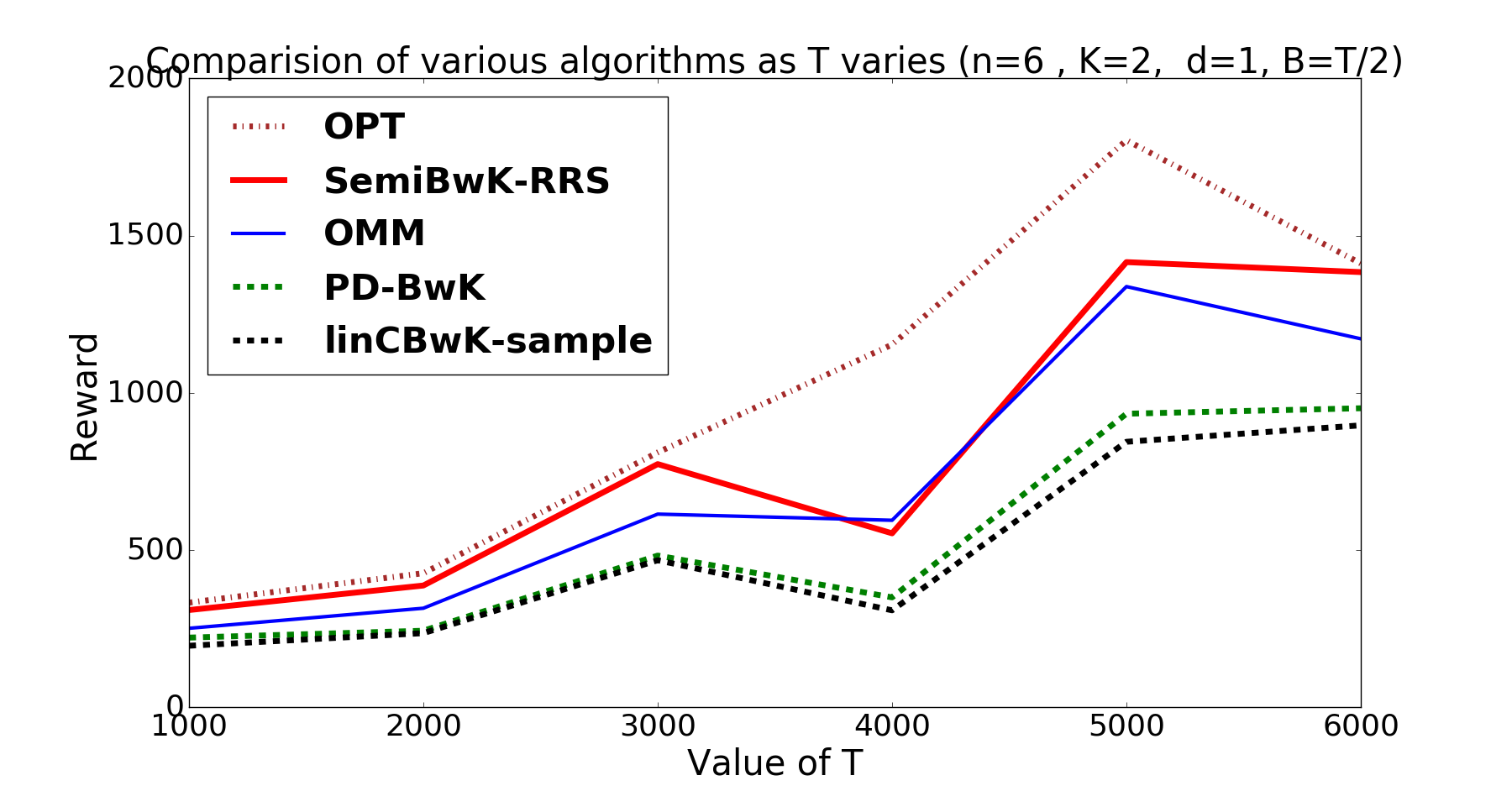}\par
    	\includegraphics[width=\linewidth]{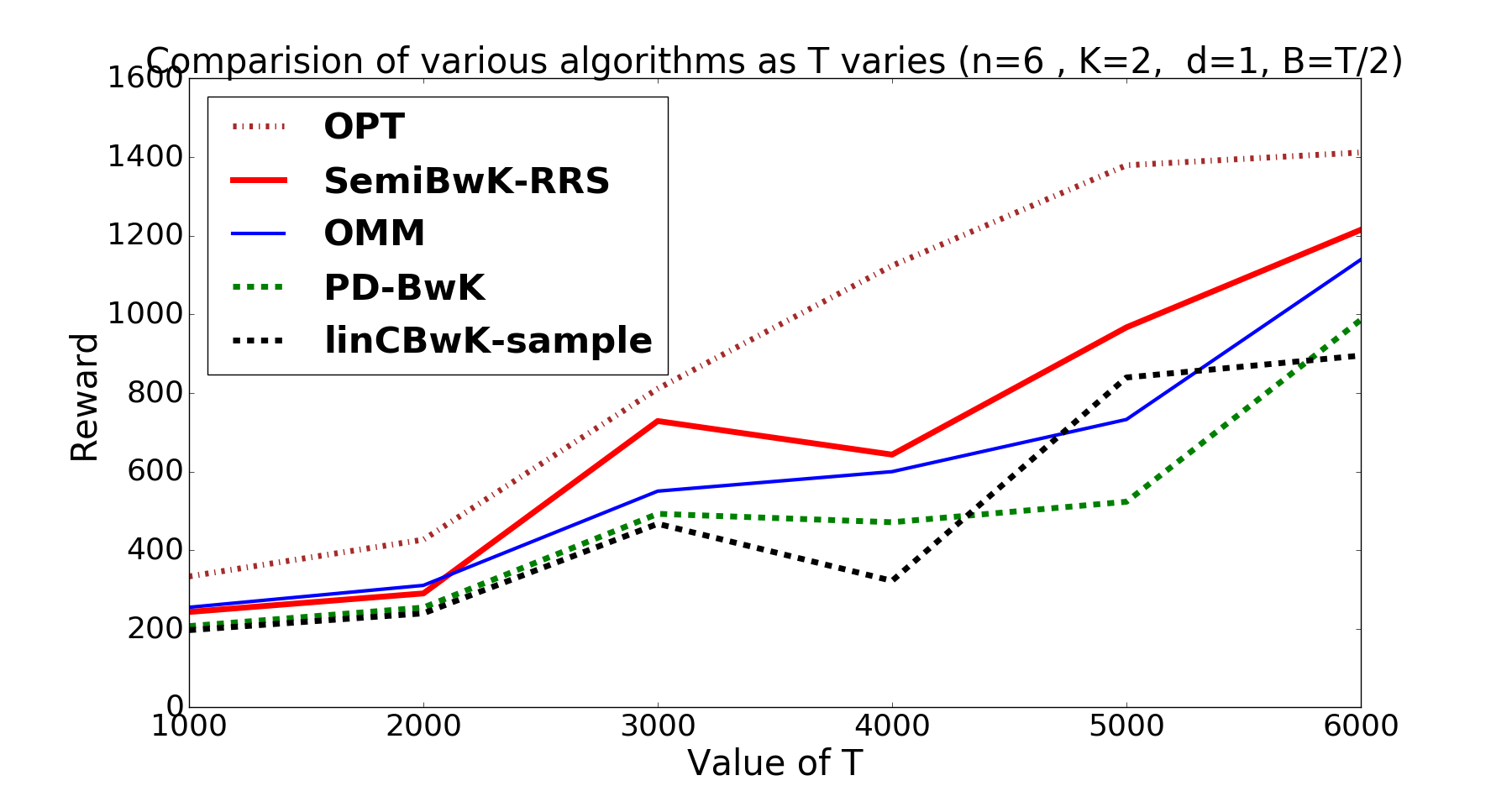}\par
		\end{multicols}
		\caption{Experimental Results for Uniform matroid (left plots) and partition matroid (right plots) on independent (upper) and correlated (lower) instances for $n=6$.}
		\label{fig:modifiedExamples-small}
	\end{figure*}

We ran some experiments on simulated datasets in order to compare our algorithm, \semiBwKUCB, with some prior work that can be used to solve \SemiBwK:
\begin{itemize}
\item the primal-dual algorithm for \BwK from \cite{BwK-focs13},  denoted \pdBwK.

\item an algorithm for combinatorial semi-bandits with a matroid constraint: ``Optimistic Matroid Maximization" from \cite{matroidBandit}, denoted \MB.

\item the linear-contextual \BwK algorithm from \citet{CBwK-nips16}, discussed in the Introduction, denoted \AD.%

\end{itemize}

To speed up the computation in \AD, we used a heuristic modification suggested by the authors in a private communication. This modification did not substantially affect average rewards in our preliminary experiments. We also made a heuristic improvement to our algorithm, setting $\eps =0$ and $\alpha = 5$. We use the same value of $\alpha$ for the \pdBwK algorithm as well.

\xhdr{Problem instances.}
We did not attempt to comprehensively cover the huge variety of problem instances in \SemiBwK. Instead, we focus on two representative applications from Section~\ref{sec:applications}.

The first experiment is on dynamic assortment. We have $n$ products, and for each product $i$ there is an atom $i$ and a resource $i$. The (fixed) price for each product is generated as an independent sample from $U_{[0,1]}$, a uniform distribution on $[0, 1]$. At each round, we sample the buyers's valuation from $U_{[0,1]}$, independently for each product. If the valuation for a given product is greater than the price, one item of this product is sold (and then the reward for atom $i$ is the price, and consumption of resource $i$ is $1$). Else, we set reward for atom $i$ and consumption for resource $i$ to be $0$.

The second experiment is on dynamic pricing with two products. We have $n/2$ allowed prices, uniformly spaced in the $[0,1]$ interval. Recall that atoms correspond to price-product pairs, for the total of $n$ atoms. In each round $t$, the valuation $v_{t,i}$ for each product $i$ is chosen independently from a normal distribution $\mN(v^0_i,1)$ truncated on $[0,1]$. The mean valuation $v^0_i$ is drawn (once for all rounds) from $U_{[0,1]}$. If $v_{t,i}$ is greater than the offered price $p$, one item of this product is sold. Then reward for the corresponding atom $(p,i)$ is the price $p$, and consumption of product $i$ is $1$. If there is no sale for this product, the reward and consumption for each atom $(p,i)$ is set to $0$.

The third experiment is a modification of the dynamic assortment example, in which we ensure that even “non-action” (e.g., no sale) exhausts resources other than time. As in dynamic assortment, we have $n$ products, and for each product $i$ there is an atom $i$ and a resource $i$. The (fixed) price for each product is generated as an independent sample from $U_{[0,1]}$, a uniform distribution on $[0, 1]$. At each round, we sample the buyers's valuation from $U_{[0,1]}$, independently for each product. If the valuation for a given product is greater than the price, one item of this product is sold (and then the reward for atom $i$ is the price, and consumption of resource $i$ is $1$). Else, we do something different from dynamic assortment: we set reward for atom $i$ and consumption for resource $i$ to be the buyer's valuation.

The fourth experiment is a similar modification of the dynamic pricing example. We have $n/2$ allowed prices, uniformly spaced in the $[0,1]$ interval. Recall that atoms correspond to price-product pairs, for the total of $n$ atoms. In each round $t$, the valuation $v_{t,i}$ for each product $i$ is chosen independently from a normal distribution $\mN(v^0_i,1)$ truncated on $[0,1]$. The mean valuation $v^0_i$ is drawn (once for all rounds) from $U_{[0,1]}$. If the valuation for a given product $i$ is greater than the offered price $p$, one item of this product is sold (and then reward for the corresponding atom $(p,i)$ is the price, and consumption of product $i$ is $1$). If there is no sale for this product, we do something different from dynamic pricing. For each atom $(p,i)$, if $p<v_{t,i}$ then the reward for atom $(p,i)$ is drawn independently from $U_{[0,1]}$ and resource consumption is $1$; else, reward is $0$ and consumption is $.3$.
While dynamic assortment is modeled with a uniform matroid, and dynamic pricing is modeled with a partition matroid, we tried both matroids on each family.

\xhdr{Experimental setup and results.} We choose various values of $n$, $B$ and $T$ and run our algorithms on the above two datasets assuming both a uniform matroid constraint and a partition matroid constraint. We choose $n \in \{ 6, 26 \}$, $T \in \{1000, 2000, 3000, 4000, 5000, 6000\}$ and  $B = T/2$. The maximum number of atoms in any action is set to $K=2$. For a given algorithm, dataset and configuration of $n$ and $T$, we simulate each algorithm for $20$ independent runs and take the average. We calculate the total reward obtained by the algorithm at the end of $T$ time-steps.

Figure~\ref{fig:mainPlots} shows results for the first two experiments. Figures~\ref{fig:modifiedExamples-large} and~\ref{fig:modifiedExamples-small} show the results on the third and fourth experiments. Our algorithm achieves the best regret among the competitors. As a benchmark, we included the performance of the fractional allocation in $\LP_{\OPT}$, denoted $\OPT$.

         \begin{figure}[!t]
 		 \centering
    	\includegraphics[width=0.48\textwidth]{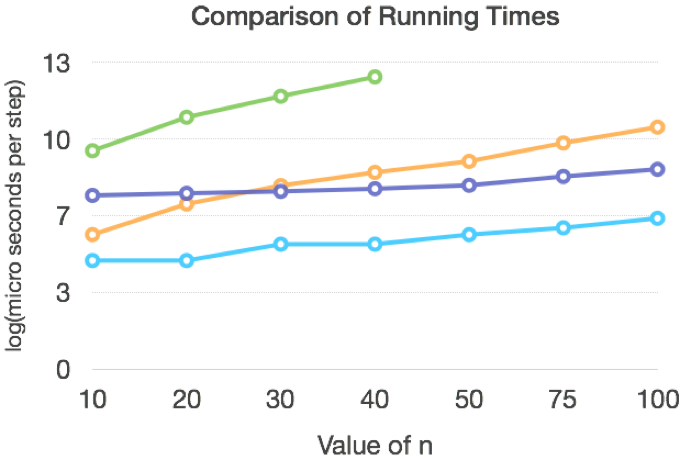}
 		 \caption{Variation of per-step running times as $n$ increases for the various algorithms.}
 		 \label{fig:runTimes}
		\end{figure}

\xhdr{Additional experiment.} \AD and \pdBwK have running times proportional to the number of actions. We ran an additional experiment which compared per-step running times. We first calculate the average running time for every $10$ steps and take the median of $50$ such runs. For both Uniform matroid and Partition matroid, we run the faster RRS due to \cite{gandhi2006dependent}. See Figure~\ref{fig:runTimes} for results.

\xhdr{Details of heuristic implementation of \AD.} We now briefly describe the heuristic we use to simulate the \AD algorithm. Note that even though the per-time-step running time of \AD is reasonable, it takes a significant time when we want to perform simulations for many time-steps. The time-consuming step in the \AD algorithm is the solution to a convex program for computing the optimistic estimates (namely $\vec{\tilde{\mu}_t}$ and $\vec{\tilde{W}_t}$). Hence, the heuristic gives a faster way to obtain this estimate. We sample multiple times from a multi-variate Gaussian with mean $\vec{\hat{\mu}}$ and covariance $\vec{M_t}$ (to obtain estimate $\vec{\tilde{\mu}_t}$) and with mean $\vec{\hat{w}_{tj}}$ and covariance $\vec{M}_t$ (to obtain estimate $\vec{\tilde{w}}_{tj}$ for each resource $j$). We use these samples to compute the objective to choose the action at time-step $t$. For each sample, we compute the best action based on the objective in \AD. We finally choose the action that occurs majority number of times in these samples. The number of samples we choose is set to 30.

\xhdr{Language Details of algorithms.} All algorithms except \AD were implemented in Python. The \AD algorithm was implemented in MATLAB. This difference is crucial when we compare running times since language construct can speed-up or slow down algorithms in practice. However, it is known that \footnote{https://www.mathworks.com/products/matlab/matlab-vs-python.html} for matrix operations commonly encountered in engineering and statistics, MATLAB implementations runs several orders of magnitude faster than the corresponding python implementation. Since \AD is the slowest of the four algorithms, our comparison of running times across languages is justified.

\xhdr{Acknowledgements.} Karthik would like to thank Aravind Srinivasan for some useful discussions.

\bibliographystyle{abbrvnat}
\bibliography{bib-abbrv,refs,bib-slivkins,bib-AGT,bib-random,bib-bandits}

\newpage\pagebreak

\appendix
\section{Proof of Theorem in Preliminaries}
\label{app:prob}

Theorem~\ref{thm:prelim:additive} follows easily from Theorem 3.3 in \citet{IKChernoff}. 

\OMIT{
	\xhdr{Proof of Theorem \ref{prelim:radTheorem}}
	
	For part (a), the proof is very similar to that by \cite{LipschitzMAB-merged-arxiv} and others. We will split it into two cases.
	\begin{itemize}
		\item
		$\mu \geq \frac{\alpha}{6m}$:
		Invoking Theorem \ref{thm:prelim:negChernoff}, Equation (a) with $\epsilon = \frac{1}{2} \sqrt{\frac{\alpha}{6m\mu}}$ we have that with probability
		$1-e^{-\Omega(\alpha)}$ that $|X - \mu| \leq \epsilon \mu$. Note that $\epsilon \mu \leq \frac{\mu}{2}$. Hence,
		
		\begin{align*}
		|X - \mu| &< \frac{1}{2} \sqrt{\frac{\alpha \mu}{6m}} &\\
		&\leq \frac{1}{2} \sqrt{\frac{2 \alpha X}{6m}} & \text{Since $|X - \mu| \leq \mu/2$}\\
		&\leq \rad_{\alpha}(X , m)&
		\end{align*}
		\item
		$\mu \leq \frac{\alpha}{6m}$:
		Invoke Theorem \ref{thm:prelim:negChernoff} Equation (b) with $a = \frac{\alpha}{m}$. Then with probability $1-O(2^{-\alpha})$ we have $X < \frac{\alpha}{m}$. Therefore,
		\[|X - \mu| \leq \frac{\alpha}{m} \leq \rad_{\alpha}(X, m) \]
	\end{itemize}
	
	Finally, similar to \cite{LipschitzMAB-arxiv}, \cite{BabaioffBS13}, \cite{BwK-full} we have that $|X - \mu| \leq \rad_{\alpha}(X , m)$ implies that $\rad_{\alpha}(X , m) \leq 3\rad_{\alpha}(\mu , m)$.\\

	To show part (b) of the Theorem, consider the following. Again we will have two cases.
	
	\begin{itemize}
		\item $\hat{\mu} \geq \frac{\alpha}{6 m}$: Let $\epsilon = \frac{1}{2} \sqrt{\frac{\alpha \hat{\mu}}{6m}}$ in Eq.~\ref{eq:thm2.4}. Note that, we have with probability $1-\exp(-\Omega(\alpha))$, $|X - \hat{\mu}| \leq \epsilon = \frac{1}{2} \sqrt{\frac{\alpha \hat{\mu}}{6m}} \leq \frac{\hat{\mu}}{2}$. Last inequality is because $\hat{\mu} \geq \frac{\alpha}{6 m}$. Hence, $|X - \hat{\mu}| \leq \epsilon = \frac{1}{2} \sqrt{\frac{\alpha \hat{\mu}}{6m}} \leq \rad_{\alpha}(X, m)$. The last inequality follows from similar arguments used in proving part (a) of the theorem.
		\item $\hat{\mu} \leq \frac{\alpha}{6 m}$: Note that this implies $\hat{\mu} - X \leq \frac{\alpha}{6 m} \leq \frac{\alpha}{m} \leq \rad_{\alpha}(X, m)$. Hence, we only need to show that $X - \hat{\mu} \leq \rad_{\alpha}(X, m)$.
		Note that from Theorem 2.1 in \cite{BwK-full}, we have that with probability at least $1-\exp[-\Omega(b)]$,
		
		\[
		X - \hat{\mu} \leq X - \frac{1}{m} \sum_{t=1}^m \mathbb{E}[X_t|X_1, X_2, \ldots, X_{t-1}] \leq \rad_{\alpha}(X, m)
		\]
		
		The first inequality is from the condition that $\hat{\mu} \geq \frac{1}{m} \sum_{t=1}^m \mathbb{E}[X_t|X_1, X_2, \ldots, X_{t-1}]$.  Therefore, we have $|X - \hat{\mu}| \leq \rad_{\alpha}(X, m)$ which implies $\rad_{\alpha}(X, m) \leq 3 \rad_{\alpha}(\hat{\mu}, m)$.
		
		This completes the proof of part (b) of Theorem~\ref{prelim:radTheorem}.
		
	\end{itemize}
} 

\begin{theorem*}[Theorem~\ref{thm:prelim:additive}]
Let $\mX = (X_1, X_2, \ldots, X_m)$ denote a collection of random variables which take values in $[0,1]$, and let
    $X := \frac{1}{m} \sum_{i=1}^m X_i$
be their average. Suppose $\mX$ satisfies \eqref{eq:neg-corr-half}, \ie
$\mathbb{E}[\prod_{i \in S}X_i] \leq (\tfrac12)^{|S|}$
 for every $S \subseteq [m]$. Then for some absolute constant $c$,
	\begin{align}\label{eq:thm:prelim:additive-supp}
	\Pr[X \geq \tfrac12 + \eta] \leq c \cdot e^{-2m\eta^2}
	\qquad (\forall \eta>0).
	\end{align}
\end{theorem*}

\begin{proof}
Fix $\eta>0$. From Theorem 3.3 in \citet{IKChernoff}, we have that
\[ \Pr[X \geq \tfrac12 + \eta] \leq c\cdot  e^{-m \KL{1/2 + \eta}{1/2}},\]
where $\KL{\cdot}{\cdot}$ denotes KL-divergence, so that
\begin{align}\label{eq:thm:prelim:additive-supp-KL}
\KL{\tfrac12 + \eta}{\tfrac12}
    = (\tfrac12 + \eta) \log(1 + 2\eta) +
      (\tfrac12 - \eta) \log(1- 2\eta).
\end{align}
From Pinsker's inequality we have, $\KL{1/2 + \eta}{1/2}  \geq 2 \eta^2$, which implies \eqref{eq:thm:prelim:additive-supp}.
\end{proof}


\section{Matroid constraints}
\label{appx:matroid}

To make this paper more self-contained, we provide more background on matroid constraints and special cases thereof.

\OMIT{ 
Our main result in Theorem~\ref{mainresult} holds as long as $\mF$ forms a matroid. Thus, we define matroid constraints and list several important special cases thereof.
in Appendices~\ref{appx:constraints-matroid} and ~\ref{appx:constraints-examples}, respectively. The extension in Theorem~\ref{thm:naive} holds for linearizable action sets. We provide an example in Appendix~\ref{appx:constraints-independent-set}, namely: we define constraints specified by independent sets of a graph, and show that they are linearizable.} 


Recall that in \SemiBwK, we have a finite ground set whose elements are called ``atoms", and a family $\mF$ of ``feasible subsets" of the ground set which are the actions.
To be consistent with the literature on matroids, the ground set will be denoted $E$. Family $\mF$ of subsets of $E$ is called a matroid if it satisfies the following properties:

\begin{itemize}
	\item
	\textbf{Empty set}: The empty set $\phi$ is present in $\mF$
	\item
	\textbf{Hereditary property}: For two subsets $X, Y \subseteq E$ such that $X \subseteq Y$, we have that $$Y \in \mF \implies X \in \mF$$
	\item
	\textbf{Exchange property}:
	For $X, Y \in \mF$ and $|X| > |Y|$, we have that
	
	$$\exists e \in X \setminus Y: Y \cup \{e\} \in \mF$$
\end{itemize}

Matroids are \emph{linearizable}, \ie the convex hull of $\mF$ forms a polytope in $\R^E$. (Here subsets of $\mF$ are intepreted as binary vectors in $\R^E$.) In other words, there exists a set of linear constraints whose set of feasible \emph{integral} solutions is $\mF$. In fact, the convex hull of $\mF$, a.k.a. the \emph{matroid polytope}, can be represented via the following linear system:
\begin{equation*}
\begin{array}{ll@{}ll}
\tag{LP-Matroid}
\label{pt:matroid}
x(S) \leq \text{rank}(S) & \forall S \subseteq E \\
x_e \in [0,1]^E & \forall e \in E.
\end{array}
\end{equation*}

\noindent Here
    $x(S) := \sum_{e \in S} x_e$,
and
    $\text{rank}(S) = \max \{ |Y| : Y \subseteq S, Y \in \mF\}$
is the ``rank function" for $\mF$.

$\mF$ is indeed the set of all feasible integral solutions of the above system. This is a standard fact in combinatorial optimization, \eg see Theorem 40.2 and its corollaries in \citet{schrijver2002combinatorial}.


We will now describe some well-studied special cases of matroids. That they indeed are special cases of matroids is well-known, we will not present the corresponding proofs here.

In all LPs presented below, we have variables $x_e$ for each arom $e\in E$, and we use shorthand
    $x(S) := \sum_{e \in S} x_e$
for $S\subset E$.

\xhdr{Cardinality constraints.}
Cardinality constraint is defined as follows: a subset $S$ of atoms belongs to $\mF$ if and only if $|S|\leq K$ for some fixed $K$. This is perhaps the simplest constraint that our results are applicable to. In the context of \SemiBwK, each action selects at most $K$ atoms.

The corresponding induced polytope is as follows:
\begin{equation*}
\begin{array}{ll@{}ll}
\tag{LP-Cardinality}
\label{pt:cardinality}
x(E) \leq K&\\
x_e \in [0,1] & \forall e \in E.
\end{array}
\end{equation*}

\xhdr{Partition matroid constraints.}
\label{partitionMatroid}
A generalization of cardinality constraints, called partition matroid constraints, is defined as follows. Suppose we have a collection $B_1 \LDOTS B_k$ of disjoint subsets of $E$, and numbers $d_1 \LDOTS d_k$. A subset $S$ of atoms belongs to $\mF$ if and only if $|S \cap B_i| \leq d_i$ for every $i$. Partition matroid constraints appear in several applications of \SemiBwK such as dynamic pricing, adjusting repeated auctions, and repeated bidding. In these applications, each action selects one price/bid for each offered product. Also, partition matroid constraints can model clusters of mutually exclusive products in dynamic assortment application.

The induced polytope is as follows:

\begin{equation*}
\begin{array}{ll@{}ll}
\tag{LP-PartitionMatroid}
\label{pt:partitionmatroid}
 x(B_i) \leq d_i & \forall i \in [k] \\
	x_e \in [0,1] & \forall e \in E.
\end{array}
\end{equation*}

\OMIT{
	\xhdr{Matroid Intersection}\\
	There are a set of ground atoms $e_1, e_2, \ldots, e_n$. Our atoms correspond to these ground atoms. We are given two matroids $\mathcal{M}_1$ and $\mathcal{M}_2$. Let $r_1$ and $r_2$ be their corresponding rank functions. In each step, the algorithm is allowed to choose any subset of atoms such that, this subset is an independent set in both $\mathcal{M}_1$ and $\mathcal{M}_2$. As before, denote $x(S) := \sum_{e \in S} x_e$. Formally, the polytope can be written as follows:
	
	\begin{equation*}
	\begin{array}{ll@{}ll}
	\tag{LP-MatroidIntersection}
	\label{pt:matroidint}
	x(S) \leq r_1(S) & \forall S \subseteq E \\
	x(S) \leq r_2(S) & \forall S \subseteq E \\
	x_e \geq 0 & \forall e \in E \\
	\vec{x} \in \mathbb{R}^{E} &
	\end{array}
	\end{equation*}
}

\OMIT{
\xhdr{Graph Matching}\\
For semi-bandit constraints as a general graph matching polytope we are given an underlying graph $G(V, E)$. The edges of this graph correspond to the atoms. In each step, the algorithm can choose any valid matching as the subset. A matching is a subset of edges such that no two edges in this subset share a vertex. The polytope defining a graph matching constraint is as follows:

\begin{equation*}
\begin{array}{ll@{}ll}
\tag{LP-Matching}
\label{pt:matching}
\displaystyle\sum_{e \in \delta(u)}x(e) \leq 1 & \forall u \in U \\
\displaystyle\sum_{e \in E(S)}x(e) \leq k & \forall S \subseteq V \text{ of cardinality } |S| = 2k+1\\
1 \geq x(e) \geq 0 & \forall e \in E \\
\end{array}
\end{equation*}

In the above polytope, $E(S)$ refers to the edges present in the sub-graph induced by the vertices $S$.
}

\xhdr{Spanning tree constraints.}
Spanning tree constraints describe spanning trees in a given undirected graph $G = (V, E)$, where the atoms correspond to edges in the graph. A spanning tree in $G$ is a subset $E'\subset E$ of edges such that $(V,E')$ is a tree. Action set $\mF$ consists of all spanning trees of $G$.

The induced polytope is as follows:

\begin{equation*}
\begin{array}{ll@{}ll}
\tag{LP-SpanningTree}
\label{pt:spanning}
x(E_S) \leq |S|-1 & \forall S \subseteq V \\
x(E_V) = |V|-1 &\\
x_e \in [0,1] & \forall e \in E.
\end{array}
\end{equation*}

Here, $E_S$ denotes the edge set in subgraph induced by node set $S\subset V$.

\OMIT{
\xhdr{Path constraints.}
Path constraints describe paths in a given graph $G = (V, E)$, where the atoms correspond to edges in the graph. Path constraints are commonly used in conjunction with the \emph{online routing problem} (\citep{Bobby-stoc04} and much follow-up work), where each action is a path in the network.

Formally, we are given an undirected graph  $G=(V,E)$ and a source sink pair $(s, t)$ in this graph. Action set $\mF$ consists of all $s$-$t$ paths in $G$.

We index edges as $e=(i,j)$, where $i,j\in V$. The corresponding induced polytope is as follows:

\begin{equation*}
\begin{array}{ll@{}ll}
\tag{LP-Path}
\label{pt:path}
\sum_{j\in V}x_{(i, j)} - x_{(j, i)} = 0 & \forall i \in V \setminus \{s, t\} \\
\sum_{j\in V}x_{(i, j)} - x_{(j, i)} = 1 & i = s \\
\sum_{j\in V}x_{(i, j)} - x_{(j, i)} = -1 & i = t \\
x_e \in [0,1] & \forall e \in E.
\end{array}
\end{equation*}
}

\OMIT{ 
\subsection{Independent set constraints}
\label{appx:constraints-independent-set}

Independent set constraints are specified as follows. Suppose $V$ is the set of atoms, and $G=(V,E)$ is an undirected graph. Action set $\mF$ consists of all independent sets in $G$, \ie all subsets $S\subset V$ such that $(u,v)\not\in E$, for any nodes $u,v\in S$. Independent set constraints arise in the dynamic assortment application with mutually exclusive products.

Independent set constraints are linearizable. The corresponding induced polytope is as follows:
\begin{equation*}
\begin{array}{ll@{}ll}
\tag{LP-IS}
\label{pt:path}
x_u + x_v \leq 1 & \forall (u, v) \in E \\
x_e \in [0,1] &  \forall e \in E.
\end{array}
\end{equation*}
} 

\end{document}